\newcommand{\zerodisplayskips}{%
  \setlength{\abovedisplayskip}{0pt}%
  \setlength{\belowdisplayskip}{0pt}%
  \setlength{\abovedisplayshortskip}{0pt}%
  \setlength{\belowdisplayshortskip}{0pt}}
\appto{\normalsize}{\zerodisplayskips}
\appto{\small}{\zerodisplayskips}
\appto{\footnotesize}{\zerodisplayskips}
\newtheorem{remark}{Remark}
\newtheorem{definition}{Definition}
\newtheorem{lemma*}{Lemma}
\newtheorem{theorem*}{Theorem}
\newtheorem{theorem}{Theorem}
\newtheorem{lemma}{Lemma}
\newtheorem{corollary}{Corollary}
\newif\ifboldnumber
\NewDocumentCommand{\RN}{m}
 {
  \textup{ \int_to_Roman:n { #1 } }
 }
\title{On the Algorithmic Stability and Generalization of Adaptive Optimization Methods}
\author{%
  Han Nguyen \\ 
  Department of Mathematical Sciences\\
  Carnegie Mellon University\\
  \texttt{hann1@andrew.cmu.edu} \\
   \And
   Hai Pham \\
   Language Technologies Institute \\
    Carnegie Mellon University\\
   \texttt{htpham@cs.cmu.edu} \\
   \AND
   {Sashank J. Reddi} \\
   Google Research \\
  New York \\
   \texttt{sashank@google.com} \\
   \And
   {Barnab\'{a}s P\'{o}czos} \\
   Machine Learning Departmant \\
    Carnegie Mellon University\\
   \texttt{bapoczos@cs.cmu.edu} \\



}
\begin{document}

\maketitle
\begin{abstract}
Despite their popularity in deep learning and machine learning in general, the theoretical properties of adaptive optimizers such as Adagrad, RMSProp, Adam or AdamW are not yet fully understood.
In this paper, we develop a novel framework to study the stability and generalization of these optimization methods. Based on this framework, we show provable guarantees about such properties that depend heavily on a single parameter $\beta_2$. Our empirical experiments support our claims and provide practical insights into the stability and generalization properties of adaptive optimization methods. 
\end{abstract}

\vspace{-3mm}
\section{Introduction}
\vspace{-2mm}
Recent years have witnessed a surge of interest in adaptive optimization methods for deep learning settings.
For instance, Adam~\citep{kingma2014adam} --- despite its drawbacks in theory~\citep{reddi2019convergence} --- is often used in practice. Other popular choices of adaptive methods include Adagrad~\citep{duchi2011adaptive}, 
RMSProp~\citep{rmsprop2012} and AdamW~\citep{loshchilov2019decoupled}. All of those methods are now the standard choices in deep learning communities. 
However, such adaptive methods are not always the winner, because people still find the traditional methods of Gradient Descent (GD) and Stochastic Gradient Descent (SGD) more superior in certain cases,  both in terms of optimization and generalization~\citep{reddi2019convergence,wilson2017marginal}.  
Despite their different yet prevalent usages, the choices of which optimizer to use are not entirely clear and often come from practice or user experiences. Likewise, there has been no known theoretical foundation that thoroughly explains the reasons as to why we use a particular optimizer in this case, but a different one in another case. In this work, we build such a theoretical framework that offers reasons concerning that important question.

Retrospectively, the development of adaptive optimization methods could be dated back to the work of \cite{mcmahan2010adaptive} and \cite{duchi2011adaptive} in the online learning setting.  Their methods work well in sparse settings~\citep{duchi2013estimation}. But the adaptive learning rate often decays rapidly for dense gradients, thus slowing down the convergence speed of their methods significantly. To tackle this issue, some approaches such as Adadelta~\citep{zeiler2012adadelta}, RMSProp~\citep{rmsprop2012}, and Adam~\citep{kingma2014adam} proposed to use an exponential moving average of past squared gradients. In particular, Adam has become an increasingly popular optimizer in deep learning due to its effectiveness in the early training stage~\citep{tong2019calibrating}. However, few facts are known about these adaptive methods in terms of optimization and generalization.

Generally, when it comes to a learning problem, there are two most important metrics that people care about, which are directly related to the optimizers being used. 
One is the convergence property such as convergence guarantees and convergence rate, which offer insights about the fundamental quality of the optimizer in terms of reliability and run-time complexity~\cite{reddi2019convergence}. The other metric is generalization, which evaluates how well the optimizer does on unseen test data comparing to its performance on the train data ~\citep{morcos2018importance}. Oftentimes there is a tradeoff between those two metrics: one can not usually perform well on train data (i.e. convergence) as well as on test data (i.e. generalization)~\citep{NIPS2007_0d3180d6}. Specifically, the generalization error is usually large when the training loss is small and vice versa. Importantly, the generalization error is upper bounded by the stability of the optimizer~\citep{bousquet2002stability}. Roughly speaking, stability measure the sensitivity of an optimizer with respect to the change of train data. This is an important concept because we do not want to use an optimizer that finds very different solutions when we just slightly change the train data. In this paper, we study the adaptive optimization methods through the lens of stability and draw some conclusions about their generalization as a consequence. 

\textbf{Our contributions.} First, we provide a novel analysis for the stability of adaptive optimization methods which is different from the previous analysis for the stability of SGD and SGD with momentum by \citep{hardt2016train, ramezanikebrya2018stability}. 
Second, we show that the stability of the adaptive optimization methods depends on the parameter $\beta_2$. Specifically, the bigger the $\beta_2$, the more stable the algorithm is. Our experiments confirm this theory.
Third, we show that when $\beta_2$ goes to 1 at the rate $1-\alpha/t$, the stability grows as $O(T^{r}), r < 1$ where $T$ is  number of iterations. Our experiments reflect this rate. 
Finally, we show that weight decay helps improve the stability and generalization of Adam. Our experimental results also validate this theory.
\vspace*{-\baselineskip}
\section{Related Work}
\vspace*{-\baselineskip}

%
\textbf{Convergence.} The convergence of adaptive  optimization methods has been studied under various conditions. 
Adaptive learning rate in Adagrad was shown to improve convergence in sparse, convex setting~\citep{duchi2011adaptive,mcmahan2010adaptive} and in non-convex setting~\citep{ward2019adagrad}.
In convex setting, \cite{reddi2019convergence} proposed AMSGrad to fix a convergence issue in Adam in certain online and stochastic learning cases due to its constant learning rate, and hence stimulated following work trying to improve Adam. 
\cite{luo2019adaptive} proposed Adabound that used a clipping technique and gradually forced adaptive learning rate to converge to a predefined value. 
\cite{tong2019calibrating} proposed SAdam and SAMSGrad, which applied softmax to each coordinate of the adaptive learning rates to keep them low variance. 
In non-convex settings, the convergence of Adam 
has been studied by several works. \cite{zaheer2018adaptive} showed that Adam converged
when increasing the batch size. 
\cite{de2018convergence} then showed that under some mild assumptions, the deterministic Adam converged. 
Additionally, \cite{zhou2020convergence,chen2020closing} proved the convergence of generalize versions of Adam. 
Recently, some works have proven that AMSGrad converged in the weakly convex setting \citep{nazari2020adaptive, alacaoglu2020convergence}. 
While all of them showed that Adam converged at the rate of $O(\frac{1}{\sqrt{T}})$, none of them answered the question as to why Adam-like algorithms have any benefits over SGD in terms of optimization, unlike in our work. 

\textbf{Stability.} Algorithmic stability is a fundamental concept in learning theory that dates back to the pioneering work by \cite{Rogers1978AFS}, who showed that the expected sensitivity of a classification algorithm (kNN in particular) to changes of individual examples could be used to obtain a variance bound of a leave-one-out estimator. Later, \cite{bousquet2002stability} introduced the concept of uniform stability, from which the empirical risk minimization (EMR) was uniformly stable if the objective function was strongly convex. This concept was further extended to study randomized algorithms such as bagging and boosting \citep{JMLR:v6:elisseeff05a}. \cite{ShalevShwartz2010LearnabilitySA} then proposed weaker on-average stability and used it to study unregularized learning algorithms \citep{gonen2017fast}. Very recently, almost optimal high-probability bounds were established for uniformly stable algorithms by developing elegant concentration inequalities for weakly-dependent random variables \citep{pmlr-v65-maurer17a, feldman2019high, bousquet2020sharper}. Other notion of stability were also proposed such as  the uniform argument stability \citep{liu2017algorithmic}, hypothesis stability \citep{bousquet2002stability} and hypothesis set stability \citep{foster2020hypothesis}. The deep connection between algorithmic stability and learnability was identified \citep{ShalevShwartz2010LearnabilitySA,Rakhlin05stabilityresults}. Our work is built upon the foundation of those ideas.

\textbf{Generalization.} 
In a seminal paper, \cite{hardt2016train} used uniform stability to derive generalization bound for SGD in the strongly convex, convex, and non-convex settings. This stability analysis was further refined by~\cite{zhou2019generalization} by using the on-average variance and was more recently extended to the non-smooth setting by~\cite{lei2020finegrained}. In another work, \cite{kuzborskij2018datadependent} developed a bound for SGD concerning data-dependent stability, a nice property that showed how initialization would affect generalization.  
Furthermore, \cite{chen2018stability} developed a tradeoff lower bound between stability and convergence of iterative optimization algorithms. They also proved stability bound for momentum optimizers such as Nesterov acceleration and a heavy-ball method with a fixed momentum when the objective function is quadratic. \cite{ramezanikebrya2018stability, NIPS2017_ad71c82b} derived the stability bound for heavy-ball method for general convex and non-convex functions. In contrast to SGD and SGD with momentum, there have been no existing results in the literature on the stability and generalization for adaptive optimization methods, which are addressed by us in this work. 

Other approaches
were also developed for characterizing the generalization error as well as the estimation error, which are orthogonal to our work. Some of them were based on the information-theoretic approach \citep{pmlr-v51-russo16,NIPS2017_ad71c82b, negrea2020informationtheoretic, jose2021informationtheoretic}, the algorithm robustness framework \citep{xu2010robustness, zahavy2017ensemble}, large-margin theory \citep{bartlett2017spectrallynormalized}, and the classical VC theory \citep{788640}. 

\vspace{-2mm}
\section{Preliminaries}
\vspace{-2mm}
In order to establish our main results, we first formulate related fundamental concepts concerning generalization, algorithmic stability, and adaptive optimization methods.
\vspace{-2mm}
\subsection{Excess risk decomposition} \label{sec:decomposition}
\vspace{-2mm}
In this section, we briefly review fundamental concepts of empirical risk minimization and excess risk decomposition. We consider the standard setting in supervised learning problems. In this setting, we are given a sample $S = \{z_1,\ldots,z_n\}$ of size $n$ where  each data point $z_i$ lies in some space $\mathcal{Z}$ and is drawn i.i.d according to an unknown distribution $\mathbb{P}\in\mathcal{P}$. Given a loss function $f(x;z)$, we want  to find some $x\in \Omega\subset \mathbb{R}^d$ that minimize the expected loss $ F(x) = \mathbb{E}_{z \sim \mathbb{P}}f(x;z) = \int_{\mathcal{Z}}f(x,z)d\mathbb{P}(z).    $

Since the distribution $\mathbb{P}$ is unknown, we instead minimize the empirical loss $F_S(x) = \frac{1}{n}\sum_{i=1}^nf(x;z_i)$.
We denote by $x_S$ an estimator computed from sample $S$. The statistical question is how to bound the excess risk in terms of the difference between the population risk evaluated at $x_S$ and the true minimal risk over the entire parameter space $\Omega$,
  $\mathcal{R}(x_S) = F(x_S) - \inf_{x\in\Omega}F(x)$.  
We have the following lemma about expected risk decomposition. 

\begin{lemma}\label{lem:decomposition}
Let  $x^*_S$ be an empirical risk minimizer. We then have that 
$$
 \mathbb{E}_S[\mathcal{R}(x_S)]\nonumber 
\le  
{\mathcal{E}_{gen}}
+ 
{\mathcal{E}_{opt}} = \mathbb{E}_S[F(x_S) - F_{S}(x_S)] + \mathbb{E}_S[F_S(x_S) - F_S(x_S^*)],
$$
where $\mathcal{E}_{gen}$ and $\mathcal{E}_{opt}$ are the expected generalization error and optimization error respectively.
\end{lemma}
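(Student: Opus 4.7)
The plan is to carry out the classical four-term add-and-subtract decomposition of the excess risk and then observe that two of the four resulting terms vanish or are nonpositive in expectation. Let $x^\star \in \arg\min_{x \in \Omega} F(x)$ denote a population risk minimizer (if the infimum is not attained, I would take an $\varepsilon$-approximate minimizer and let $\varepsilon \downarrow 0$ at the end; none of the subsequent steps change). The key identity is the tautology
\begin{align*}
F(x_S) - F(x^\star)
&= \bigl[F(x_S) - F_S(x_S)\bigr] + \bigl[F_S(x_S) - F_S(x_S^*)\bigr] \\
&\quad + \bigl[F_S(x_S^*) - F_S(x^\star)\bigr] + \bigl[F_S(x^\star) - F(x^\star)\bigr],
\end{align*}
obtained by inserting $\pm F_S(x_S)$, $\pm F_S(x_S^*)$, and $\pm F_S(x^\star)$.

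Next I would take $\mathbb{E}_S$ of both sides and identify each bracket. The first bracket is $\mathcal{E}_{gen}$ and the second is $\mathcal{E}_{opt}$ by definition. For the third bracket, the defining property of the empirical risk minimizer gives $F_S(x_S^*) \le F_S(x)$ for every $x \in \Omega$, in particular for $x = x^\star$, so this term is pointwise nonpositive and hence nonpositive in expectation. For the fourth bracket, $x^\star$ is a deterministic element of $\Omega$ that does not depend on the random sample $S$, so by linearity and the i.i.d.\ assumption $z_i \sim \mathbb{P}$ we get $\mathbb{E}_S[F_S(x^\star)] = \mathbb{E}_S\!\left[\tfrac{1}{n}\sum_{i=1}^n f(x^\star; z_i)\right] = \mathbb{E}_{z \sim \mathbb{P}}[f(x^\star; z)] = F(x^\star)$, so this term contributes exactly $0$. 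Combining yields $\mathbb{E}_S[\mathcal{R}(x_S)] \le \mathcal{E}_{gen} + \mathcal{E}_{opt}$, as claimed.

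There is essentially no hard step: the whole argument is bookkeeping around a telescoping identity. The only minor subtlety is the possible non-attainment of $\inf_{x \in \Omega} F(x)$, which is handled by the $\varepsilon$-approximation already mentioned, and the fact that one should verify $f(x^\star;\cdot)$ is integrable under $\mathbb{P}$ so that $F(x^\star)$ and $\mathbb{E}_S[F_S(x^\star)]$ are well-defined and equal. Importantly, no smoothness, convexity, or boundedness assumptions on $f$ are used here, which is why this decomposition can cleanly serve as the starting point for the subsequent adaptive-optimizer stability analysis, where all of the real work goes into bounding $\mathcal{E}_{gen}$ via algorithmic stability.
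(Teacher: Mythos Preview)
Your proof is correct and follows essentially the same approach as the paper: the paper first writes a three-term decomposition $\mathcal{R}(x_S)=T_1+T_2+T_3$ with $T_3=F_S(x^*)-F(x^*)$ having zero expectation, and then splits $T_2$ by inserting $\pm F_S(x_S^*)$ to isolate the nonpositive ERM term, which is exactly your four-term telescoping identity rearranged. Your extra care about non-attainment of the population minimizer and integrability is a welcome refinement the paper omits.
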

\vspace*{-2mm}
For proof, please see Appendix~\ref{proof:decomposition}. 
Similar derivations to this result can also be found in \cite{Bousquet2004, NIPS2007_0d3180d6, chen2018stability}.

\vspace{-2mm}
\subsection{Algorithmic stability}
\vspace{-2mm}
It turns out that the expected generalization error can be controlled by various notions of algorithmic stability.
For a thorough review, please refer to \cite{bousquet2002stability, ShalevShwartz2010LearnabilitySA}.  For the purpose of this paper, we are only interested in the notion of uniform  stability  introduced by~\cite{bousquet2002stability}.
\begin{definition}
An algorithm, which output a model $x_S$ for sample $S$, is $\tau-uniformly\ stable$ if for all $k\in\{1,\ldots,n\}$, for all data sample pair $S = \{z_1,\ldots,z_k,\ldots,z_n\}$,  and $S' = \{z_1,\ldots,z'_k,\ldots,z_n\}$, where  $z_i$ and $z'_k$ are i.i.d sampled from $\mathbb{P}$, we have $sup_{z\in\mathcal{Z}}|f(x_{S};z) - f(x_{S'};z)|\le \tau. $
\label{definition1}
\end{definition}
This definition implies that when we randomly replace one arbitrary sample from the training data with another i.i.d one, the deviation of the loss between two output models is uniformly within some $\tau$. 
Moreover, the smaller the $\tau$ is, the more stable the algorithm is. One important property of uniform stability is that it implies generalization, as formulated in the following theorem. 
\begin{theorem} \label{theorem:generalization}
If an algorithm that outputs a model $x_S$ for sample $S$ is $\tau$-uniformly-stable, then its expected generalization error is bounded as follows,
\begin{align}
|\mathbb{E}_S[F(x_S) - F_S(x_S)]|  \le \tau.
\end{align}
\end{theorem}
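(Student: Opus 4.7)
The plan is to use the classical ``replace-one-sample'' symmetrization (renaming) argument that exploits the i.i.d.\ structure of $S$. The key idea is to rewrite the generalization gap, term by term, as the difference between the loss of $x_S$ at a point and the loss of a neighboring estimator $x_{S^{(k)}}$ at the same point, so that the uniform stability assumption closes the bound immediately.

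Concretely, I would first unfold both sides: by definition $\mathbb{E}_S[F(x_S)] = \mathbb{E}_S \mathbb{E}_z[f(x_S;z)]$ where $z \sim \mathbb{P}$ is independent of $S$, and $\mathbb{E}_S[F_S(x_S)] = \frac{1}{n}\sum_{k=1}^n \mathbb{E}_S[f(x_S; z_k)]$. Fix an index $k \in \{1,\ldots,n\}$, let $z_k'$ be an i.i.d.\ copy of $z_k$, and set $S^{(k)} := \{z_1,\ldots,z_{k-1}, z_k', z_{k+1},\ldots,z_n\}$. Because $z_k$ and $z_k'$ are exchangeable, relabeling shows that the joint law of $(S, z_k')$ coincides with that of $(S^{(k)}, z_k)$; therefore
$$\mathbb{E}_S[F(x_S)] \;=\; \mathbb{E}_{S,z_k'}\bigl[f(x_S; z_k')\bigr] \;=\; \mathbb{E}_{S,z_k'}\bigl[f(x_{S^{(k)}}; z_k)\bigr].$$
Combining this identity with the trivial equality $\mathbb{E}_S[f(x_S; z_k)] = \mathbb{E}_{S,z_k'}[f(x_S; z_k)]$, the $k$-th contribution to the generalization gap becomes $\mathbb{E}_{S,z_k'}\bigl[f(x_{S^{(k)}}; z_k) - f(x_S; z_k)\bigr]$, i.e.\ a comparison between two estimators trained on samples that differ in exactly the $k$-th coordinate, evaluated at a common point.

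At this stage Definition~\ref{definition1} applies directly to the pair $(S, S^{(k)})$ and yields the pointwise bound $|f(x_{S^{(k)}}; z_k) - f(x_S; z_k)| \le \tau$. Pulling an absolute value inside the expectation, averaging over $k = 1,\ldots,n$, and using the triangle inequality gives $|\mathbb{E}_S[F(x_S) - F_S(x_S)]| \le \tau$, which is the claim. The only delicate step is the exchangeability identity; the main thing to be careful about is tracking which sample each $f(x_\cdot; \cdot)$ is being evaluated on after the label swap, but once the joint distribution of $(S, z_k')$ is written out explicitly, the identity is immediate from the i.i.d.\ assumption and no analytic estimates are needed beyond uniform stability itself.
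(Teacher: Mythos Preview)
Your proposal is correct and is precisely the classical replace-one-sample symmetrization argument. The paper does not give its own proof of this theorem but simply refers to \cite{hardt2016train}, where exactly this argument appears, so your approach coincides with the one the paper is invoking.
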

The proof of Theorem \ref{theorem:generalization} can be found in \cite{hardt2016train}.
For the rest of this paper, we focus on finding an upper bound for $\tau$ in Definition \ref{definition1} when using the adaptive optimization methods. 
\begin{remark}
The concept of uniform stability has a strong connection with the sensitivity analysis in optimization. Sensitivity analysis studies  the sensitivity of the optimal value of an optimization problem with respect to perturbations of the problem's constraints. For more detail about sensitivity analysis, please refer to \cite{Bonnans1998OptimizationPW}.
\end{remark}

\vspace{-2mm}
\subsection{Adaptive optimization methods}
\vspace*{-\baselineskip}
\begin{minipage}{0.48\textwidth}
\begin{align}
m_t & = \alpha_t m_{t-1} + (1-\alpha_t)\nabla f(x_{t-1}, z_t) \label{eq:ada1},   \\
v_t &= \beta_t v_{t-1} +(1 - \beta_t)\nabla f(x_{t-1}, z_t)^2\label{eq:ada2}, \\
x_{t} &= x_{t-1} - \eta_t \frac{m_t}{\sqrt{v_t + \epsilon}}\label{eq:ada3},
\end{align}
\end{minipage}
\hfill
\begin{minipage}{0.49\textwidth}
\begin{align}
\dot{m}(t) & = p(t)(\nabla F(x(t)) - m(t)) \label{eq:ode1}, \\
\dot{v}(t) & = q(t)(\nabla F(x(t))^2 - v(t)) \label{eq:ode2}, \\
\dot{x}(t) & = -\frac{m(t)}{\sqrt{v(t)+\epsilon}} \label{eq:ode3},
\end{align}
\end{minipage}

In this section, we review the general formulation the adaptive optimization methods (AOM). Let $(\mathcal{Z}, \mathcal{F}, \mathbb{P})$ be a probability space. Consider the minimization problem $\min_{x\in\Omega} F(x)$, where $F(x)$ is defined in Section~\ref{sec:decomposition}, 
and $f : \Omega\times\mathcal{Z} \rightarrow \mathbb{R}$ is a measurable map. For a fixed $z$, the mapping $x \mapsto f(x,z)$ is supposed to be differentiable and its gradient w.r.t $x$ is $\nabla f(x,z)$. Starting from $(x_0,m_0,v_0)\in \mathbb{R}^d\times\mathbb{R^d}\times\mathbb{R}_+^d$, AOM generates updates at time step $t$ as shown in Equations (\ref{eq:ada1}, \ref{eq:ada2}, \ref{eq:ada3}), 
where $\alpha_t, \beta_t\in (0,1)$ and $\epsilon > 0$ (to prevent zero division). 
When $\alpha_t = \alpha, \beta_t = \beta$, the above formulation becomes Adam algorithm. Notice that the $\alpha$ and $\beta$ in this case corresponding to the $\beta_1$ and $\beta_2$ in  Adam optimizer. When $\alpha_t = 0, \beta_t = 1- \frac{\alpha}{t}$, we obtain Adagrad optimizer.

In the continuous setting, the formulation above can be shown as the noisy Euler's discretization of the ordinary differential equation (ODE) in Equations~(\ref{eq:ode1}, \ref{eq:ode2}, \ref{eq:ode3}), 
where $(x(0), m(0), v(0)) = (x_0, m_0, v_0)$~\citep{barakat2020stochastic}. 
This ODE's formulation helps us overcome the difficulties in the latter analysis. We will look at AOM as a system of two equation with variables $(x, v)$ instead of treating $v$ as a separate adaptive learning rate.
\vspace{-2mm}
\section{Stability of adaptive optimization methods}
\vspace{-2mm}
In this section, we study the stability of the AOM optimizer when $\alpha_t = 0$ for all $t$. We leave the analysis with $\alpha_t>0$ as a future research direction. In this case, we show that the stability of the AOM method depends on the parameter $\beta_t$. Moreover, we  show that the AOM method is uniformly stable when $\beta_t \le 1-\alpha/t$. As a consequence, the Adagrad algorithm is a uniformly stable algorithm.

For latter convenience, let us first set up some notations and assumptions. We denote  
%
    $S = \{z_1,\ldots,z_i,\ldots,z_n\}$ and 
    $S' = \{z_1,\ldots,z_i',\ldots,z_n\}$
to be two training sets which are drawn from the unknown distribution $\mathbb{P}$. All data points in $S$ and $S'$ are the same except the data point at the i-\textit{th} position. Given a fixed model,  we consider training that model by using the AOM on the above two  data sets. We denote

\begin{minipage}{0.48\textwidth}
\begin{align}
     v_{t+1} &= \beta_t v_t + (1-\beta_t)\nabla f(x_t,z_{i_t})^2
    \\
    x_{t+1} &= x_t - \eta_t \frac{\nabla f(x_t,z_{i_t})}{\sqrt{v_{t+1}+\epsilon}} 
\end{align}
\end{minipage}
\hfill
\begin{minipage}{0.49\textwidth}
\begin{align}
    v'_{t+1} &= \beta_t v'_t + (1-\beta_t)\nabla f(x'_t,z'_{i_t})^2\\
    x'_{t+1} &= x'_t - \eta_t \frac{\nabla f(x'_t,z'_{i_t})}{\sqrt{v'_{t+1}+\epsilon}}
\end{align}
\end{minipage}

where $(x_t,v_t)$ and $(x_t',v_t')$ are the outputs of the AOM when running on $S$ and $S'$ respectively.
Specifically, at every time step $t$, we pick an index $i_t$ uniformly random from the set $\{1,\ldots,n\}$ and update the parameters according to the AOM update rules. Notice that we use two different notation $z_{i_t}$ and $z'_{i_t}$ because we pick data from two different training sets $S$ and $S'$. In addition, for a fixed $t_0\in\{1,\ldots,n\}$,  we denote
\begin{align}
    \delta_t = \|x_t - x_t'\|_2&, \quad\Delta_t = \mathbb{E}[\delta_t|\delta_{t_0}=0],\\ 
     \sigma_t = \|v_t - v'_t\|_2&,\quad \Sigma_t = \mathbb{E}[\sigma_t|\delta_{t_0}=0].
\end{align}

The analysis relies on the following set of assumptions:
\begin{enumerate}[nosep,noitemsep,nolistsep,label=(\arabic*)]
    \item $f(.,z)$ is  $L$-Lipschitz and $\mu$-smooth for all $z\in \mathcal{Z}$, i.e.
    \begin{align}
    |f(x,z) - f(y,z)|\le \mu\|x-y\|_2, \text{and} \quad \quad 
    \|\nabla f(x,z) - \nabla f(y,z)\|_2 \le L\|x - y\|_2
    \end{align}
    which imply $\|\nabla f(x,z)\|_2 \le \mu$ for all $z\in \mathcal{Z}$.
    \item There exists $M >0$ such that $f(x,z)\le M$ for all $x,z\in \Omega\times \mathcal{Z}$.
    \item There exists $\lambda_1, \lambda_2 \ge 0$ such that 
\begin{align}
\min_{t}\min_{i}\{v_{t+1}, v'_{t+1}\} \ge \lambda_1, \quad \quad  \quad \quad 
\max_{t}\max_{i}\{v_{t+1}, v'_{t+1}\} \le \lambda_2.
\end{align}
\end{enumerate}
Note that assumption (1) is standard when analyzing the stability of optimization algorithms \citep{bousquet2002stability,hardt2016train}. Assumption (2) is easily achieved if we restrict our domain to a compact set. 
Assume that we run the AOM on $S$ and $S'$ for $T$ steps to get the outputs $x_T$ and $x'_T$. We first have the following lemma which controls the difference of the loss at $x_T$ and $x'_T$. The proof of this lemma can be found in \citep{hardt2016train}. For completeness, we also provide a proof of this lemma in the appendix.
\begin{lemma} \label{lem:stability} 
Assume that assumptions (1) and (2) are satisfied. Let $S$ and $S'$ be two samples of size $n$ differing in only a single example. Denote by $x_T$ and $x'_T$ the output after $T$ steps of   AOM on $S$ and $S'$,  respectively.   Then for all $z\in \mathcal{Z}$ and $t_0\in\{1,\ldots,n\}$, we have that 
\begin{align}
  \mathbb{E}|f(x_T;z) - f(x'_T;z)| &\le 2M\frac{t_0}{n} + \mu\mathbb{E}[\delta_T|\delta_{t_0} = 0] \nonumber = 2M\frac{t_0}{n} + \mu\Delta_T.
\end{align} \label{lem:one}
\end{lemma}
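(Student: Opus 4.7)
The plan is to exploit the coupling between the two trajectories by conditioning on whether they have already separated by step $t_0$. Define the event $E = \{\delta_{t_0} = 0\}$ that the runs on $S$ and $S'$ have stayed identical through time $t_0$, and decompose
\[
\mathbb{E}|f(x_T;z) - f(x'_T;z)| = \mathbb{E}\bigl[|f(x_T;z)-f(x'_T;z)|\,\mathbf{1}_E\bigr] + \mathbb{E}\bigl[|f(x_T;z)-f(x'_T;z)|\,\mathbf{1}_{E^c}\bigr].
\]

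On the event $E$, the Lipschitz hypothesis in Assumption (1) gives $|f(x_T;z)-f(x'_T;z)| \le \mu\|x_T-x'_T\|_2 = \mu\delta_T$, so the first term is at most $\mu\,\mathbb{E}[\delta_T\mathbf{1}_E] \le \mu\,\mathbb{E}[\delta_T\mid \delta_{t_0}=0] = \mu\Delta_T$. For the complementary piece, Assumption (2) yields the pointwise bound $|f(x_T;z)-f(x'_T;z)| \le 2M$, which reduces the remaining task to estimating $P(E^c) = P(\delta_{t_0}>0)$.

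To bound this probability, I would use a direct coupling argument. The two runs share the initialization $(x_0,v_0)$, and each AOM update at step $s$ is a deterministic function of $(x_{s-1},v_{s-1})$ and the single sampled gradient $\nabla f(\cdot, z_{i_s})$. Hence if $i_s \ne i$ (the unique index at which $S$ and $S'$ differ), the samples $z_{i_s}$ and $z'_{i_s}$ coincide and the updates are identical. An induction on $s$ shows $\{\delta_{t_0}>0\} \subseteq \{\exists s \le t_0 : i_s = i\}$, and since each $i_s$ is drawn uniformly from $\{1,\ldots,n\}$, a union bound gives $P(E^c) \le t_0/n$. Plugging the three estimates back into the decomposition produces the stated inequality.

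The computation is essentially mechanical; the only delicate point is justifying the coupling, which must confirm that the second-moment update $v_{t+1}=\beta_t v_t+(1-\beta_t)\nabla f(x_t,z_{i_t})^2$ is memoryless in precisely the sense needed, so that any two histories agreeing on the sequence of sampled indices up to time $t_0$ really do produce identical $(x_s,v_s)$ for $s\le t_0$. Once that observation is in hand, Lipschitzness, boundedness, and the union bound deliver the lemma without further subtlety.
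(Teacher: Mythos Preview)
Your proposal is correct and follows essentially the same route as the paper: condition on the event $\{\delta_{t_0}=0\}$, apply Lipschitzness on that event and the uniform bound $2M$ on its complement, and then control $\mathbb{P}(\delta_{t_0}>0)$ by a union bound over the first $t_0$ index draws. The paper phrases the last step via the random variable $I$ (the first time the differing index is selected) rather than your explicit induction/coupling, but the two arguments are equivalent.
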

\vspace*{-\baselineskip}
The insight of Lemma~\ref{lem:one} is that AOM has to take several steps before it encounters the index $i$ where the two data sets are different. This will be important later in our analysis because our loss function could be non-convex. In real application, $n$ is usually very big so it will take a long time before AOM reach the index $i$ where the two data sets are different. From now on, we will fix $t_0\in\{1,\ldots,n\}$. Given this lemma, it remains  to estimate the quantity $\Delta_T$. Next, we have the following theorem.  
\begin{theorem} \label{theorem2}
Assume that assumptions (1), (2), (3) are satisfied. Suppose we run AOM with step size $\eta_t$ on $S$ and $S'$ respectively, starting from the same initialization. Then for any outputs $x_t$ and $x'_{t}$, we  have that 
\begin{align}
    \Delta_{t+1} \le \Delta_t  &+ \eta_t \left(1-\frac{1}{n}\right)\frac{L}{\sqrt{\lambda_1+\epsilon}}\left[1 + \frac{\mu^2 (1-\beta_t)}{\lambda_1+\epsilon}\right]\Delta_t\nonumber\\
    & + \eta_t \left(1-\frac{1}{n}\right)\frac{\mu\beta_t}{2\sqrt{\lambda_1+\epsilon}(\lambda_1+\epsilon)}\Sigma_t
    + \eta_t\frac{1}{n}\frac{2\mu}{\sqrt{\lambda_1+\epsilon}}.
\end{align}
\end{theorem}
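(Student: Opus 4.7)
The plan is to condition on the random index $i_t$ drawn at step $t$ and split into two complementary cases: (i) $i_t \neq i$, which occurs with probability $1-1/n$ and in which both chains query identical training examples; and (ii) $i_t = i$, occurring with probability $1/n$, in which the two gradient oracles differ. Bounding $\|x_{t+1} - x'_{t+1}\|_2$ in each case, averaging with the appropriate weights, and then taking conditional expectation $\mathbb{E}[\,\cdot\,\mid\delta_{t_0}=0]$ will yield the advertised recursion.

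In the ``same example'' case I would apply the triangle inequality and then add and subtract to split
\[
\frac{\nabla f(x_t,z_{i_t})}{\sqrt{v_{t+1}+\epsilon}} - \frac{\nabla f(x'_t,z_{i_t})}{\sqrt{v'_{t+1}+\epsilon}} \;=\; \frac{\nabla f(x_t,z_{i_t}) - \nabla f(x'_t,z_{i_t})}{\sqrt{v_{t+1}+\epsilon}} \;+\; \nabla f(x'_t,z_{i_t})\odot \Bigl(\tfrac{1}{\sqrt{v_{t+1}+\epsilon}} - \tfrac{1}{\sqrt{v'_{t+1}+\epsilon}}\Bigr).
\]
The first summand is at most $L\delta_t/\sqrt{\lambda_1+\epsilon}$ by $L$-smoothness (assumption 1) together with assumption (3). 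For the second summand I would use the elementary scalar inequality $|1/\sqrt{a+\epsilon} - 1/\sqrt{b+\epsilon}| \le |a-b|/[2(\lambda_1+\epsilon)^{3/2}]$ (valid coordinate-wise for $a,b \ge \lambda_1$), combined with $\|u\odot w\|_2 \le \|u\|_\infty\|w\|_2$ and $\|\nabla f\|_\infty \le \mu$, to obtain the intermediate bound $\mu\|v_{t+1}-v'_{t+1}\|_2/[2(\lambda_1+\epsilon)^{3/2}]$.

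The critical step --- and the main source of the constants appearing in the theorem --- is not to abbreviate this by $\Sigma_{t+1}$ but to \emph{expand} $v_{t+1} - v'_{t+1} = \beta_t(v_t - v'_t) + (1-\beta_t)[\nabla f(x_t,z_{i_t})^2 - \nabla f(x'_t,z_{i_t})^2]$; this is precisely what produces the split into a $\beta_t\Sigma_t$ piece and an additional $(1-\beta_t)\Delta_t$ piece. Bounding the squared-gradient difference via $\|a^2 - b^2\|_2 = \|(a+b)\odot(a-b)\|_2 \le 2\mu L\delta_t$ (using smoothness and $\|\nabla f\|_\infty \le \mu$) gives $\|v_{t+1}-v'_{t+1}\|_2 \le \beta_t\sigma_t + 2\mu L(1-\beta_t)\delta_t$. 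Substituting back, collecting the two $\delta_t$ coefficients, and factoring $L/\sqrt{\lambda_1+\epsilon}$ out exactly reproduces the $[1 + \mu^2(1-\beta_t)/(\lambda_1+\epsilon)]$ bracket in the theorem, while the leftover $\sigma_t$-term gives the $\mu\beta_t/[2\sqrt{\lambda_1+\epsilon}(\lambda_1+\epsilon)]$ coefficient.

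In the ``different example'' case, the triangle inequality with $\|\nabla f\|_2 \le \mu$ and assumption (3) gives $\delta_{t+1} \le \delta_t + 2\eta_t\mu/\sqrt{\lambda_1+\epsilon}$ directly. Weighting the two cases by $1-1/n$ and $1/n$ respectively and taking $\mathbb{E}[\,\cdot\,\mid\delta_{t_0}=0]$ converts $\delta_t\mapsto\Delta_t$ and $\sigma_t\mapsto\Sigma_t$, producing the stated inequality. The main technical obstacle is the bookkeeping around $\|\cdot\|_\infty$ versus $\|\cdot\|_2$ when handling the element-wise operations on the adaptive denominator, and resisting the temptation to collapse $v_{t+1}-v'_{t+1}$ into $\Sigma_{t+1}$ instead of unrolling one step; the rest is triangle-inequality manipulation.
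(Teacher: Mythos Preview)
Your proposal is correct and follows essentially the same route as the paper's proof: the same two-case split on $i_t$, the same add-and-subtract decomposition of the preconditioned gradient, the same coordinate-wise bound on $|1/\sqrt{a+\epsilon}-1/\sqrt{b+\epsilon}|$ (this is the paper's Lemma~\ref{lemma4}), and the same one-step unrolling of $v_{t+1}-v'_{t+1}$ to produce the $\beta_t\sigma_t$ and $(1-\beta_t)\delta_t$ pieces. The only cosmetic difference is that you package the element-wise manipulations via Hadamard products and $\|u\odot w\|_2\le\|u\|_\infty\|w\|_2$, whereas the paper writes out the coordinate sums explicitly; the constants and the logic coincide.
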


For a full proof, please see Appendix \ref{appx:proof_theorem2}.
This theorem gives us a general upper bound of $\Delta_t$. However, the bound also involves the term $\Sigma_t$ which is different from that of SGD \citep{hardt2016train}. This is expected because the formulation of the AOM involves a system of equations. Thus, in order to bound $\Delta_t$, we also have to construct an upper bound for $\Sigma_t$, as described in Theorem~\ref{theorem3}.
\begin{theorem} \label{theorem3}
Assume that assumptions (1), (2), (3) are satisfied. Suppose we run AOM with step size $\eta_t$ on $S$ and $S'$ respectively, starting from the same initialization. Then for any output $x_t$ and $x_t'$ we have that 
\begin{align}
\Sigma_{t+1} \le \beta_t \Sigma_t & +  2(1- \beta_t)\left(1-\frac{1}{n}\right)L\mu\Delta_t + 2\frac{1}{n}(1-\beta_t)\mu^2. \end{align}
\end{theorem}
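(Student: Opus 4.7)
The plan is to unroll the update rules for $v_{t+1}$ and $v'_{t+1}$, subtract them, take norms, and then split the expectation over $i_t$ into the two cases of whether or not the sampled index coincides with the one differing position $i$.

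First I would write the single-step difference
\begin{equation*}
v_{t+1} - v'_{t+1} = \beta_t (v_t - v'_t) + (1-\beta_t)\bigl(\nabla f(x_t, z_{i_t})^2 - \nabla f(x'_t, z'_{i_t})^2\bigr)
\end{equation*}
and apply the triangle inequality to get $\sigma_{t+1} \le \beta_t \sigma_t + (1-\beta_t)\, R_t$, where $R_t$ denotes the norm of the squared-gradient difference. The key observation is that the squaring in the AOM update is coordinate-wise, so the identity $a^2 - b^2 = (a-b)(a+b)$ applies entry-by-entry and gives
\begin{equation*}
\|\nabla f(x,z)^2 - \nabla f(y,z)^2\|_2 \le \|\nabla f(x,z) + \nabla f(y,z)\|_\infty \cdot \|\nabla f(x,z) - \nabla f(y,z)\|_2.
\end{equation*}

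Next I would split on the index $i_t$, sampled uniformly from $\{1,\ldots,n\}$. With probability $1-\tfrac{1}{n}$ we have $z_{i_t} = z'_{i_t}$, so I can apply the factorization above with the common $z$: assumption~(1) bounds each coordinate of the gradient by $\mu$, giving $\|\nabla f(x_t,z) + \nabla f(x'_t,z)\|_\infty \le 2\mu$, while smoothness gives $\|\nabla f(x_t,z) - \nabla f(x'_t,z)\|_2 \le L\delta_t$, so $R_t \le 2\mu L\delta_t$. With probability $\tfrac{1}{n}$ the data points differ, and I just use the crude bound $\|\nabla f(x,z)^2\|_2 \le \|\nabla f(x,z)\|_\infty\,\|\nabla f(x,z)\|_2 \le \mu^2$ on each term separately, giving $R_t \le 2\mu^2$.

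Plugging these two cases into the expectation (conditioned on $\delta_{t_0}=0$) and using the definitions $\Sigma_t = \mathbb{E}[\sigma_t|\delta_{t_0}=0]$ and $\Delta_t = \mathbb{E}[\delta_t|\delta_{t_0}=0]$ yields exactly the claimed recursion
\begin{equation*}
\Sigma_{t+1} \le \beta_t \Sigma_t + 2(1-\beta_t)\!\left(1-\tfrac{1}{n}\right)\!L\mu\Delta_t + \tfrac{2}{n}(1-\beta_t)\mu^2.
\end{equation*}
The only subtlety, and the step I'd double-check carefully, is the coordinate-wise handling of the squared-gradient difference: one must avoid accidentally invoking a global Lipschitz constant for the map $g \mapsto g^2$ on $\mathbb{R}^d$ and instead exploit the entrywise structure together with the $\ell_\infty$ bound on the gradient to pull out the factor $2\mu$. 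Everything else is routine linearity of expectation and triangle inequality; no assumption on $\beta_t$ beyond $\beta_t \in (0,1)$ enters.
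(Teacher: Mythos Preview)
Your proposal is correct and follows essentially the same approach as the paper: decompose $v_{t+1}-v'_{t+1}$, apply the triangle inequality, split on whether $i_t$ hits the differing index, bound the squared-gradient difference coordinate-wise via $a^2-b^2=(a+b)(a-b)$ together with $|\nabla_j f|\le\mu$ and smoothness in the ``same-sample'' case, and use the crude $\|\nabla f\|_2\le\mu$ bound in the ``different-sample'' case before taking the conditional expectation. Your use of the $\ell_\infty$--$\ell_2$ inequality to pull out the $2\mu$ factor is a slightly cleaner packaging of exactly the same computation the paper does by writing out the sums over coordinates explicitly.
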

\vspace{-3mm}
The full proof is in Appendix~\ref{appx:proof_theorem3}. Then combining Theorems~\ref{theorem2} and \ref{theorem3}, we obtain the following dynamical inequality:  
\begin{align}
\begin{split}
\begin{bmatrix}
\Delta_{t+1}\\
\Sigma_{t+1}
\end{bmatrix}  & \le  
\begin{bmatrix}
\mathcal{U}_t & \mathcal{P}_t \\ 
\mathcal{Q}_t & \mathcal{R}_t 
\end{bmatrix} \begin{bmatrix}
\Delta_t\\
\Sigma_t
\end{bmatrix} + \frac{1}{n}\begin{bmatrix}
\eta_t\frac{2}{\sqrt{\lambda_1+\epsilon}}\mu\\
(1-\beta_t)2\mu^2  \\
\end{bmatrix},  
\end{split}
\end{align}
where 

\vspace*{-\baselineskip}
\begin{minipage}{0.48\textwidth}
\begin{align*}
\mathcal{U}_t &= 1 + \eta_t \left(1-\frac{1}{n}\right)\frac{L}{\sqrt{\lambda_1+\epsilon}}\left[1 + \frac{\mu^2 (1-\beta_t)}{\lambda_1+\epsilon}\right], \\
\mathcal{R}_t &= \beta_t,
\end{align*}
\end{minipage}
\hfill
\begin{minipage}{0.49\textwidth}
\begin{align*}
    \mathcal{P}_t &= \eta_t \left(1-\frac{1}{n}\right)\frac{\mu\beta_t}{2\sqrt{\lambda_1+\epsilon}(\lambda_1+\epsilon)},
\\
    \mathcal{Q}_t &= 2 L \mu (1- \beta_t)\left(1-\frac{1}{n}\right)
\end{align*}
\end{minipage}

\vspace*{-\baselineskip}
Denote 
$A_t =  
\begin{bmatrix}
\mathcal{U}_t & \mathcal{P}_t \\ 
\mathcal{Q}_t & \mathcal{R}_t 
\end{bmatrix}$ and
\begin{align*}
U_t &: = \left(1-\frac{1}{n}\right)\frac{L}{\sqrt{\lambda_1+\epsilon}}\left[1 + \frac{\mu^2 (1-\beta_t)}{\lambda_1+\epsilon}\right] 
\le  U:= \left(1-\frac{1}{n}\right)\frac{L}{\sqrt{\lambda_1+\epsilon}}\left[1 + \frac{\mu^2 }{\lambda_1+\epsilon}\right],\\   
V_t & : = \left(1-\frac{1}{n}\right)\frac{\mu\beta_t}{2\sqrt{\lambda_1+\epsilon}(\lambda_1+\epsilon)} 
\le 
V:=\left(1-\frac{1}{n}\right) \frac{\mu}{2\sqrt{\lambda_1+\epsilon}(\lambda_1+\epsilon)},
%
\\
W& : =  2\left(1-\frac{1}{n}\right)L\mu,\quad \quad\quad\quad\quad\quad
Y:= \frac{2\mu}{\sqrt{\lambda_1+\epsilon}},\quad \quad\quad\quad\quad\quad Z:= 2\mu^2.    
\end{align*}
We then can rewrite 
$A_t = \begin{bmatrix}
1+ \eta_t U_t & \eta_t V_t \\
(1-\beta_t)W & \beta_t
\end{bmatrix}$ and 
$
\begin{bmatrix}
\Delta_{t+1}\\
\Sigma_{t+1}
\end{bmatrix}\le  A_t\begin{bmatrix}
\Delta_t\\
\Sigma_t
\end{bmatrix} + \frac{1}{n}\begin{bmatrix} \eta_t Y\\ (1-\beta_t)Z\end{bmatrix}\label{maineq}.
$
Thus the stability of AOM now depends on the norm of the matrix $A_t$. The following lemma gives an upper bound on the operator norm of $A_t$ for all $t\in\mathbb{N}$.
\begin{lemma} \label{normest}
Let $\eta_t = \frac{c}{t}$ and $\beta_t = 1-\alpha_t , \alpha_t\in (0,1)$. We then have that 
\begin{align}\|A_t\|_2  \le exp\left(\frac{c}{t}\sqrt{D_1} + \frac{c^2}{t^2}\frac{D_1}{2} + \alpha_t D_2 \right) \end{align}
where $D_1 = U^2 + V^2, D_2 = \sqrt{W^2+1} + \frac{1}{2}(W^2 + 1)$.
\end{lemma}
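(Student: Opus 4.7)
The plan is to reduce this matrix-norm bound to a scalar inequality by writing $A_t = I + C_t$ with
\[
C_t = \begin{bmatrix} \eta_t U_t & \eta_t V_t \\ \alpha_t W & -\alpha_t \end{bmatrix},
\]
expanding $A_t^T A_t = I + (C_t + C_t^T) + C_t^T C_t$, and controlling everything through the Frobenius norm of $C_t$. First, by the triangle inequality together with $\|C_t + C_t^T\|_2 \le 2\|C_t\|_2$ and $\|C_t^T C_t\|_2 = \|C_t\|_2^2$, I would obtain
\[
\|A_t\|_2^2 \;=\; \|A_t^T A_t\|_2 \;\le\; 1 + 2\|C_t\|_2 + \|C_t\|_2^2,
\]
after which the elementary inequality $\|C_t\|_2 \le \|C_t\|_F$ reduces the task to estimating the Frobenius norm of $C_t$.

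A direct computation, together with the uniform bounds $U_t \le U$ and $V_t \le V$ stated just above the lemma, gives
\[
\|C_t\|_F^2 \;=\; \eta_t^2(U_t^2 + V_t^2) + \alpha_t^2(W^2+1) \;\le\; \eta_t^2 D_1 + \alpha_t^2(W^2+1).
\]
The crucial step is to treat the linear and quadratic occurrences of $\|C_t\|_F$ asymmetrically. For the linear occurrence I would apply $\sqrt{a+b} \le \sqrt{a} + \sqrt{b}$ to split as $\|C_t\|_F \le \eta_t\sqrt{D_1} + \alpha_t\sqrt{W^2+1}$. For the quadratic occurrence I would leave the Frobenius sum intact and use $\alpha_t^2 \le \alpha_t$ (valid because $\alpha_t \in (0,1)$) to replace $\alpha_t^2(W^2+1)$ by $\alpha_t(W^2+1)$. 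Collecting terms then yields
\[
\|A_t\|_2^2 \;\le\; 1 + 2\eta_t\sqrt{D_1} + \eta_t^2 D_1 + 2\alpha_t\bigl(\sqrt{W^2+1} + \tfrac{1}{2}(W^2+1)\bigr) \;=\; 1 + 2\eta_t\sqrt{D_1} + \eta_t^2 D_1 + 2\alpha_t D_2,
\]
upon recognising the parenthetical as $D_2$. A single application of $1+x \le e^x$, followed by taking square roots and substituting $\eta_t = c/t$, produces the claimed bound.

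The main obstacle is avoiding an undesired cross term of the form $2\eta_t\alpha_t\sqrt{D_1(W^2+1)}$: a naive estimate $\|C_t\|_2^2 \le (\eta_t\sqrt{D_1} + \alpha_t\sqrt{W^2+1})^2$ would generate exactly this term and prevent a clean match with the stated exponent. The asymmetric treatment of linear versus quadratic occurrences of $\|C_t\|_F$, combined with the conversion $\alpha_t^2 \to \alpha_t$, is precisely what delivers the additive form $\eta_t\sqrt{D_1} + \eta_t^2 D_1/2 + \alpha_t D_2$ in the exponent. Everything else is routine manipulation of a $2 \times 2$ perturbation of the identity.
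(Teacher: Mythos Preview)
Your proof is correct and parallels the paper's argument: the paper also writes $A_t=I+C_t$ implicitly and bounds $\|A_tv\|_2^2$ for a unit vector $v$, obtaining exactly the same scalar inequality $1+2\eta_t\sqrt{D_1}+\eta_t^2 D_1+2\alpha_t\sqrt{W^2+1}+\alpha_t^2(W^2+1)$ via Cauchy--Schwarz before applying $\alpha_t^2\le\alpha_t$ and $1+x\le e^x$. The only difference is packaging---you use operator and Frobenius norm inequalities where the paper expands coordinates directly---but the underlying asymmetric treatment of the linear and quadratic perturbation terms and the resulting exponent are identical.
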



The proof of this lemma is in Appendix~\ref{appx:proof_lemma3}. Armed with these results, we are now ready to give a quantitative estimate of the deviation of the model's parameters when running AOM on two data sets $S$ and $S'$.

\begin{theorem} \label{theorem4}
Assume that  assumptions (1), (2), (3) are satisfied.  Starting from the same initialization, suppose that we run AOM with step size $\eta_t$ on $S$ and $S'$ respectively for $T$ iterations and output $x_T, x'_T$. Let $\eta_t = \frac{c}{t}$ and $\beta_t = 1-\alpha_t , \alpha_t\in (0,1)$. We then have that 
\begin{align}
    {\|\Delta_T\|_2} &{\le  \frac{1}{n} exp\left(\gamma \frac{c^2D_1}{2}\right) \times}  
    {\sum_{t=t_0+1}^Texp\left( D_2\sum_{k= t+1}^T\alpha_t\right)\left(\frac{T}{t}\right)^{c\sqrt{D_1}}\left(\frac{1}{t}cY+ \alpha_t Z\right)}
\end{align}
where we define $\gamma = \sum_{k=1}^\infty \frac{1}{k^2}$.
\end{theorem}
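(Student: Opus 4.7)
The plan is to treat the two scalar recursions in Theorems~\ref{theorem2} and \ref{theorem3} as a single vector recursion in $u_t := (\Delta_t, \Sigma_t)^\top$, namely
\begin{align*}
u_{t+1} \;\le\; A_t\, u_t + \frac{1}{n} b_t, \qquad b_t := \bigl(\eta_t Y,\ (1-\beta_t) Z\bigr)^\top,
\end{align*}
where the inequality is componentwise and $A_t$ has nonnegative entries. Because we are conditioning on $\delta_{t_0}=0$, we also have $\Sigma_{t_0}=0$, hence $u_{t_0}=0$. Iterating the recursion from $t_0$ up to $T$ then gives
\begin{align*}
u_T \;\le\; \frac{1}{n}\sum_{t=t_0}^{T-1} \Bigl(\prod_{k=t+1}^{T-1} A_k\Bigr)\, b_t,
\end{align*}
and since all the matrices $A_k$ and all the vectors $b_t$ have nonnegative entries, nothing is lost by passing from the componentwise inequality to the Euclidean norm.

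Next I would apply the operator-norm inequality and submultiplicativity together with Lemma~\ref{normest}. Writing $\|A_k\|_2 \le \exp\bigl(\tfrac{c}{k}\sqrt{D_1} + \tfrac{c^2}{k^2}\tfrac{D_1}{2} + \alpha_k D_2\bigr)$, the telescoping of the exponential over $k=t+1,\dots,T$ produces three separate sums in the exponent: the harmonic sum $\sum_{k=t+1}^T \tfrac{c\sqrt{D_1}}{k}$, the squared-harmonic sum $\sum_{k=t+1}^T \tfrac{c^2 D_1}{2k^2}$, and $\sum_{k=t+1}^T \alpha_k D_2$. I would bound the first by $c\sqrt{D_1}\ln(T/t)$, which, after exponentiation, yields the factor $(T/t)^{c\sqrt{D_1}}$. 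The second is bounded by the absolutely convergent tail $\gamma \tfrac{c^2 D_1}{2}$ with $\gamma = \sum_{k=1}^\infty k^{-2}$, giving the global prefactor $\exp(\gamma c^2 D_1/2)$. The third stays as $\exp\bigl(D_2\sum_{k=t+1}^T\alpha_k\bigr)$, exactly as it appears in the statement.

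Finally, the norm of the source term obeys $\|b_t\|_2 \le \eta_t Y + (1-\beta_t)Z = \tfrac{c}{t}Y + \alpha_t Z$ (componentwise triangle bound), so summing over $t$ from $t_0+1$ to $T$ and collecting the factors above gives precisely the bound asserted for $\|\Delta_T\|_2$ (viewed as a coordinate of $\|u_T\|_2$). The main obstacles I anticipate are (i) verifying that the vector inequality can legitimately be iterated — this requires entrywise nonnegativity of $A_t$ and of $b_t$ as well as of $u_t$, all of which hold because $\beta_t\in(0,1)$ and the constants $U_t, V_t, W, Y, Z$ are nonnegative — and (ii) carefully handling the initial condition $u_{t_0}=0$, which is where Lemma~\ref{lem:stability}'s conditioning on $\delta_{t_0}=0$ is essential and also why the sum begins at $t_0+1$ rather than at $1$. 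Everything else is bookkeeping: the harmonic sum bound, the convergent $k^{-2}$ tail, and submultiplicativity of $\|\cdot\|_2$.
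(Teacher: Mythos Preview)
Your proposal is correct and follows essentially the same route as the paper's proof. The only cosmetic difference is the order of operations: the paper first passes to the scalar quantity $H_t:=\|(\Delta_t,\Sigma_t)^\top\|_2$ to obtain the one-line recursion $H_{t+1}\le \|A_t\|_2 H_t+\tfrac{1}{n}\|b_t\|_2$ and then unrolls, whereas you unroll the componentwise vector inequality first (using entrywise nonnegativity of $A_t$) and take norms afterwards; both paths invoke Lemma~\ref{normest}, bound $\sum_{k>t}1/k$ by $\log(T/t)$ and $\sum_k 1/k^2$ by $\gamma$, and estimate $\|b_t\|_2\le \tfrac{c}{t}Y+\alpha_t Z$, arriving at the same expression.
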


For a full proof, please see appendix \ref{appx:proof_theorem4}. Theorem~\ref{theorem4} and Lemma~\ref{lem:one} provide  a general bound on the stability of the AOM.
If  we replace $\{\beta_t\}$ by specific values, we obtain the stability bound for some well-know instances of AOM. 
\begin{corollary}(Stability bound for Adam with $\beta_1 = 0$) \label{coro}
Let $\eta_t = \frac{c}{t}$ and $\beta_t = \beta$. We then have that for any $z\in \mathcal{Z}$
\begin{align*}
   \mathbb{E}|f(x_T;z) - f(x'_T;z)| 
   &\le  2M\frac{t_0}{n} +\frac{\mu}{n} exp\left(\gamma \frac{c^2D_1}{2}+ (1-\beta) D_2 T\right)\times \\
   &\hspace{2cm}T^{c\sqrt{D_1}}
    \left(
        \frac{Y}{\sqrt{D_1}} +(1-\beta)Z\sum_{t = t_0+1}^T\frac{1}{t^{c\sqrt{D_1}}}
    \right).
\end{align*}

\end{corollary}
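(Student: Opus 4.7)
The plan is to specialize the general bound of Theorem~\ref{theorem4} to the constant choice $\beta_t=\beta$ and then convert the resulting bound on $\Delta_T$ into a loss-drift bound via Lemma~\ref{lem:one}. The whole argument is bookkeeping; no new inequality is needed beyond a $p$-series integral comparison.

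First I would substitute $\alpha_t = 1-\beta$ into the bound of Theorem~\ref{theorem4}. The inner sum $\sum_{k=t+1}^{T}\alpha_k = (1-\beta)(T-t)\le (1-\beta)T$ becomes $t$-independent, so the factor $\exp\!\left(D_2(1-\beta)T\right)$ pulls out of the sum over $t$, as does $T^{c\sqrt{D_1}}$ from the ratio $(T/t)^{c\sqrt{D_1}}$. This leaves
\begin{align*}
\Delta_T \le \frac{1}{n}\exp\!\left(\gamma\tfrac{c^2D_1}{2} + (1-\beta)D_2T\right) T^{c\sqrt{D_1}} \sum_{t=t_0+1}^{T}\frac{1}{t^{c\sqrt{D_1}}}\left(\frac{cY}{t} + (1-\beta)Z\right).
\end{align*}

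Next I would split the remaining sum in two. The $(1-\beta)Z$ piece is kept verbatim and yields the expression $(1-\beta)Z\sum_{t=t_0+1}^{T} t^{-c\sqrt{D_1}}$ that appears in the corollary. For the $cY$ piece I would invoke the integral test: since $c\sqrt{D_1}>0$ (which follows from $L,\mu>0$ together with $\lambda_1+\epsilon>0$ from Assumption~(3)),
\begin{align*}
cY\sum_{t=t_0+1}^{T}\frac{1}{t^{c\sqrt{D_1}+1}} \le cY\int_{t_0}^{\infty}\frac{dx}{x^{c\sqrt{D_1}+1}} = \frac{Y}{\sqrt{D_1}\, t_0^{c\sqrt{D_1}}} \le \frac{Y}{\sqrt{D_1}},
\end{align*}
using $t_0\ge 1$ in the last step. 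This produces the $Y/\sqrt{D_1}$ summand stated in the corollary.

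Finally, I would plug the resulting estimate on $\Delta_T$ into Lemma~\ref{lem:one}, which gives $\mathbb{E}|f(x_T;z)-f(x'_T;z)|\le 2M t_0/n + \mu \Delta_T$; absorbing the $\mu/n$ factor out front reproduces the corollary statement verbatim. The only place deserving care is the integral comparison --- one must check that $c\sqrt{D_1}>0$ so the improper integral converges and so that $t_0^{c\sqrt{D_1}}\ge 1$ can be used --- but there is no substantive obstacle: the corollary is a mechanical specialization of Theorem~\ref{theorem4} composed with Lemma~\ref{lem:one}.
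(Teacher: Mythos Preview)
Your proposal is correct and follows exactly the approach the paper intends: the paper does not write out a separate proof for this corollary, but its proofs of Corollaries~\ref{corollary1} and~\ref{corollary2} proceed by the same mechanical specialization of Theorem~\ref{theorem4} combined with Lemma~\ref{lem:one}, and your argument mirrors that template (including the same integral comparison used in Appendix~\ref{appx:proof_corollary1}). Nothing to add.
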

\begin{remark}
Although this bound is loose, it provides some insights about the stability of Adam with $\beta_1 = 0$. First, the stability bound depends on the parameter $\beta$. Specifically, the optimizer is more stable when $\beta$ is close to $1$ because it reduces the effect of the term $exp((1-\beta)D_2T)$ and the term $(1-\beta)Z\sum_{t = t_0+1}^T\frac{1}{t^{c\sqrt{D_1}}}$. Our experiments confirm this insight. Second, the stability bound also depends on the initial step size $c$. Small $c$ reduce the effect of the term $exp(\gamma\frac{c^2D_1}{2})$. 
\end{remark}
In order to overcome the exponential bound in the previous corollary, we can let $\{\beta_t\}$ gradually converge to $1$. The following corollary shows that this is indeed the case. 
\begin{corollary} \label{corollary1}
Let $\eta_t = \frac{c}{t}$ and $\beta_t = 1-\alpha_t , \alpha_t\in (0,1)$ such that $\alpha_t \le \frac{\alpha}{t}$ for all $t$. We then have that 
\begin{align*}
   \mathbb{E}|f(x_T;z)  - f(x'_T;z)|&\le2M\frac{t_0}{n} 
   + \frac{\mu}{n}(cY +  \alpha Z) \exp\left(\gamma\frac{c^2D_1}{2}\right) \times \frac{T^{c\sqrt{D_1}+\alpha D_2} }{c\sqrt{D_1}+\alpha D_2} \times \frac{1}{t_0^{c\sqrt{D_1}+\alpha D_2}}.  
\end{align*}
\end{corollary}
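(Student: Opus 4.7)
The plan is to derive Corollary~\ref{corollary1} directly by specializing the general bound in Theorem~\ref{theorem4} to the choice $\alpha_t \le \alpha/t$, and then combining the resulting estimate on $\Delta_T$ with Lemma~\ref{lem:one}. Since Theorem~\ref{theorem4} already packages the dynamical system analysis into a single sum, the work here is purely analytic: converting the inner exponential into another polynomial factor in $T/t$, bounding the coefficient $\tfrac{cY}{t}+\alpha_t Z$, and estimating the remaining sum by an integral.

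First, I would invoke Theorem~\ref{theorem4} to write
\[
\Delta_T \le \frac{1}{n}\exp\!\left(\gamma\frac{c^2 D_1}{2}\right)\sum_{t=t_0+1}^T \exp\!\left(D_2\sum_{k=t+1}^T \alpha_k\right)\left(\frac{T}{t}\right)^{c\sqrt{D_1}}\!\left(\frac{cY}{t}+\alpha_t Z\right).
\]
The key observation is that under $\alpha_k \le \alpha/k$, one has
\[
\sum_{k=t+1}^T \alpha_k \le \alpha\sum_{k=t+1}^T \frac{1}{k} \le \alpha\ln\!\left(\frac{T}{t}\right),
\]
so that $\exp\!\left(D_2\sum_{k=t+1}^T\alpha_k\right)\le (T/t)^{\alpha D_2}$. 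Multiplying by the existing $(T/t)^{c\sqrt{D_1}}$ factor collapses the two exponents into a single power $(T/t)^{c\sqrt{D_1}+\alpha D_2}$.

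Next, I would bound the inner coefficient using $\alpha_t\le \alpha/t$:
\[
\frac{cY}{t}+\alpha_t Z \le \frac{cY+\alpha Z}{t}.
\]
Substituting both estimates yields
\[
\Delta_T \le \frac{cY+\alpha Z}{n}\exp\!\left(\gamma\frac{c^2 D_1}{2}\right)\, T^{c\sqrt{D_1}+\alpha D_2}\sum_{t=t_0+1}^T \frac{1}{t^{\,c\sqrt{D_1}+\alpha D_2 + 1}}.
\]
The remaining sum is monotone decreasing in $t$, so I would upper-bound it by the improper integral $\int_{t_0}^{\infty} s^{-(c\sqrt{D_1}+\alpha D_2 + 1)}\,ds = \bigl[(c\sqrt{D_1}+\alpha D_2)\,t_0^{\,c\sqrt{D_1}+\alpha D_2}\bigr]^{-1}$.

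Finally, I would apply Lemma~\ref{lem:one}, which gives $\mathbb{E}|f(x_T;z)-f(x'_T;z)|\le 2M\tfrac{t_0}{n}+\mu\Delta_T$, and plug in the estimate for $\Delta_T$ to recover the claimed inequality. The only mildly delicate step is the conversion of $\exp(D_2\sum\alpha_k)$ into a polynomial factor in $T/t$ via the harmonic bound; everything else is routine integral estimation. No obstacle is significant, since Theorem~\ref{theorem4} has already absorbed the genuine difficulty (propagating the coupled inequality on $(\Delta_t,\Sigma_t)$ through the matrix $A_t$).
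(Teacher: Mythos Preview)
Your proposal is correct and follows essentially the same route as the paper's own proof: specialize Theorem~\ref{theorem4} using $\alpha_k\le\alpha/k$ and the harmonic-sum bound $\sum_{k=t+1}^T 1/k\le \ln(T/t)$ to merge the exponential factor into $(T/t)^{\alpha D_2}$, absorb the coefficient as $(cY+\alpha Z)/t$, bound the resulting sum by an integral, and then combine with Lemma~\ref{lem:one}. The only cosmetic difference is that the paper integrates over $[t_0,T]$ rather than $[t_0,\infty)$, but it drops the upper endpoint anyway when evaluating, so the outcome is identical.
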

\vspace{-2mm}                                            
We could actually choose $t_0$ to minimize the bound in the above corollary as in  \cite{hardt2016train}. We have the following corollary. 
\begin{corollary} \label{corollary2}
Let $\eta_t = \frac{c}{t}$ and $\beta_t = 1-\alpha_t , \alpha_t\in (0,1)$ such that $\alpha_t \le \frac{\alpha}{t}$ for all $t$. Then the uniform stability error of AOM is given by
\begin{align*}
   \mathbb{E}|& f(x_T;z)  - f(x'_T;z)|\le \frac{1}{n}\left[2M\left(\frac{AB}{2M}\right)^{\frac{1}{A+1}} + \frac{B}{\left(\frac{AB}{2M}\right)^{\frac{A^2}{A+1}}}\right]T^{\frac{A}{A+1}} 
\end{align*}
where we define 
\begin{align*}
A:= c\sqrt{D_1}+\alpha D_2,\quad\quad\quad\quad\quad\quad\quad\quad
B:= (cY + \alpha Z) exp\left(\gamma\frac{c^2D_1}{2}\right)\frac{1}{c\sqrt{D_1}+\alpha D_2}.
\end{align*}
\end{corollary}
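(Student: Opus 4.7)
The plan is to treat the right-hand side of Corollary~\ref{corollary1} as a real-valued function of the cutoff parameter $t_0$, and choose $t_0$ so as to balance the two terms that depend on it. Let me write
\[
g(t_0) \;=\; \frac{2M}{n}\,t_0 \;+\; \frac{B}{n}\cdot\frac{T^{A}}{t_0^{A}},
\]
where $A$ and $B$ are the constants defined in the statement. The first term is linear and increasing in $t_0$ (it comes from the ``lag'' $2M t_0/n$ in Lemma~\ref{lem:one}), while the second term is decreasing in $t_0$, so an interior minimum exists and is unique on $(0,\infty)$.

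First I would compute $g'(t_0) = \frac{2M}{n} - \frac{A B T^{A}}{n\, t_0^{A+1}}$ and set it to zero, which gives the stationary point
\[
t_0^{\ast} \;=\; \left(\frac{AB}{2M}\right)^{1/(A+1)} T^{A/(A+1)}.
\]
A quick sign check on $g''$ confirms this is a minimum. Then I would substitute $t_0^{\ast}$ back into the two terms of $g$: the first term becomes $\tfrac{2M}{n}\bigl(\tfrac{AB}{2M}\bigr)^{1/(A+1)}T^{A/(A+1)}$, and for the second term the identity $A - A^{2}/(A+1) = A/(A+1)$ collapses the $T$-exponent to the same $A/(A+1)$, so both contributions share the common factor $T^{A/(A+1)}/n$. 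Collecting them yields precisely the bracketed expression in the statement, and applying Lemma~\ref{lem:one} (which provides the bound in terms of $t_0$) and Theorem~\ref{theorem:generalization} gives the stated uniform stability error.

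A small subtlety I would address before finishing is that $t_0$ is constrained to the integer range $\{1,\dots,n\}$, whereas the optimum $t_0^{\ast}$ is generally non-integer and can in principle exceed $n$. The standard fix, as in \cite{hardt2016train}, is to optimize over the reals and then pass to the nearest admissible integer, which changes the bound only by an absolute constant; if $t_0^{\ast} > n$ the resulting bound is vacuous anyway since $t_0/n \le 1$ already controls the loss via boundedness of $f$. I would state this caveat explicitly rather than elaborate on it.

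The main obstacle is purely bookkeeping: matching the shape of the final bound to the specific form in the statement requires care with the two exponents $1/(A+1)$ and $A/(A+1)$ and with how $B$ (which itself contains a $1/A$ factor from Corollary~\ref{corollary1}) enters. There is no new conceptual content beyond single-variable calculus once Corollaries~\ref{coro} and \ref{corollary1} are in hand; the value of the statement is that it exposes the $T^{A/(A+1)}$ scaling and its sublinearity whenever $A < \infty$, so $1/n$ times a sub-linear power of $T$ is the right takeaway to emphasize.
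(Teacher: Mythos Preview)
Your proposal is correct and follows essentially the same route as the paper: start from the bound of Corollary~\ref{corollary1} (equivalently Lemma~\ref{lem:one} plus the $\Delta_T$ estimate), view it as a function of $t_0$, differentiate to find $t_0^{\ast}=(AB/2M)^{1/(A+1)}T^{A/(A+1)}$, and substitute back. The only superfluous step is the appeal to Theorem~\ref{theorem:generalization}: the corollary is a statement about the stability quantity $\mathbb{E}|f(x_T;z)-f(x'_T;z)|$ itself, so Lemma~\ref{lem:one} and Corollary~\ref{corollary1} already suffice without invoking the stability-to-generalization implication.
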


\begin{remark}
When $\beta_t = 1-\frac{\alpha}{t}$, we obtain Adagrad algorithm. Thus, this corollary implies that Adagrad algorithm is uniformly stable at rate $O(T^{r}),0< r<1$. To the best of our knowledge, this is the first result which shows that Adagrad algorithm is a uniformly stable algorithm. Our experiments confirm the rate in the above corollary.
\end{remark}

\vspace*{-\baselineskip}
\section{Stability of adaptive optimization methods  with weight decay} \label{sec:stability}
\vspace{-2mm}
Weight decay is one of common techniques when training neural network~\citep{loshchilov2019decoupled}. In this section, we prove that weight decay can actually help improve the stability of adaptive optimization methods. Our analysis focus on AdamW algorithm. Remind that the update of AdamW on two data set $S$ and $S'$ has the form

\begin{minipage}{0.48\textwidth}
\begin{align}
    x_{t+1} &= (1-\eta_t\lambda)x_t - \eta_t \frac{\nabla f(x_t,z_{i_t})}{\sqrt{v_{t+1}+\epsilon}} 
    \\
    v_{t+1} &= \beta_t v_t + (1-\beta_t)\nabla f(x_t,z_{i_t})^2
\end{align}
\end{minipage}
\hfill
\begin{minipage}{0.49\textwidth}
\begin{align}
    x'_{t+1} &=(1- \eta_t\lambda)x'_t - \eta_t \frac{\nabla f(x'_t,z'_{i_t})}{\sqrt{v'_{t+1}+\epsilon}}
    \\
    v'_{t+1} &= \beta_t v'_t + (1-\beta_t)\nabla f(x'_t,z'_{i_t})^2
\end{align}
\end{minipage}

where $\lambda$ is the weight decay parameter. We have the following theorem. 
\begin{theorem} \label{theorem5}
Assume that assumptions (1), (2), (3) are satisfied. Suppose we run AdamW with step size $\eta_t$ and weight decay $\lambda$ on $S$ and $S'$ respectively. We then have that 
\begin{align*}
    \Delta_{t+1} \le (1-\eta_t\lambda)\Delta_t  &+ \eta_t \left(1-\frac{1}{n}\right)\frac{L}{\sqrt{\lambda_1+\epsilon}}\left[1 + \frac{\mu L(1-\beta_t)}{\lambda_1+\epsilon}\right]\Delta_t\\
    & + \eta_t \left(1-\frac{1}{n}\right)\frac{\mu\beta_t}{2\sqrt{\lambda_1+\epsilon}(\lambda_1+\epsilon)}\Sigma_t + \eta_t\frac{1}{n}\frac{2}{\sqrt{\lambda_1+\epsilon}}\mu.
\end{align*}
\end{theorem}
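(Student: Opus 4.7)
The plan is to mirror the proof of Theorem~\ref{theorem2}, since AdamW differs from AOM only by the additional deterministic contraction factor $(1-\eta_t\lambda)$ applied symmetrically to both iterates $x_t$ and $x'_t$. Conditioning on $\delta_{t_0}=0$, I would partition on the event of selecting the differing index $i$ at step $t$, whose probability is $1/n$.

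On the event $i_t\neq i$ (probability $1-\tfrac{1}{n}$), both algorithms evaluate $\nabla f$ at the \emph{same} sample $z_{i_t}$, and the AdamW updates give
\begin{align*}
x_{t+1}-x'_{t+1} = (1-\eta_t\lambda)(x_t-x'_t) - \eta_t\!\left[\frac{\nabla f(x_t,z_{i_t})}{\sqrt{v_{t+1}+\epsilon}} - \frac{\nabla f(x'_t,z_{i_t})}{\sqrt{v'_{t+1}+\epsilon}}\right].
\end{align*}
I would then apply the standard add-subtract decomposition
\begin{align*}
\frac{\nabla f(x_t,z)}{\sqrt{v_{t+1}+\epsilon}} - \frac{\nabla f(x'_t,z)}{\sqrt{v'_{t+1}+\epsilon}} = \frac{\nabla f(x_t,z)-\nabla f(x'_t,z)}{\sqrt{v_{t+1}+\epsilon}} + \nabla f(x'_t,z)\!\left[\frac{1}{\sqrt{v_{t+1}+\epsilon}} - \frac{1}{\sqrt{v'_{t+1}+\epsilon}}\right].
\end{align*}
The first piece is bounded by $L\delta_t/\sqrt{\lambda_1+\epsilon}$ using smoothness together with assumption~(3). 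For the second, I would use $\|\nabla f(x'_t,z)\|\le\mu$ and the mean-value-type estimate $\left|(v+\epsilon)^{-1/2}-(v'+\epsilon)^{-1/2}\right|\le |v-v'|/\bigl(2(\lambda_1+\epsilon)^{3/2}\bigr)$. Expanding $v_{t+1}-v'_{t+1}=\beta_t(v_t-v'_t)+(1-\beta_t)(\nabla f(x_t,z)^2-\nabla f(x'_t,z)^2)$ and applying Lipschitzness/smoothness to the gradient-squared difference yields the $\beta_t\Sigma_t$ contribution and the extra $(1-\beta_t)\mu L/(\lambda_1+\epsilon)$ term that augments the $\Delta_t$ coefficient.

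On the complementary event $i_t=i$ (probability $\tfrac{1}{n}$), the two algorithms use different samples. Here I would just apply the triangle inequality to $(1-\eta_t\lambda)(x_t-x'_t)-\eta_t\nabla f(x_t,z_{i_t})/\sqrt{v_{t+1}+\epsilon}+\eta_t\nabla f(x'_t,z'_{i_t})/\sqrt{v'_{t+1}+\epsilon}$ and bound each gradient term by $\mu/\sqrt{\lambda_1+\epsilon}$, producing the $\tfrac{1}{n}\cdot 2\mu\eta_t/\sqrt{\lambda_1+\epsilon}$ residual. Taking expectations and combining both events with the appropriate weights gives the stated recursion, since $(1-\eta_t\lambda)$ is deterministic and appears in front of $\Delta_t$ in both cases.

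I do not expect any genuinely new obstacle here; the only care required is to keep the contraction factor $(1-\eta_t\lambda)$ separate from the gradient-step contributions, and to notice that the $v$-recursion is identical to that in Theorem~\ref{theorem2}, so the coupling between $\Delta_t$ and $\Sigma_t$ is unchanged. In particular, Theorem~\ref{theorem3} continues to provide the bound on $\Sigma_{t+1}$ without modification, so a fully analogous analysis through the $2\times 2$ transition matrix $A_t$ (now with top-left entry $(1-\eta_t\lambda)+\eta_t U_t$ instead of $1+\eta_t U_t$) will yield a stability bound strictly smaller than that of vanilla Adam, quantifying the benefit of weight decay.
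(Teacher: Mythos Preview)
Your proposal is correct and matches the paper's approach exactly: the paper simply states that ``the proof of this theorem is similar to the proof of Theorem~\ref{theorem2},'' and your outline reproduces that proof with the single modification that the deterministic factor $(1-\eta_t\lambda)$ multiplies $x_t-x'_t$ in both cases of the $i_t$ partition. Your closing remarks about the unchanged $\Sigma_t$ recursion and the modified top-left entry of $A_t$ also align with how the paper proceeds in the remainder of Section~\ref{sec:stability}.
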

The proof of this theorem is similar to the proof of Theorem \ref{theorem2}. 
From now on, we fix $\beta_t = \beta, \eta_t = \eta$, the this inequality then becomes
\begin{align}
\Delta_{t+1} \le 
    \biggl\{1 - \eta \biggl(\lambda -& \left(1-\frac{1}{n}\right)\frac{L}{\sqrt{\lambda_1+\epsilon}}  
        \left[1 + \frac{\mu L(1-\beta)}{\lambda_1+\epsilon}\right]\biggl)
    \biggl\}
    \Delta_t
    \nonumber \\
& + \eta \left(1-\frac{1}{n}\right)\frac{\mu\beta}{2\sqrt{\lambda_1+\epsilon}(\lambda_1+\epsilon)}\Sigma_t + \eta\frac{1}{n}\frac{2}{\sqrt{\lambda_1+\epsilon}}\mu.
\end{align}
In addition, we also have that 
$
\Sigma_{t+1} \le \beta \Sigma_t +  (1- \beta)\left(1-\frac{1}{n}\right)2L\mu\Delta_t + \frac{1}{n} (1-\beta)2\mu^2.  
$
Define 
\begin{align*}
U := \left(1-\frac{1}{n}\right)\frac{L}{\sqrt{\lambda_1+\epsilon}}\left[1 + \frac{\mu L(1-\beta)}{\lambda_1+\epsilon}\right],  \quad
&V := \left(1-\frac{1}{n}\right)\frac{\mu\beta}{2\sqrt{\lambda_1+\epsilon}(\lambda_1+\epsilon)},\quad\\
W := \left(1-\frac{1}{n}\right)2L\mu.
\end{align*}
We then can rewrite our system of inequalities as 
\begin{align}
\begin{split}
\begin{bmatrix}
\Delta_{t+1}\\
\Sigma_{t+1}
\end{bmatrix}  & \le  
\begin{bmatrix}
1 - \eta(\lambda - U) & \eta V \\ 
(1-\beta)W& \beta 
\end{bmatrix} \begin{bmatrix}
\Delta_t\\
\Sigma_t
\end{bmatrix} + \frac{1}{n}\begin{bmatrix}
Y\\
Z\\
\end{bmatrix},  
\end{split}
\end{align}
Denote 
$$ A_t :=   \begin{bmatrix}
\Delta_t\\
\Sigma_t
\end{bmatrix}\quad B := \begin{bmatrix}
Y\\
Z\\
\end{bmatrix} \quad R := \begin{bmatrix}
1 - \eta(\lambda - U) & \eta V \\ 
(1-\beta)W& \beta 
\end{bmatrix},$$
we then have that 
\begin{align}
\|R\|^2_F & \le (1 - \eta(\lambda - U))^2 +  \eta^2 V^2 + (1-\beta)^2W^2 + \beta^2. 
\end{align}
First, we show that  the inequality 
$(1-\beta)^2W^2 + \beta^2  < 1$
has a solution $\beta\in (0,1)$. By direct computation, we can show that the equation 
$(1-\beta)^2W^2 + \beta^2  = 1$
has two distinct solutions $\beta_1 = 1$ and $\beta_2 = \frac{W^2 -1}{W^2 +1}$. If $W^2>1$, then we have that $0 < \beta_2 < \beta_1 = 1$. On the other hand, if $W^2 < 1$, then we have that $\beta_2 < 0 < \beta_1 = 1$. 
Thus, we conclude that for any $\beta \in (\max\{0, \beta_2\}, 1)$, we always have that 
$(1-\beta)^2W^2 + \beta^2  < 1.$

Now, let's assume that we choose $\beta$ such that the above inequality holds. We can then choose $\eta$ small enough such that $\eta^2 V^2 + (1-\beta)^2W^2 + \beta^2 < 1.$
Given $\beta, \eta$, we want to choose $\lambda$ such that
$(1 - \eta(\lambda - U))^2 +  \eta^2 V^2 + (1-\beta)^2W^2 + \beta^2 < 1.$
First, we need $1 - \eta(\lambda- U) > 0$, which is equivalent to $\lambda < U + \frac{1}{\eta}.$
In addition, we also want
$(1 - \eta(\lambda - U))^2 +  \eta^2 V^2 + (1-\beta)^2W^2 + \beta^2 < 1,$
or equivalently, 
$
\lambda > U + \frac{1- \sqrt{1-\eta^2 V^2 - (1-\beta)^2W^2 - \beta^2}}{\eta}
$.
Thus we can find $\lambda$ such that $\|R\|_F < 1$. With this choice of $\lambda$, we have that 
\begin{align}
    \|A_{t+1}\| \le \alpha \|A_t\| + \frac{1}{n}B\le \frac{1}{n}\frac{B}{1-\alpha},
\end{align}
where $\alpha = \|R\|_F$. 

The above derivation shows that with an appropriate choice of $\beta, \eta, \lambda$,  AdamW is a uniformly stable algorithm. In addition, the stability bound is independent of the number of iterations.

\section{Experiments} \label{sec:exp}
\vspace*{-2mm}
We study the generalization and stability of AOMs with different angles using synthetic to real-world datasets. Following \cite{hardt2016train}, for each training dataset, we first remove a random sample $x$ and make two copies of the training set. Then for the first training set, we randomly replace a data point with $x$. We then train two models on these two train sets staring from the same initialization. At each iteration, we record the Euclidean distance between the parameters of the two output models. We also record the training loss and test loss of the first model, from which we calculate the generalization error which is the absolute difference between train and test losses. We note that for each metric, we run 20 trials and plot the corresponding mean and variation. All the codes are included in the supplementary material and will be publicly released.

\vspace*{-2mm}
\subsection{Synthetic data} 
\vspace*{-2mm}

\begin{figure*}[ht!]
\begin{subfigure}{.24\textwidth}
  \centering
  \includegraphics[width=0.89\linewidth]{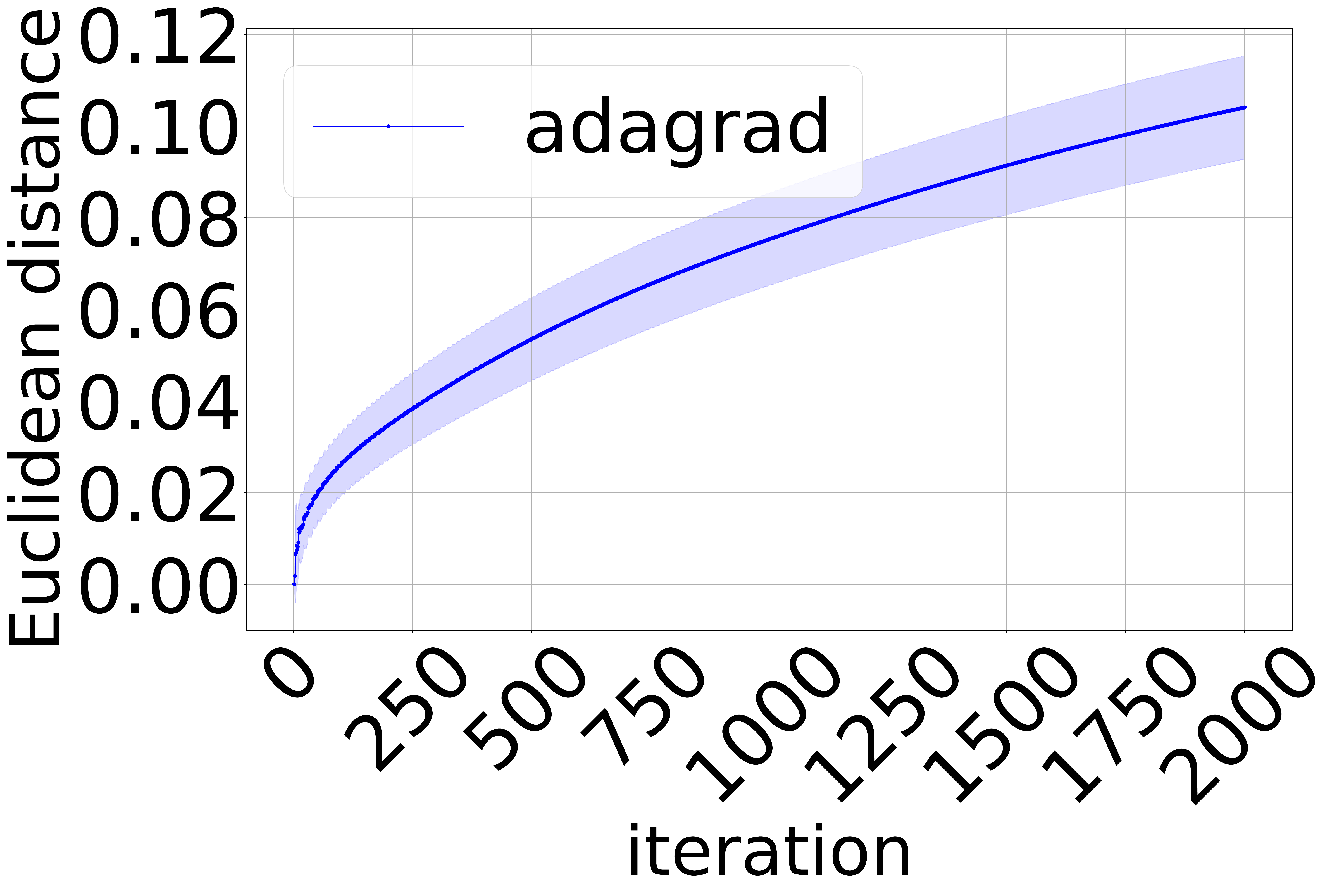}  
  \label{fig:sub-first}
\end{subfigure}
\begin{subfigure}{.24\textwidth}
  \centering
  \includegraphics[width=0.89\linewidth]{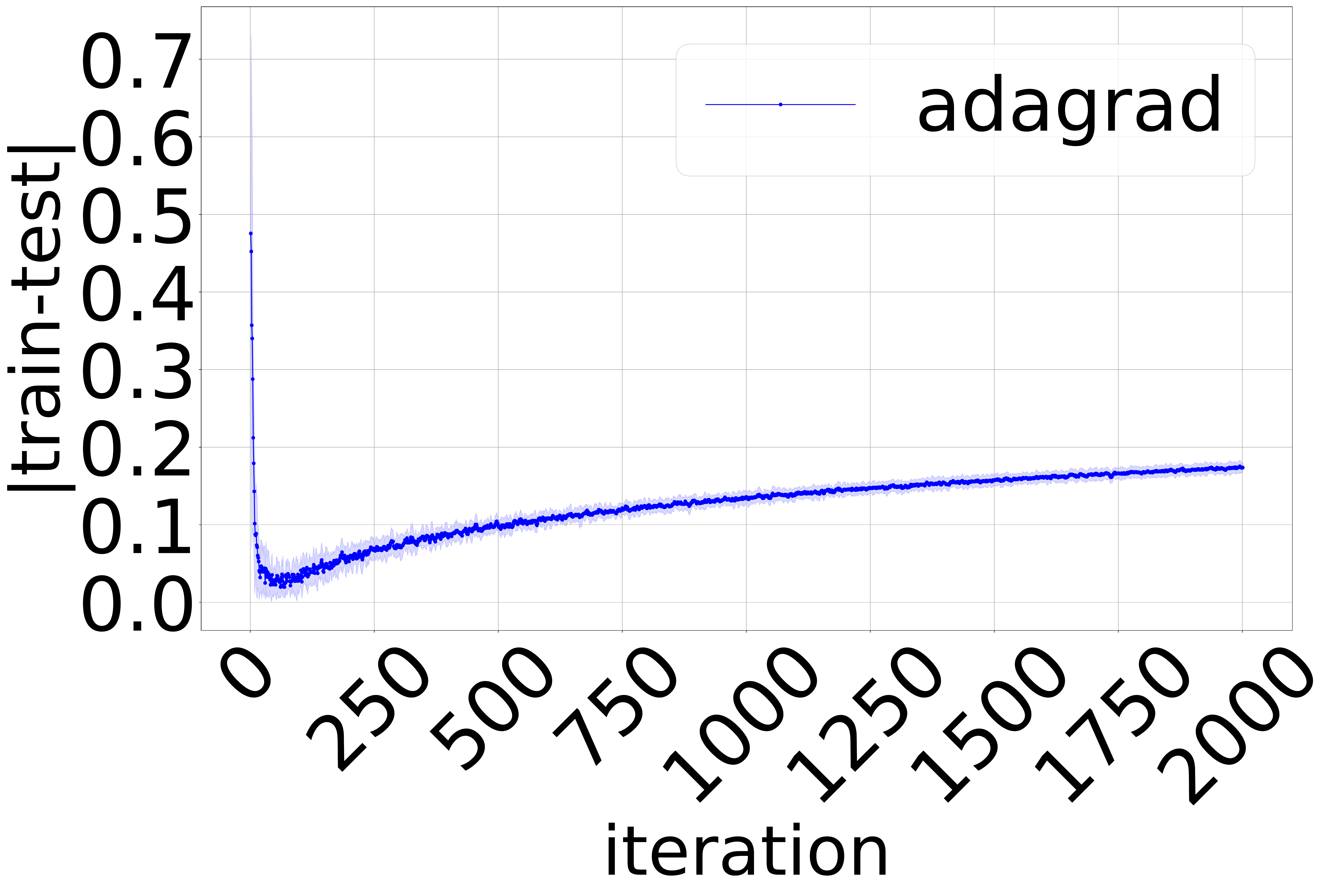}  
  \label{fig:sub-second}
\end{subfigure}
\begin{subfigure}{.24\textwidth}
  \centering
  \includegraphics[width=0.89\linewidth]{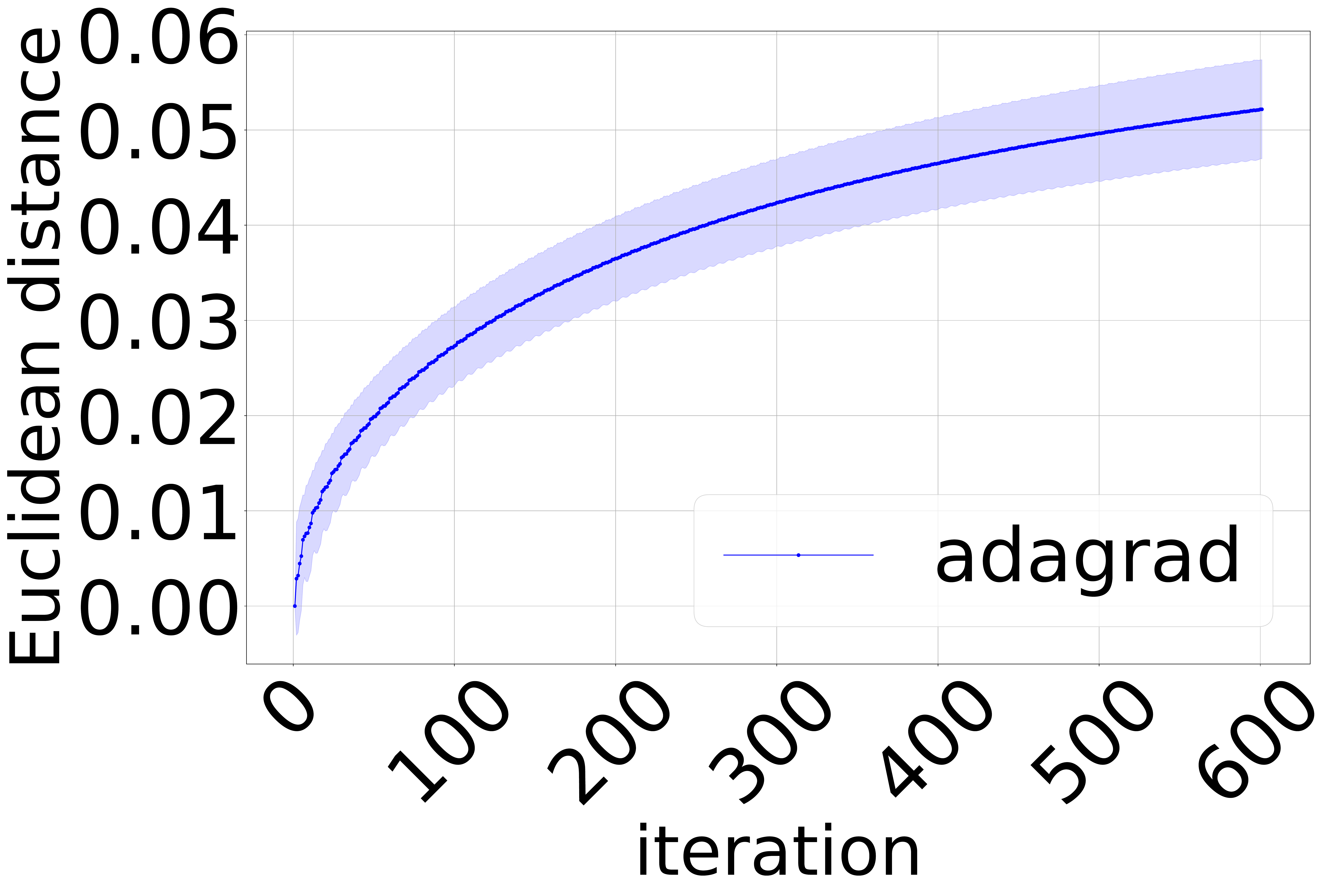}  
  \label{fig:sub-first}
\end{subfigure}
\begin{subfigure}{.24\textwidth}
  \centering
  \includegraphics[width=0.89\linewidth]{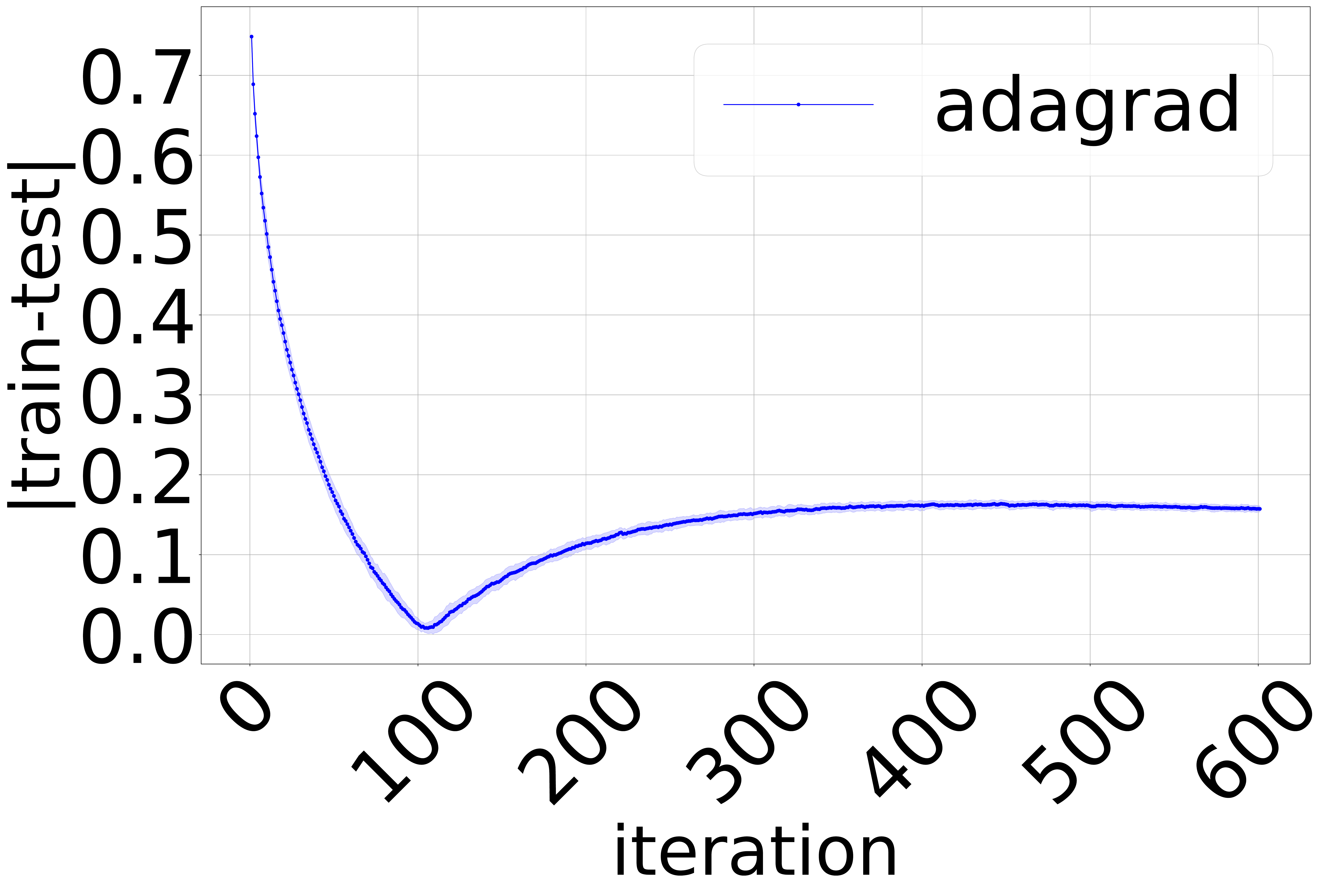}  
  \label{fig:sub-second}
\end{subfigure}
\caption{
Parameter distance and generalization error when using Adagrad to train neural networks to solve CLS (left) and REG (right) tasks. Both metrics grow in similar fashion, which agree with Corollary~\ref{corollary2}. 
}
\label{fig:adagrad}
\vspace*{-2mm}
\end{figure*}

\begin{figure*}[ht!]
\vspace*{-2mm}
\begin{subfigure}{.24\textwidth}
  \centering
  \includegraphics[width=0.89\linewidth]{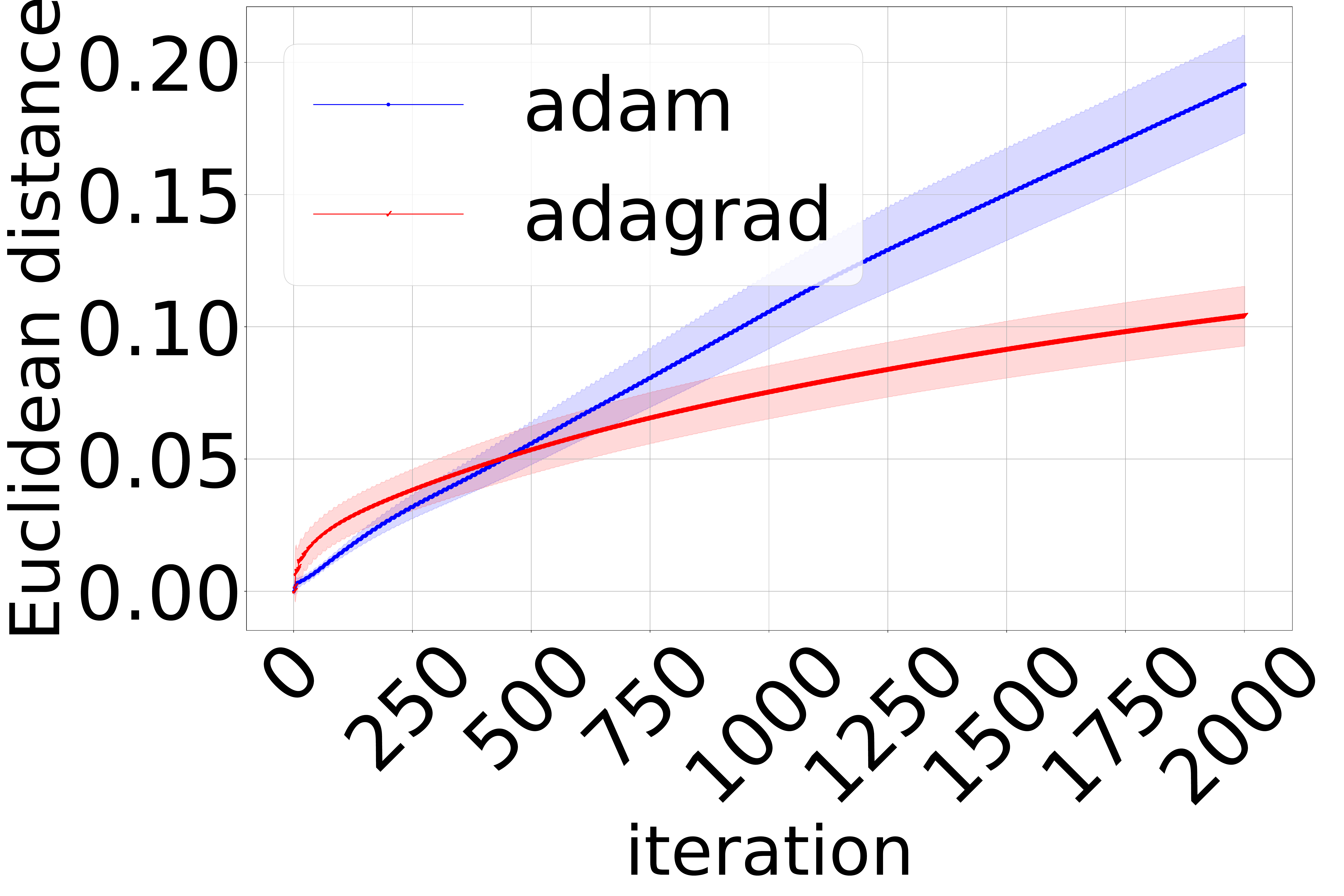}  
  \label{fig:sub-first}
\end{subfigure}
\begin{subfigure}{.24\textwidth}
  \centering
  \includegraphics[width=0.89\linewidth]{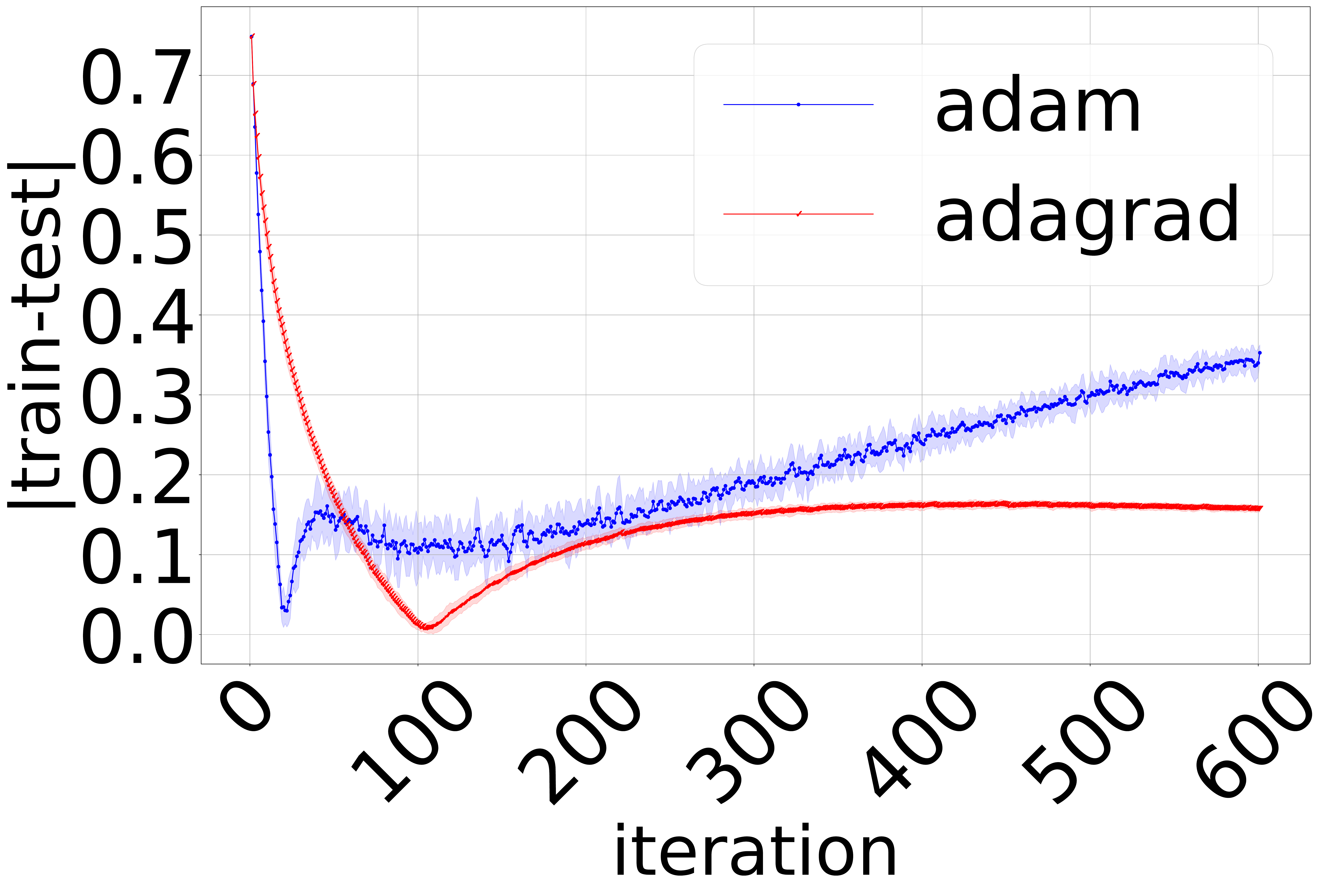}  
  \label{fig:sub-second}
\end{subfigure}
\begin{subfigure}{.24\textwidth}
  \centering
  \includegraphics[width=0.89\linewidth]{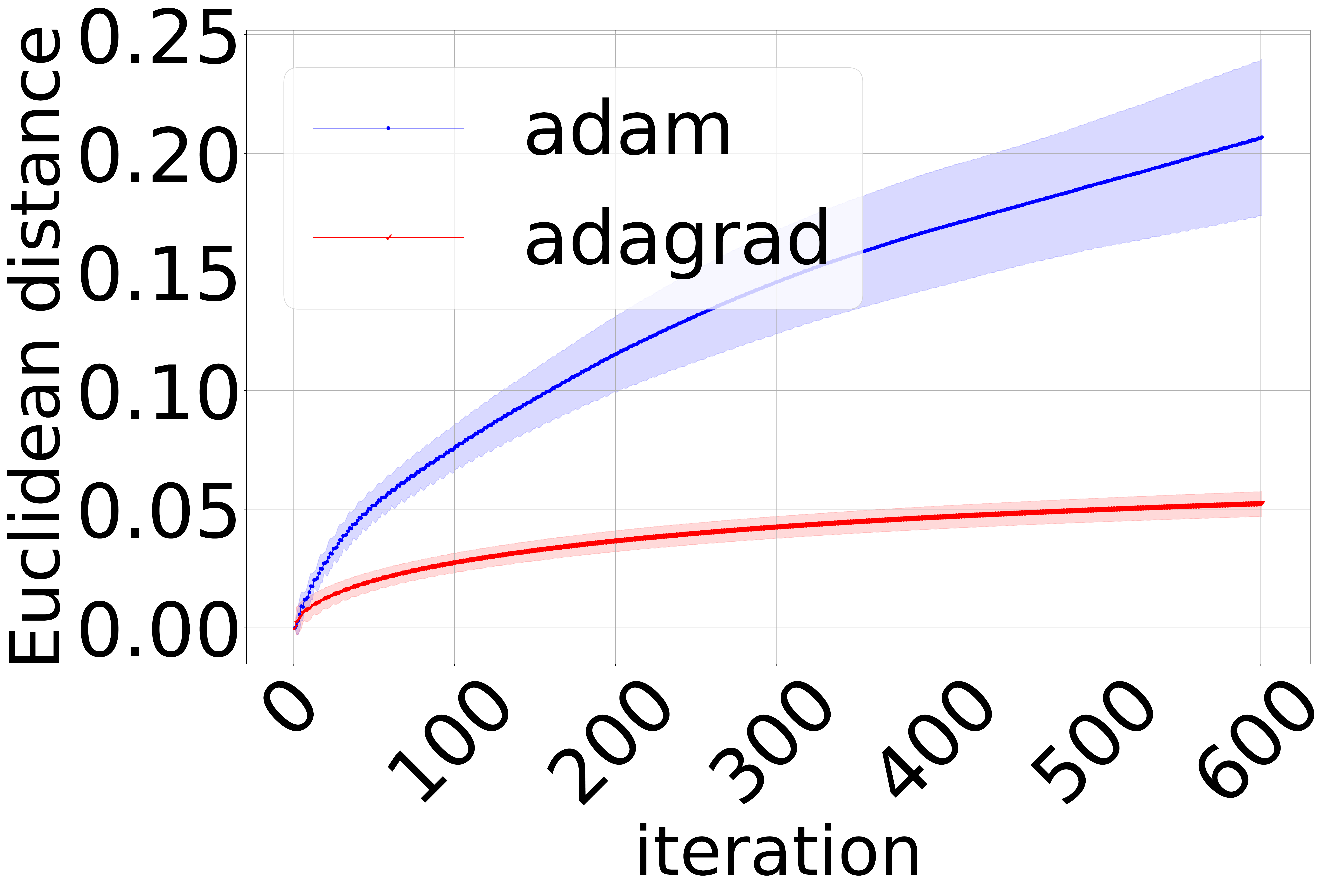}  
  \label{fig:sub-first}
\end{subfigure}
\begin{subfigure}{.24\textwidth}
  \centering
  \includegraphics[width=0.89\linewidth]{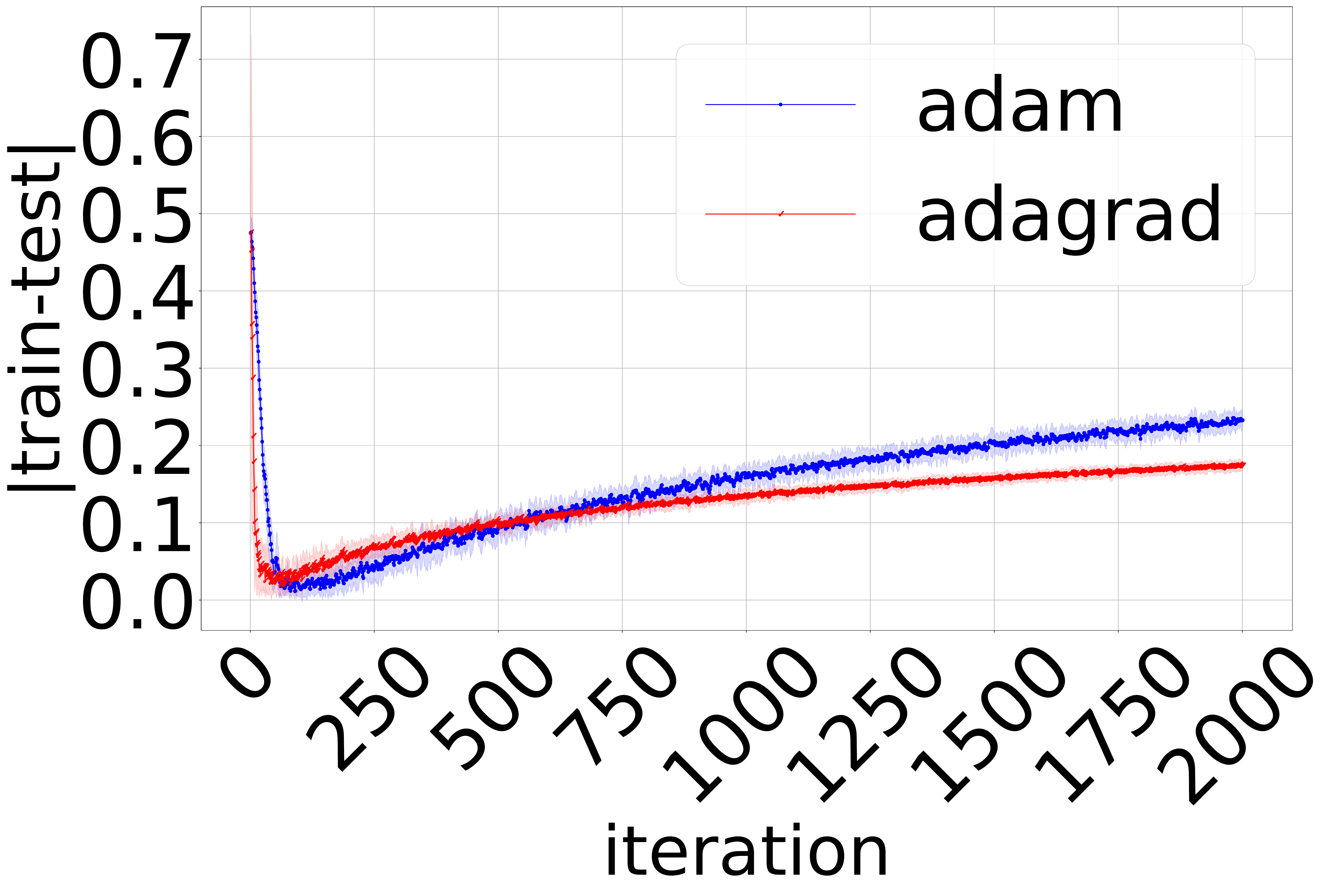}  
  \label{fig:sub-second}
\end{subfigure}
\caption{Comparing parameter distance and generalization error between Adam and Adagrad in the classification (left) and regression (right). Results show that Adagrad is more stable than Adam.} 
\label{fig:adagrad_adam}
\vspace*{-4mm}
\end{figure*}

\textbf{Datasets and Tasks.}  We consider both classification (CLS) and regression (REG) tasks. The details for network architecture, training parameters and additional results can be found in Appendix~\ref{appx:synthetic}. Data generation process is as follows.  
\begin{enumerate}[nosep,noitemsep,nolistsep]
    \item For CLS, we generate 151 data points in 2D from three isotropic Gaussian blobs (3 classes) where each blob has a standard deviation $1$ and the same number of data points. We use the package  \emph{sklearn.datasets.make\_blobs} for this data generation procedure. We then randomly split the data set into an original train set (with 31 data points) and a test set (with 120 data points). 
     \item For REG, we generate 151 data points from the uniform distribution on $[-1,1]\times[-1,1]$. The targets are then generated by the following formula: $y = x_1^2+x_2^2 + \mathbf{\epsilon}
     $ where the noise $\epsilon \sim \mathcal{N}(0,0.5^2)$. We then also randomly split the data set into an original train set (with 31 data points) and a test set (with 120 data points).
\end{enumerate}
We choose a small training set in both tasks because we want our models to overfit the training data easily. Thus, we can compare the generalization errors between different algorithms.

\textbf{Stability and generalization of Adagrad.} 
In this experiment, we consider AOMs with $\alpha_t = 0, \beta_t = 1-\alpha/t$. This corresponds to Adagrad. For comparison with Adagrad, we consider Adam with $\beta_1=0, \beta_2 = 0.999$.
As you can see from Figure~\ref{fig:adagrad}, the parameter distance in both tasks grow as $O(T^{r})$ where $r < 1$. This aligns with the result in Corollary~\ref{corollary1}. The generalization error in both tasks also grows at a similar rate. 
Additionally, in Figure~\ref{fig:adagrad_adam}, we can also see that the parameter distance and the generalization error of Adagrad  always grow slower than the ones of Adam although we use the same or bigger initial learning rate for Adagrad. This shows that letting $\beta_t$ goes to 1 at rate $1-\alpha/t$ make the AOM more stable than using fixed $\beta_t= 0.999$.
\textbf{Dependence of stability and generalization on fixed $\beta_t$.} 
In this experiment, we study the stability and generalization of AOMs when $\alpha_t = 0$ and $\beta_t = \beta$. This corresponds to  Adam with $\beta_1 = 0$ and $\beta_2 = \beta$. 

    
\begin{figure*}[ht]
\vspace*{-2mm}
\begin{subfigure}{.24\textwidth}
  \centering
  \includegraphics[width=0.89\linewidth]{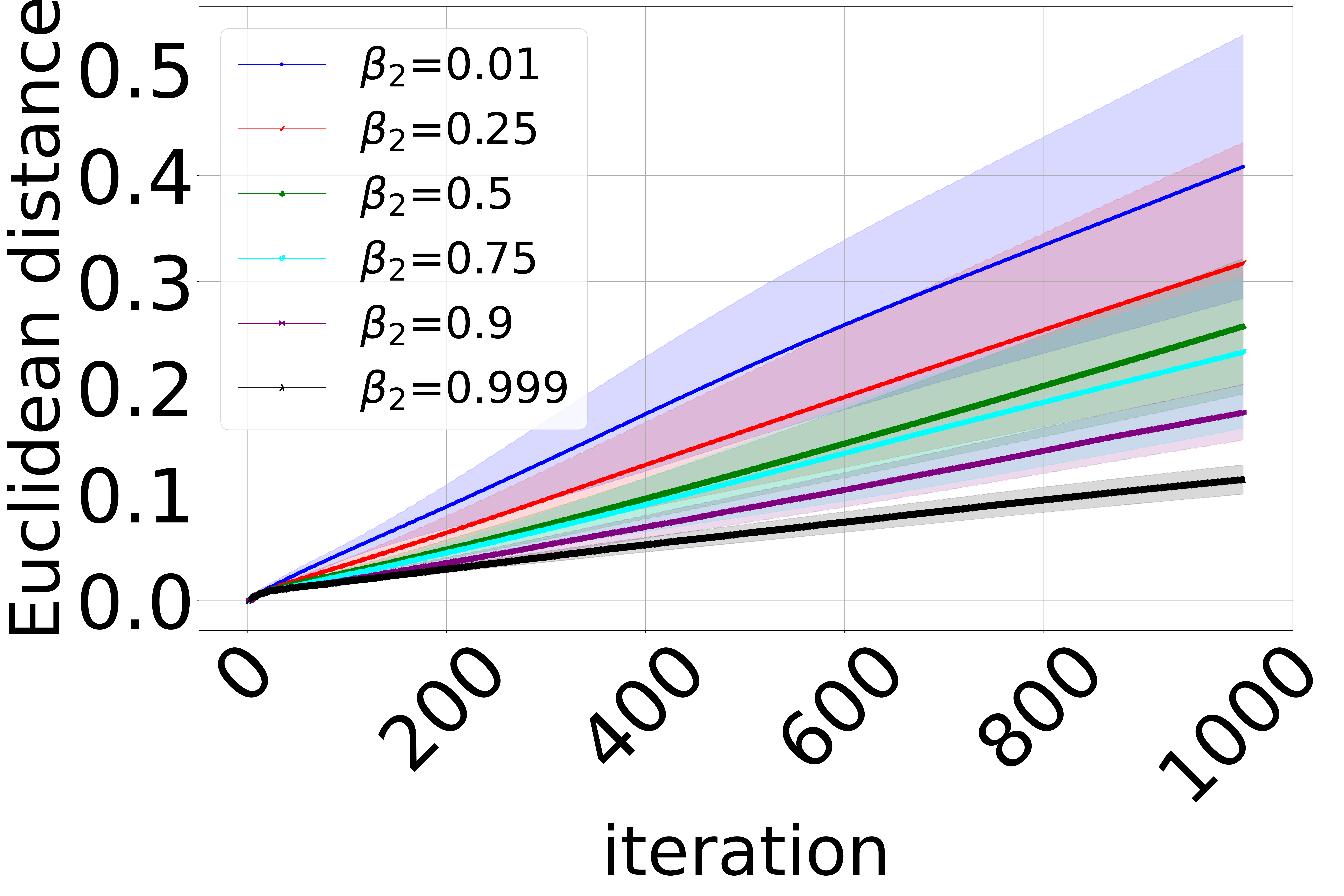}  
  \label{fig:sub-first}
\end{subfigure}
\begin{subfigure}{.24\textwidth}
  \centering
  \includegraphics[width=0.89\linewidth]{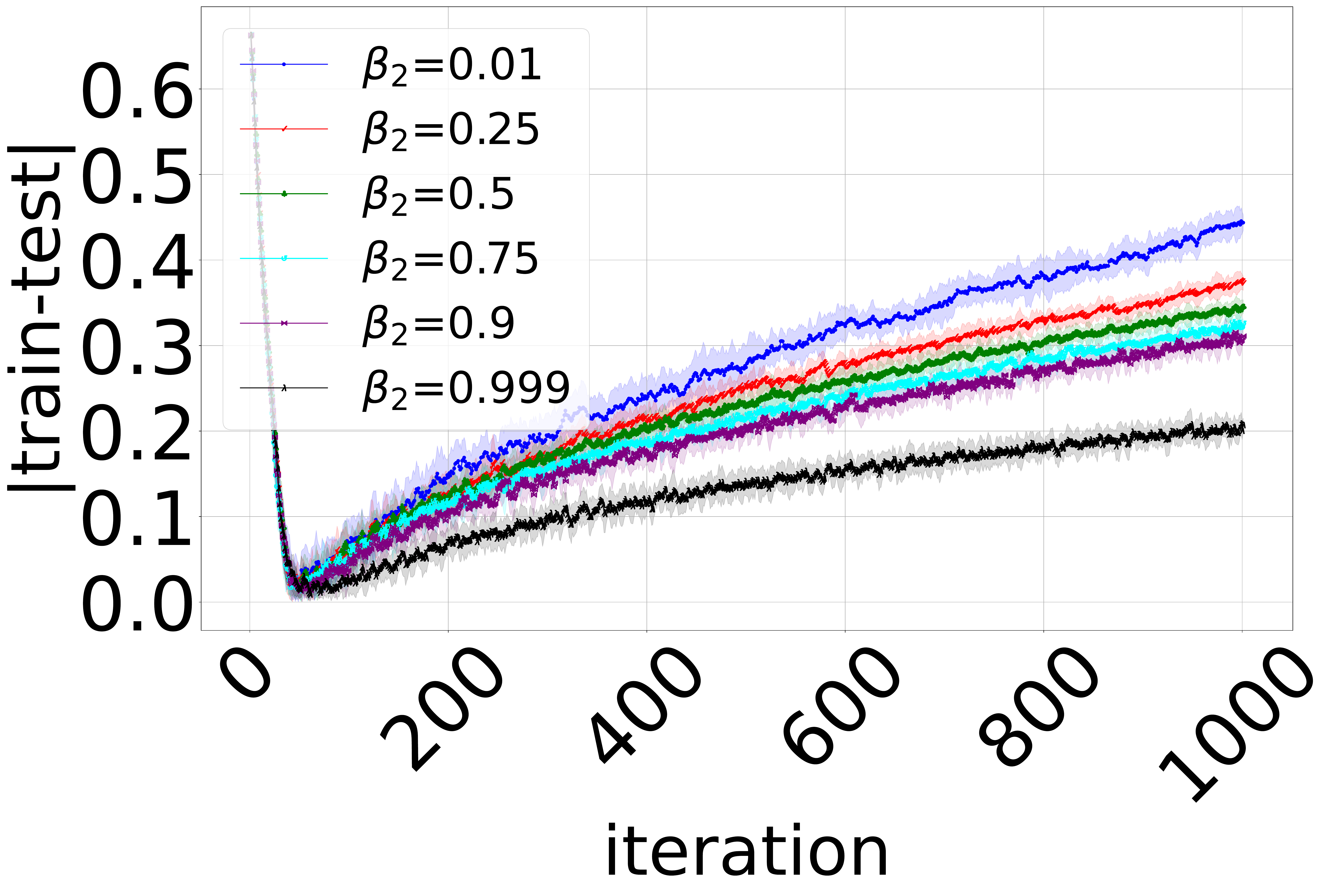}  
  \label{fig:sub-second}
\end{subfigure}
\begin{subfigure}{.24\textwidth}
  \centering
  \includegraphics[width=0.89\linewidth]{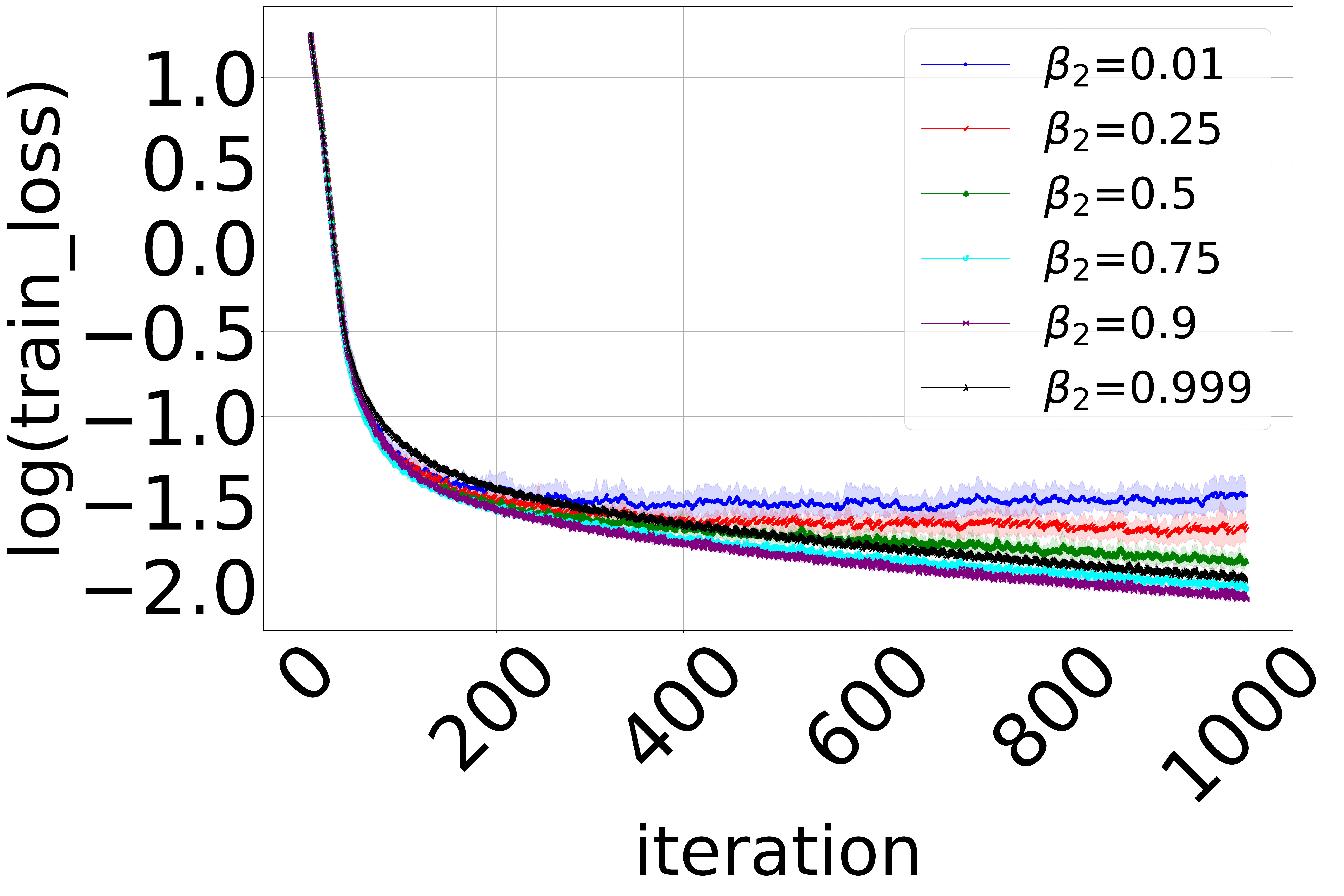}  
  \label{fig:sub-first}
\end{subfigure}
\begin{subfigure}{.24\textwidth}
  \centering
  \includegraphics[width=0.89\linewidth]{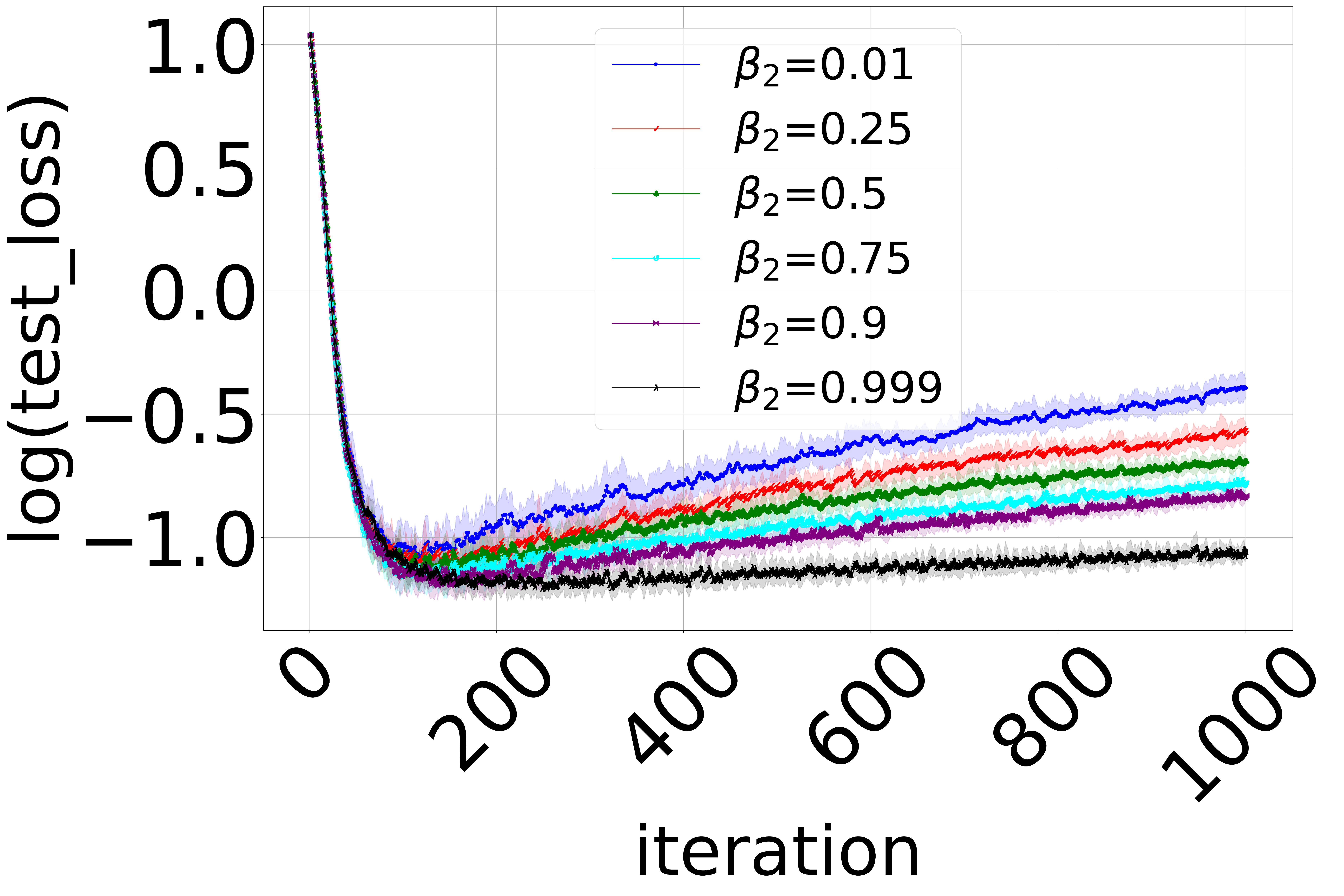}  
  \label{fig:sub-second}
\end{subfigure}
\\
\begin{subfigure}{.24\textwidth}
  \centering
  \includegraphics[width=0.89\linewidth]{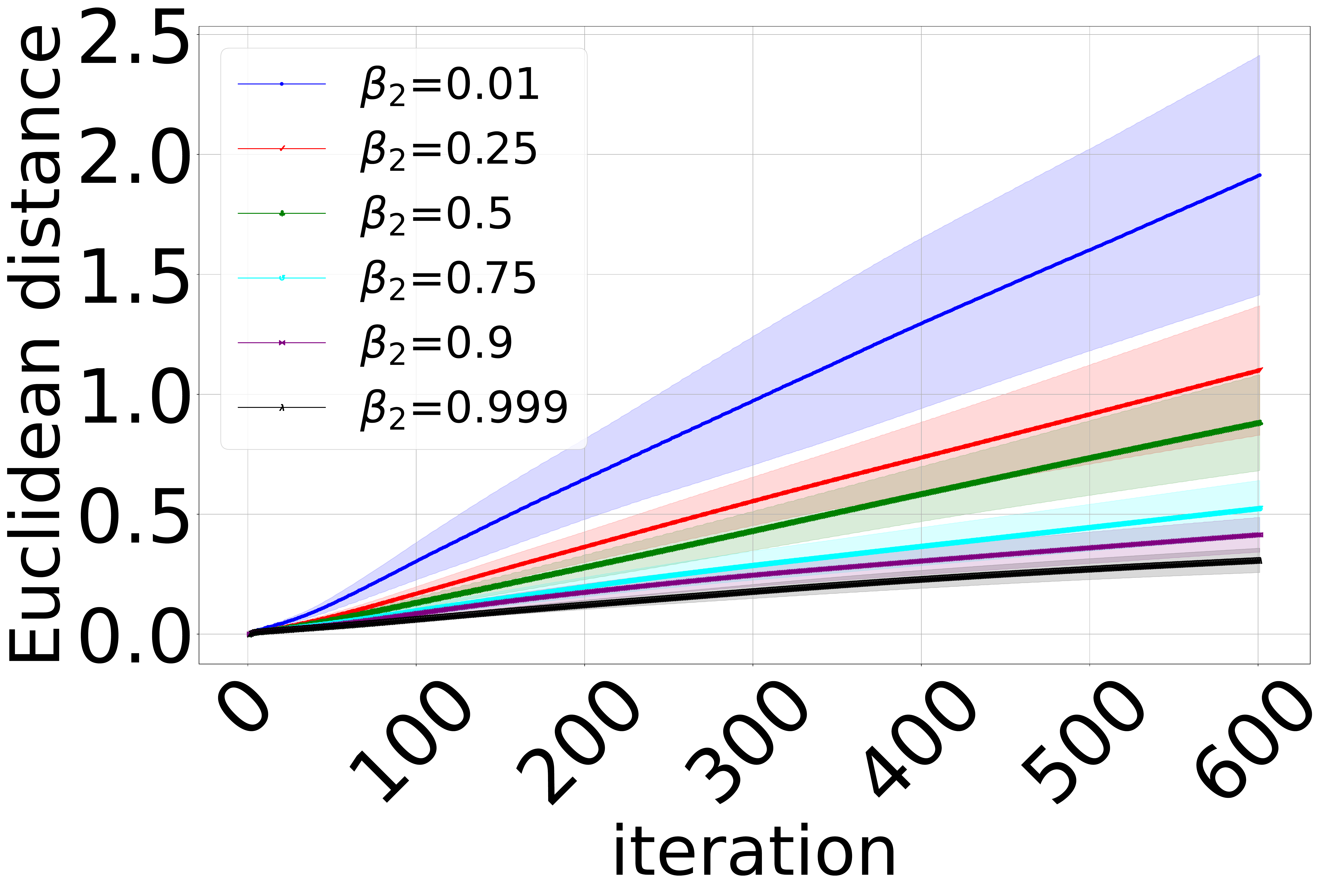}  
  \label{fig:sub-first}
\end{subfigure}
\begin{subfigure}{.24\textwidth}
  \centering
  \includegraphics[width=0.89\linewidth]{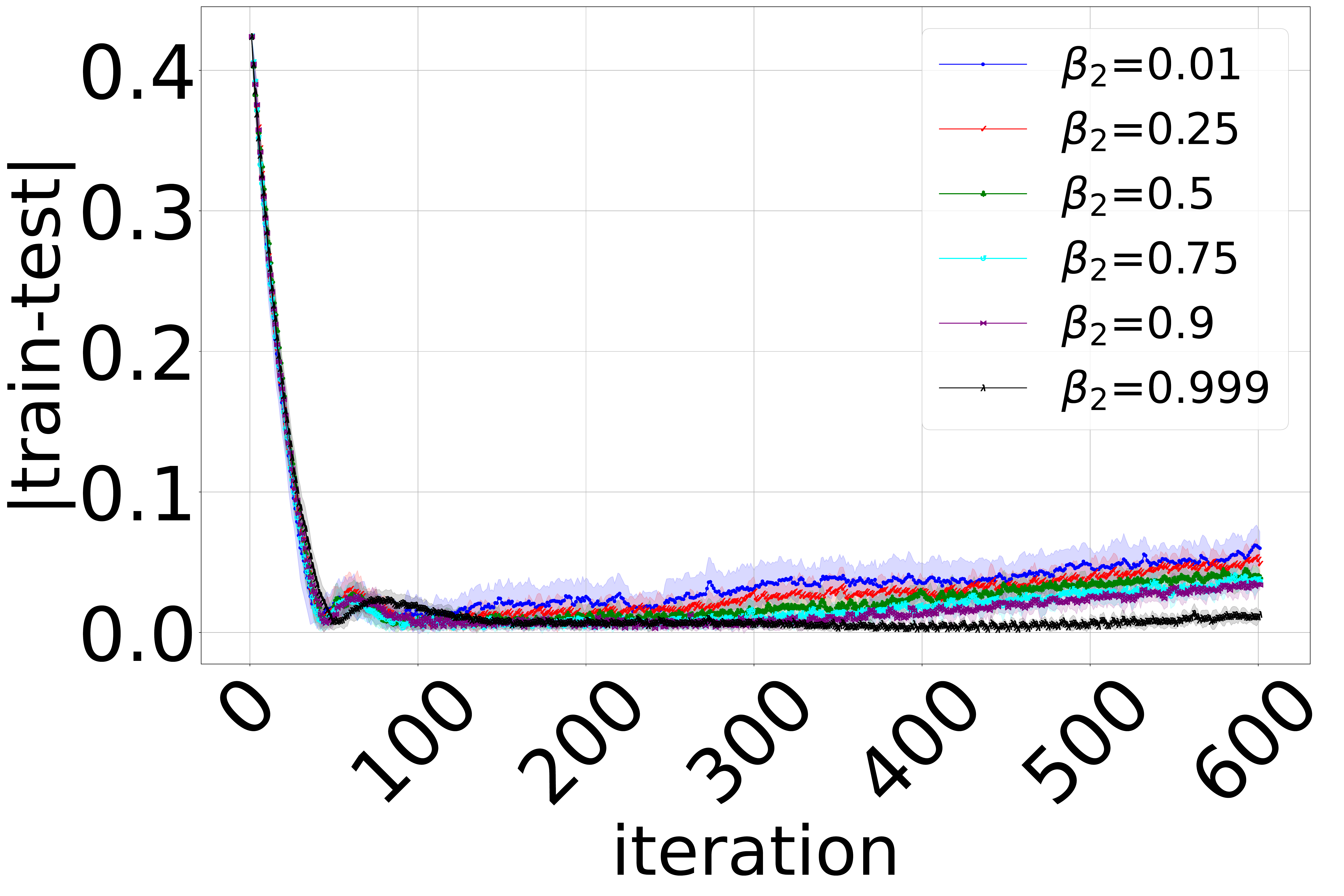}  
  \label{fig:sub-second}
\end{subfigure}
\begin{subfigure}{.24\textwidth}
  \centering
  \includegraphics[width=0.89\linewidth]{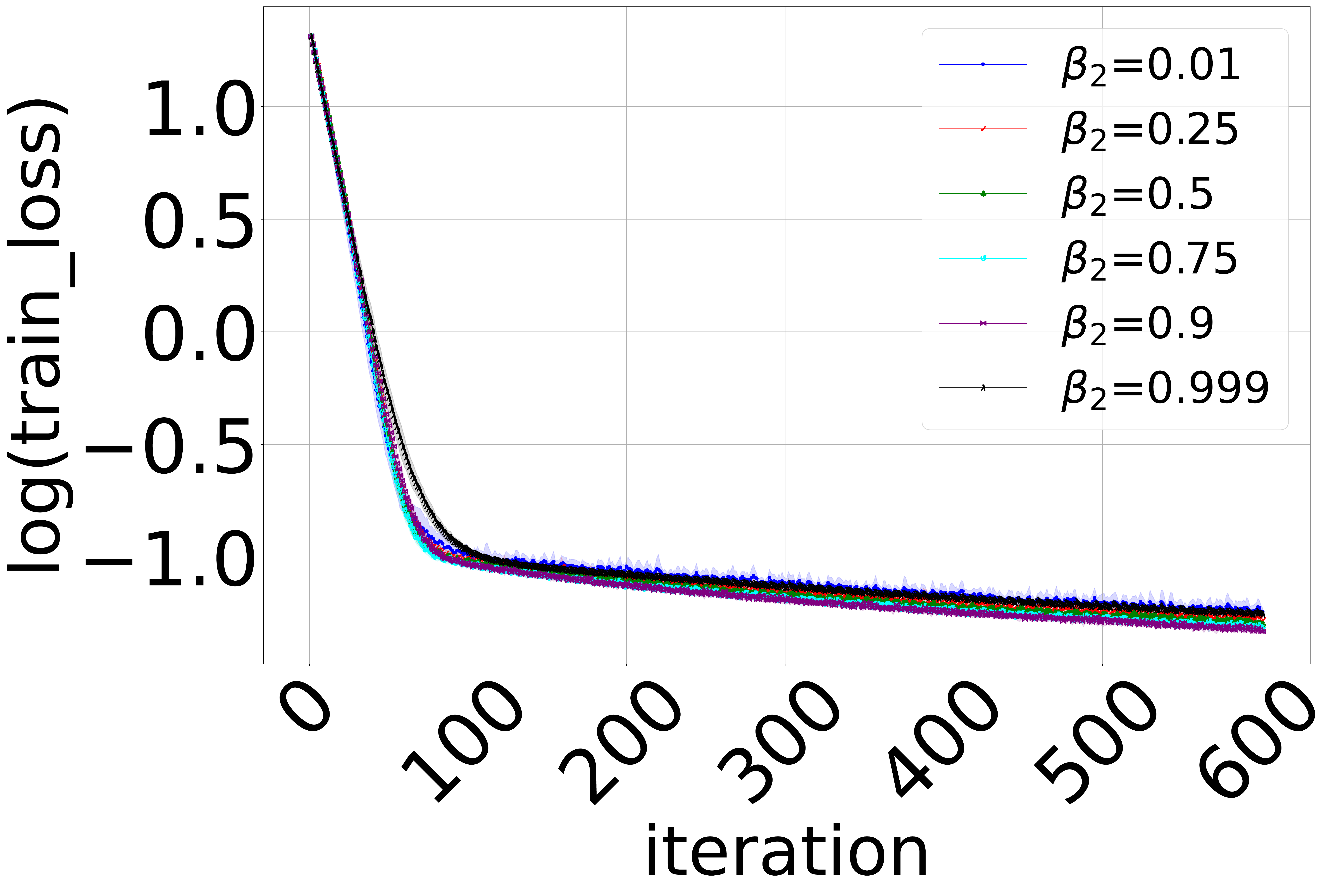}  
  \label{fig:sub-first}
\end{subfigure}
\begin{subfigure}{.24\textwidth}
  \centering
  \includegraphics[width=0.89\linewidth]{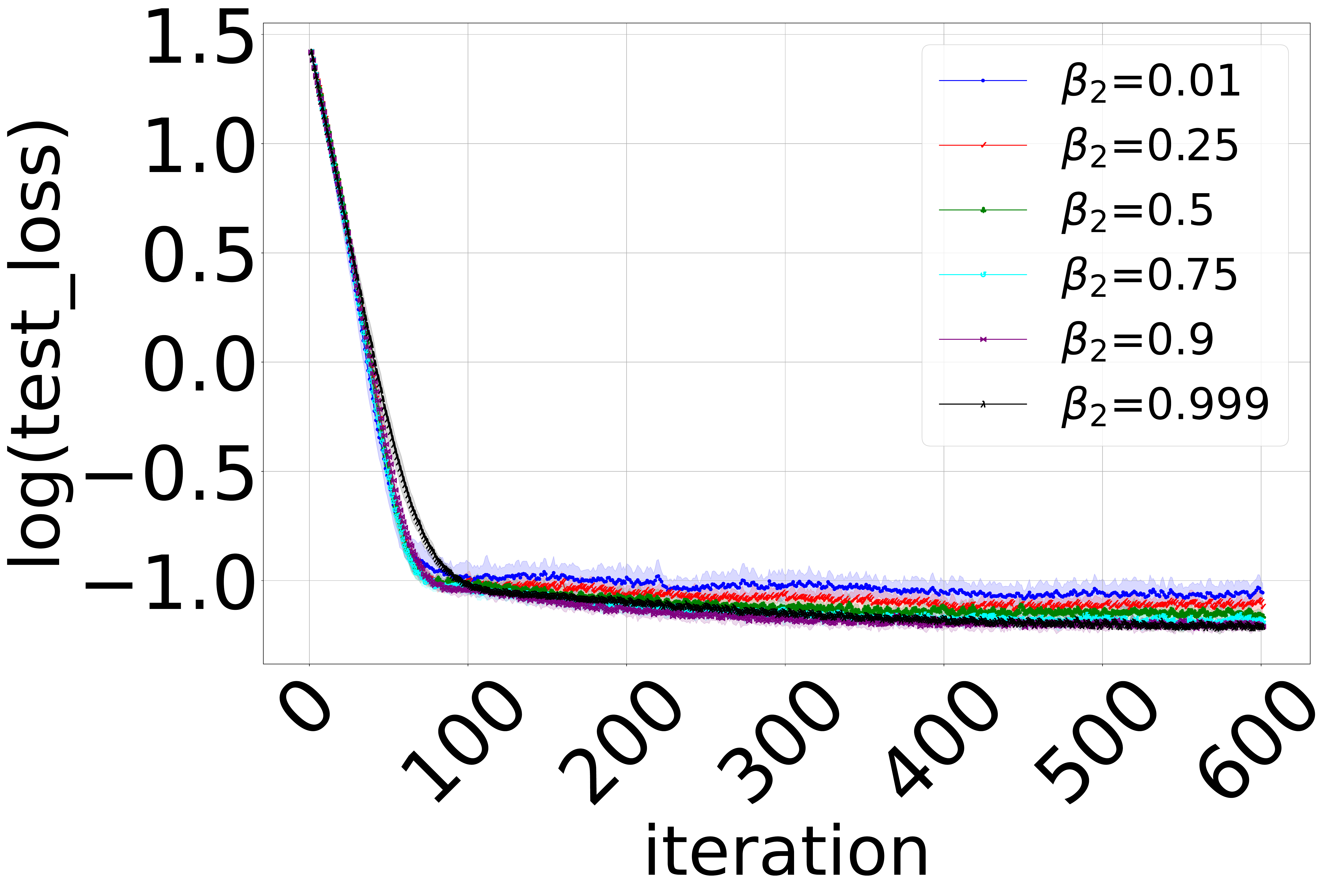}  
  \label{fig:sub-second}
\end{subfigure}
\caption{Performances when solving the CLS (top) and REG (bottom) using Adam with different $\beta_2$. The bigger $\beta_2$ is, the more stable the model gets. \textit{Left to right}: parameter distance, generalization error, train loss and test loss. The losses are plotted in log space. } 
\label{fig:adam_stability}
\vspace*{-2mm}
\end{figure*}


From Figure \ref{fig:adam_stability}, 
we can see that the stability of Adam depends on the parameter $\beta_2$, in which the bigger the $\beta_2$ is, the more stable the algorithm is. The generalization error also exhibits the same pattern: bigger $\beta_2$ gives smaller generalization error. This observation aligns with Corollary~\ref{coro} although the bound there seems to be loose.


\textbf{Weight decay helps improve the stability and generalization of Adam.} 
In this experiment, we study how weight decay affects the stability and generalization of Adam in each of CLS and REG tasks, in which the weight decay is set at $5.0$ and $1.0$, respectively.
For Adam with weight decay, we use AdamW~\cite{loshchilov2017decoupled}, and both optimizers use $\beta_1= 0, \beta_2 = 0.999$.
    
\begin{figure*}[ht]
\vspace*{-2mm}
\begin{subfigure}{.24\textwidth}
  \centering
  \includegraphics[width=0.89\linewidth]{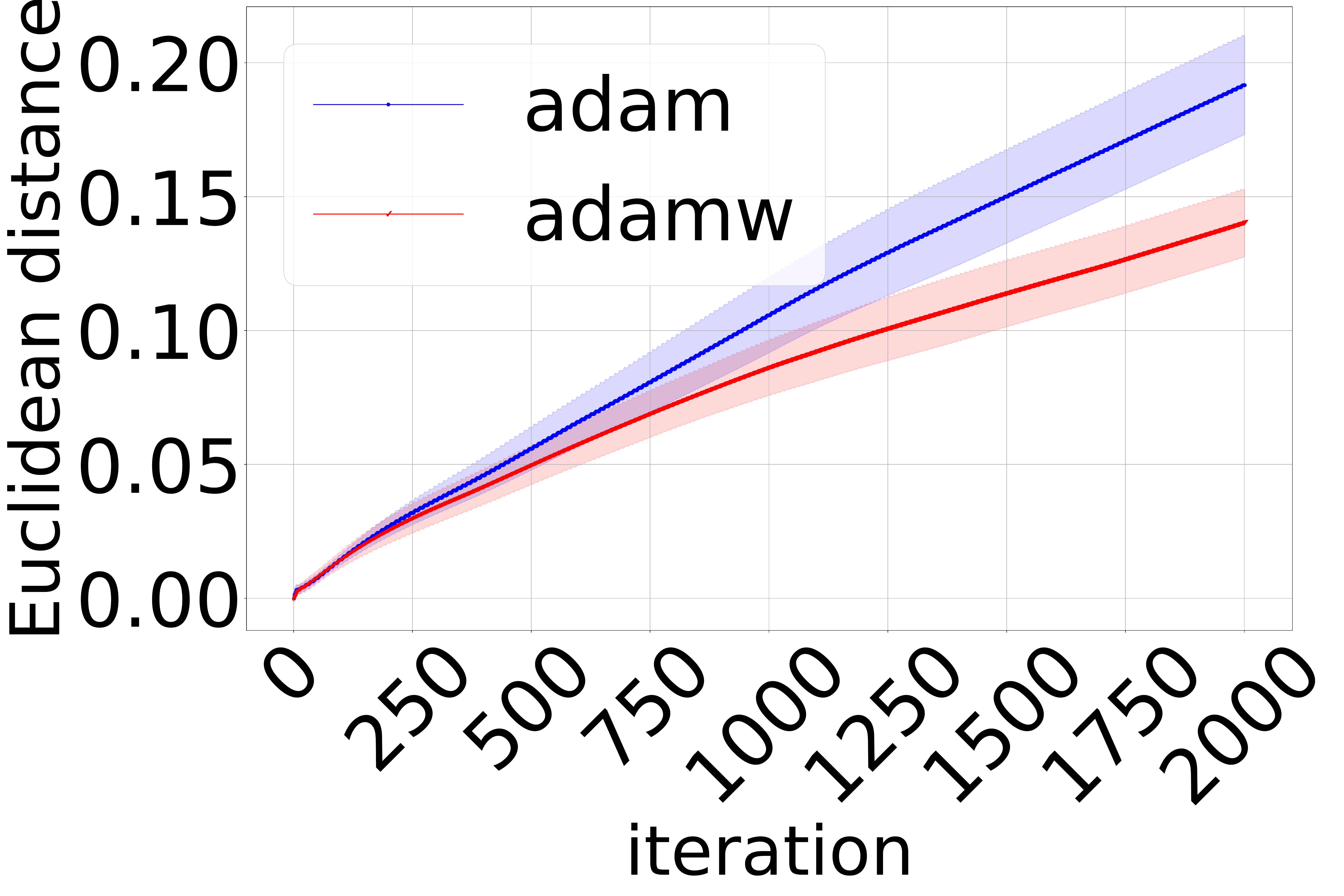}  
  \label{fig:sub-first}
\end{subfigure}
\begin{subfigure}{.24\textwidth}
  \centering
  \includegraphics[width=0.89\linewidth]{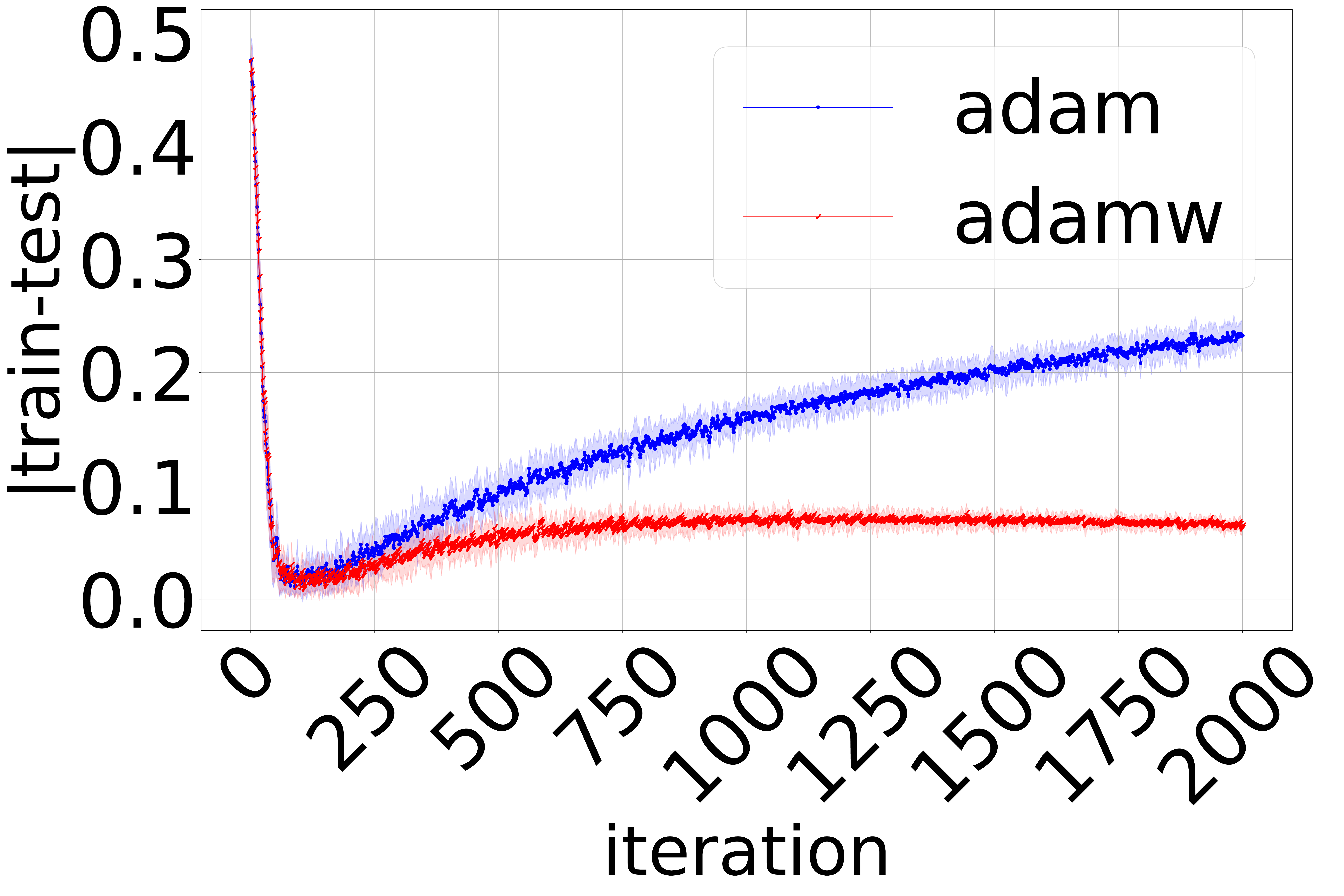}  
  \label{fig:sub-second}
\end{subfigure}
\begin{subfigure}{.24\textwidth}
  \centering
  \includegraphics[width=0.89\linewidth]{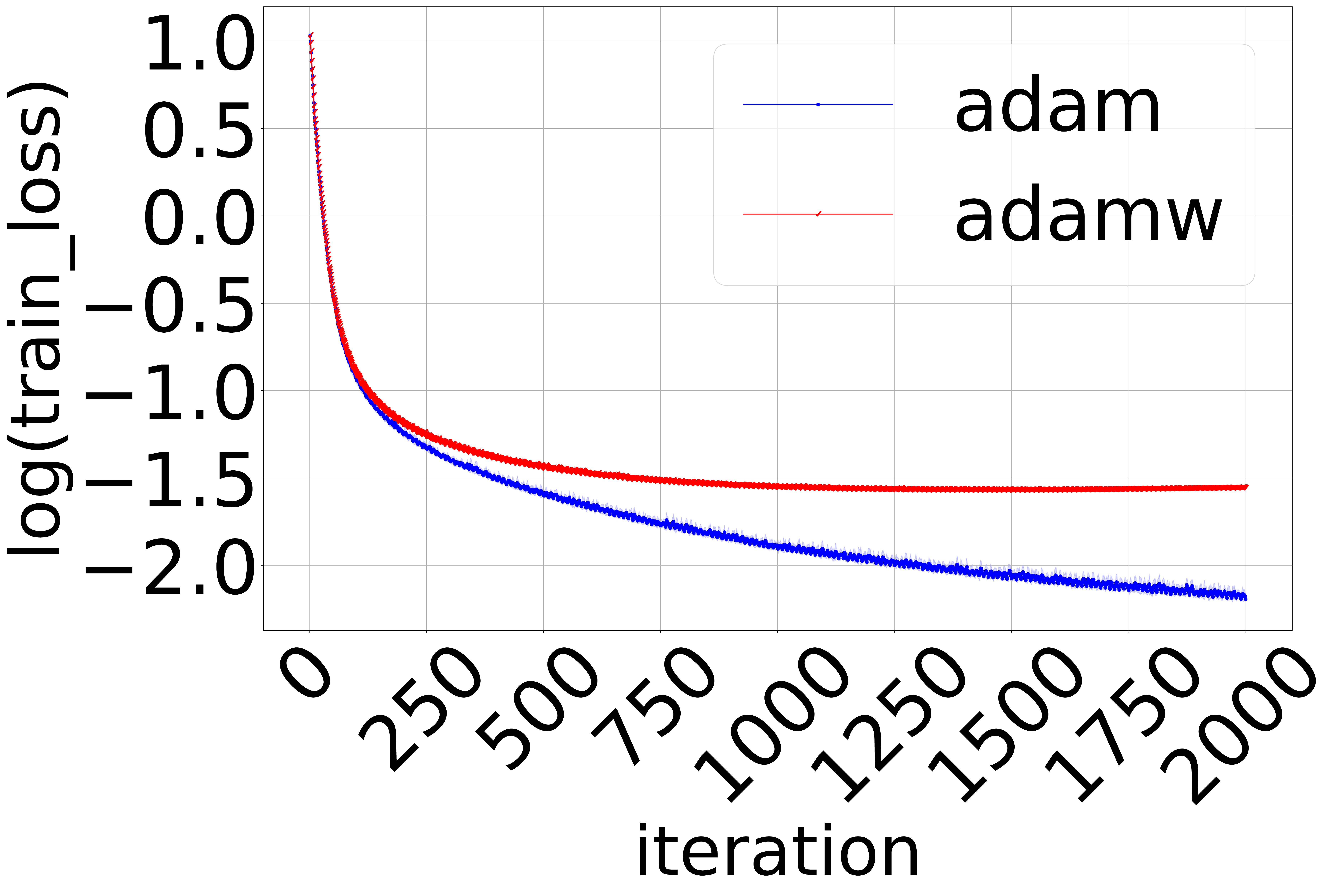}  
  \label{fig:sub-first}
\end{subfigure}
\begin{subfigure}{.24\textwidth}
  \centering
  \includegraphics[width=0.89\linewidth]{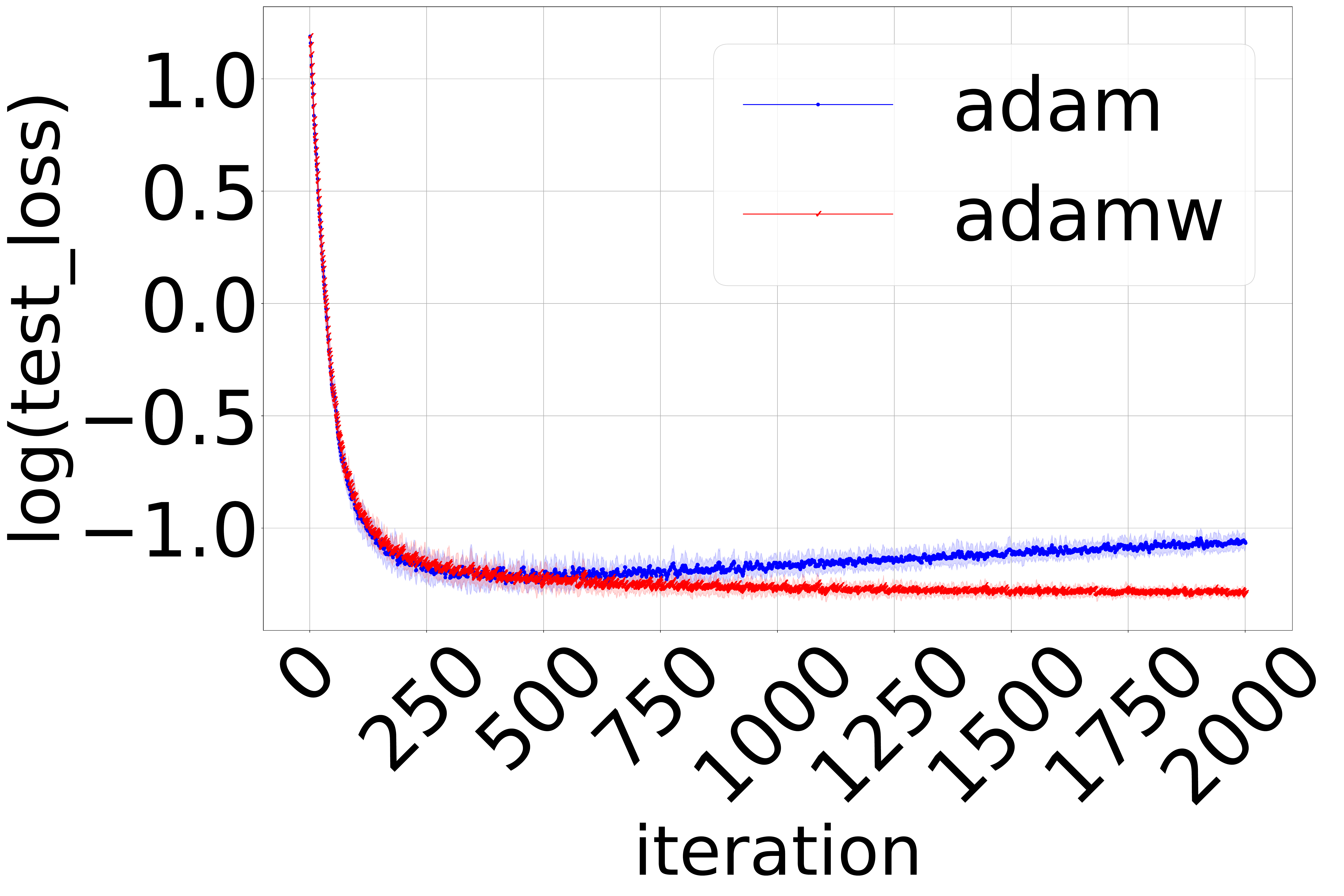}  
  \label{fig:sub-second}
\end{subfigure}
\\
\begin{subfigure}{.24\textwidth}
  \centering
  \includegraphics[width=0.89\linewidth]{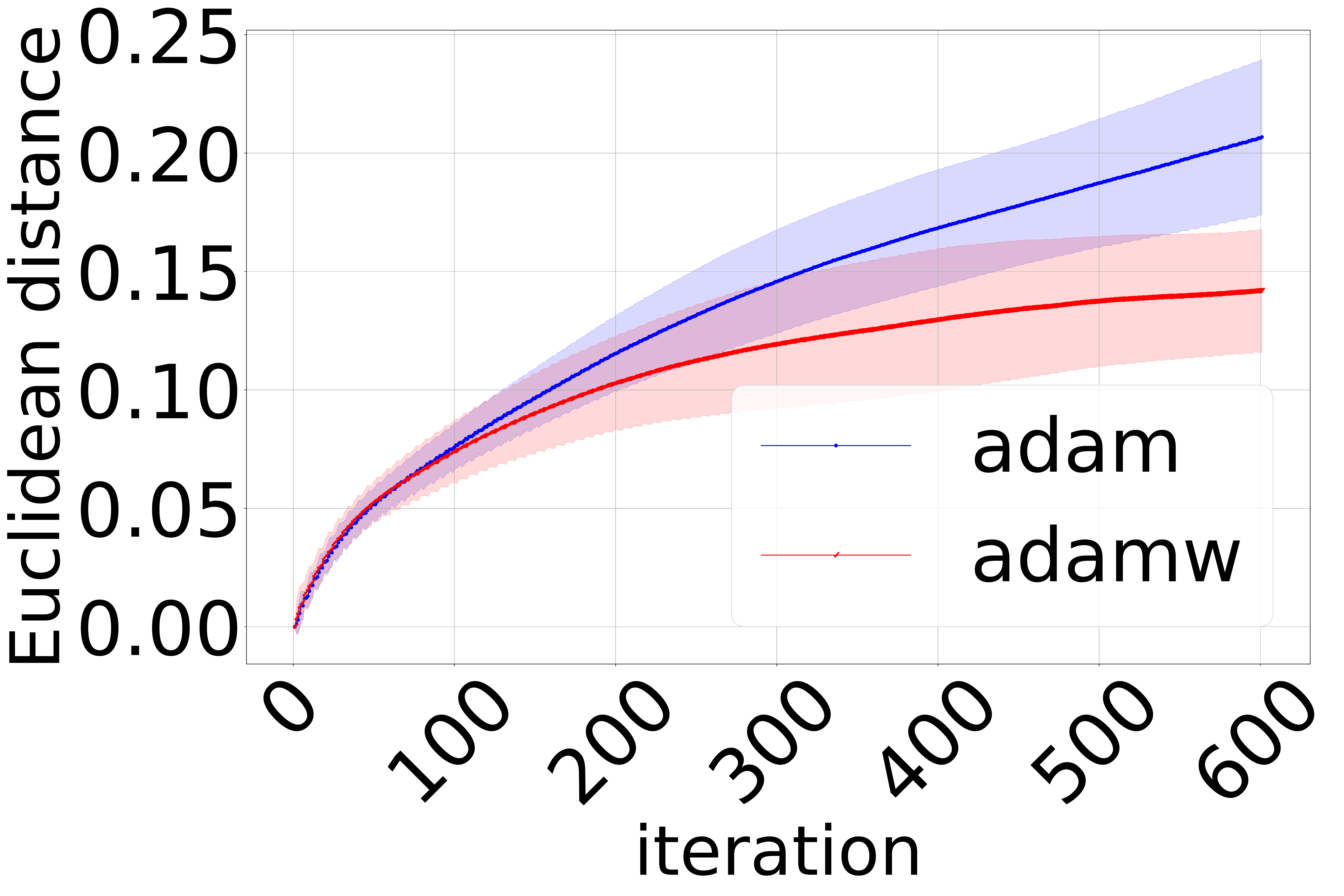}  
  \label{fig:sub-first}
\end{subfigure}
\begin{subfigure}{.24\textwidth}
  \centering
  \includegraphics[width=0.89\linewidth]{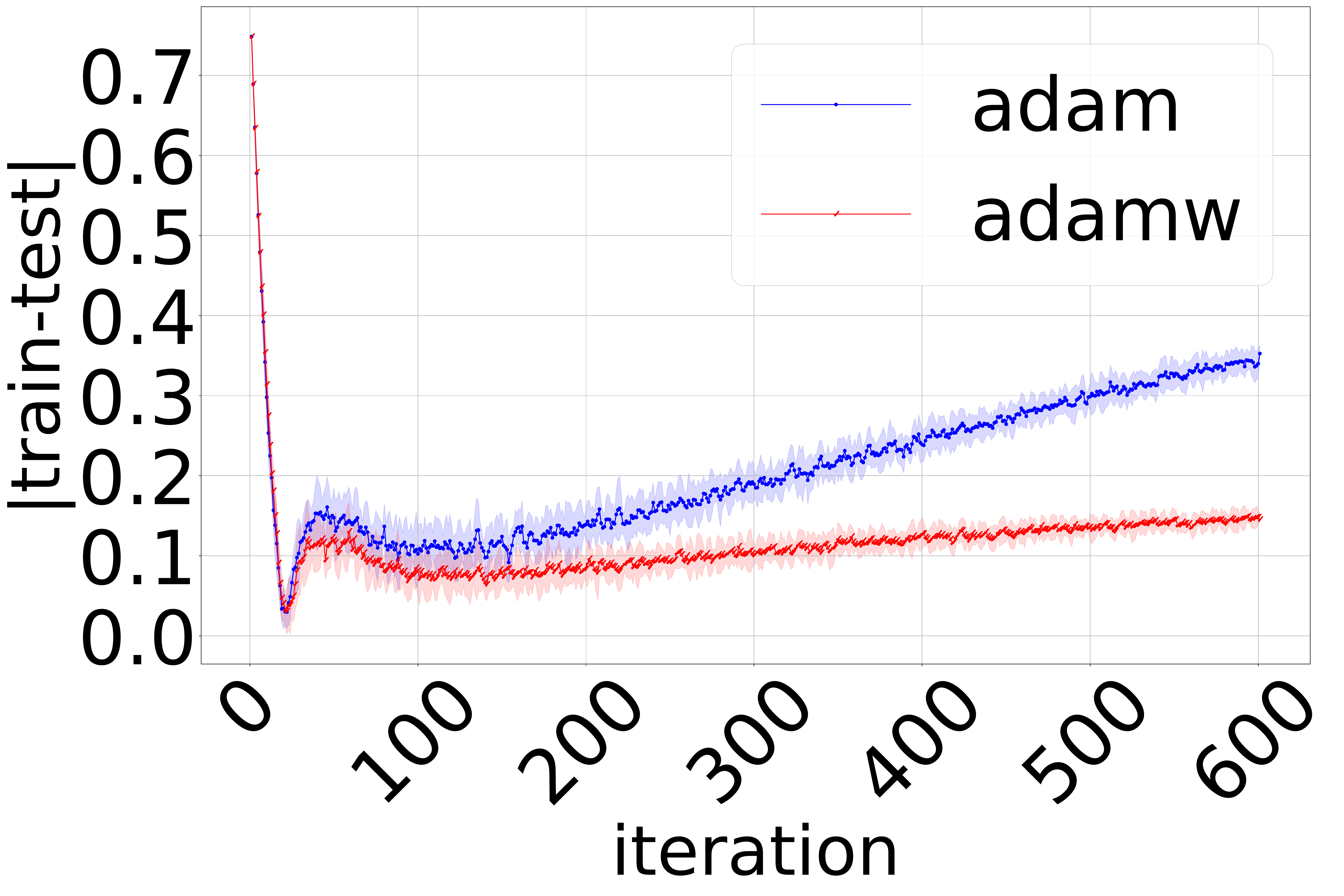}  
  \label{fig:sub-second}
\end{subfigure}
\begin{subfigure}{.24\textwidth}
  \centering
  \includegraphics[width=0.89\linewidth]{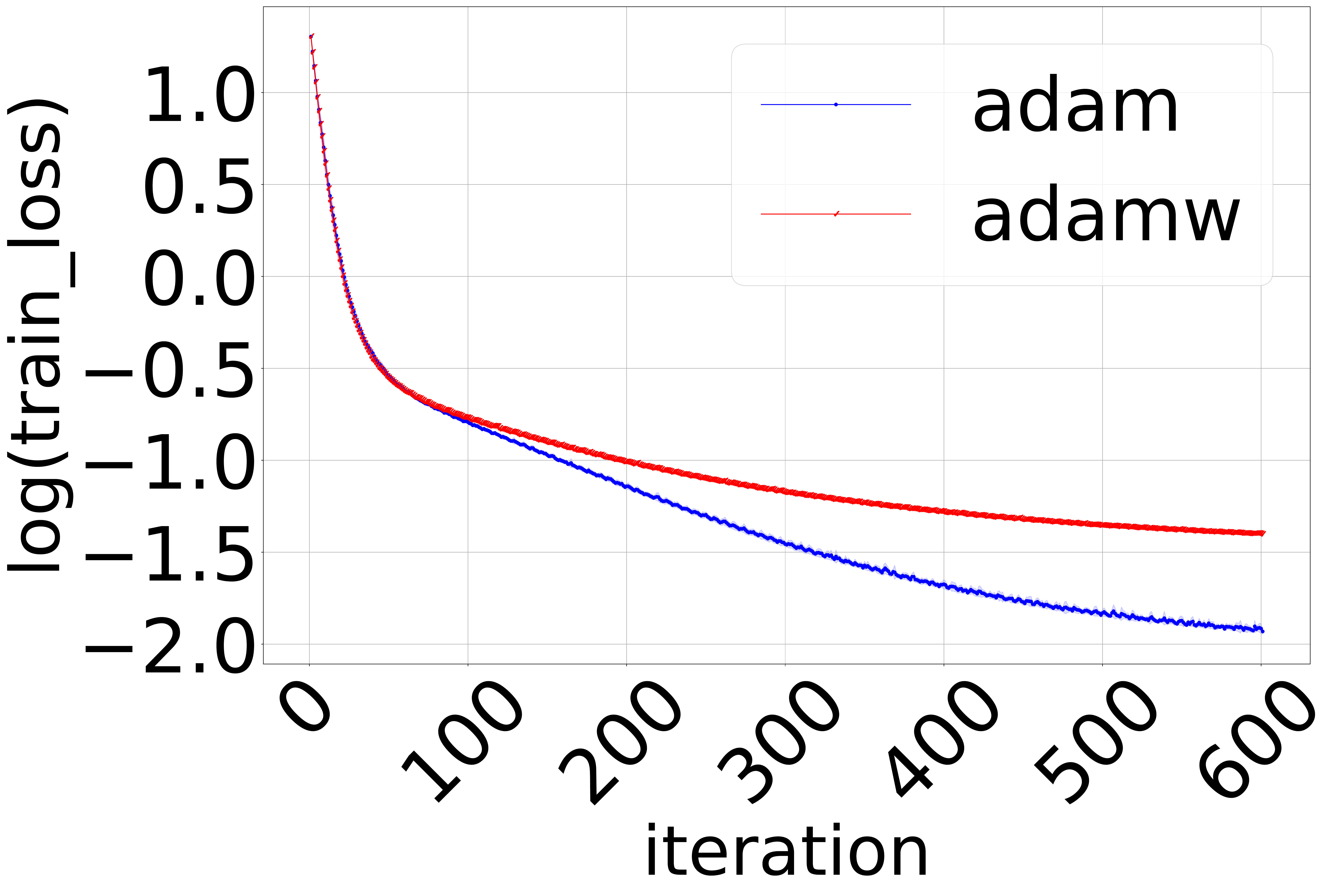}  
  \label{fig:sub-first}
\end{subfigure}
\begin{subfigure}{.24\textwidth}
  \centering
  \includegraphics[width=0.89\linewidth]{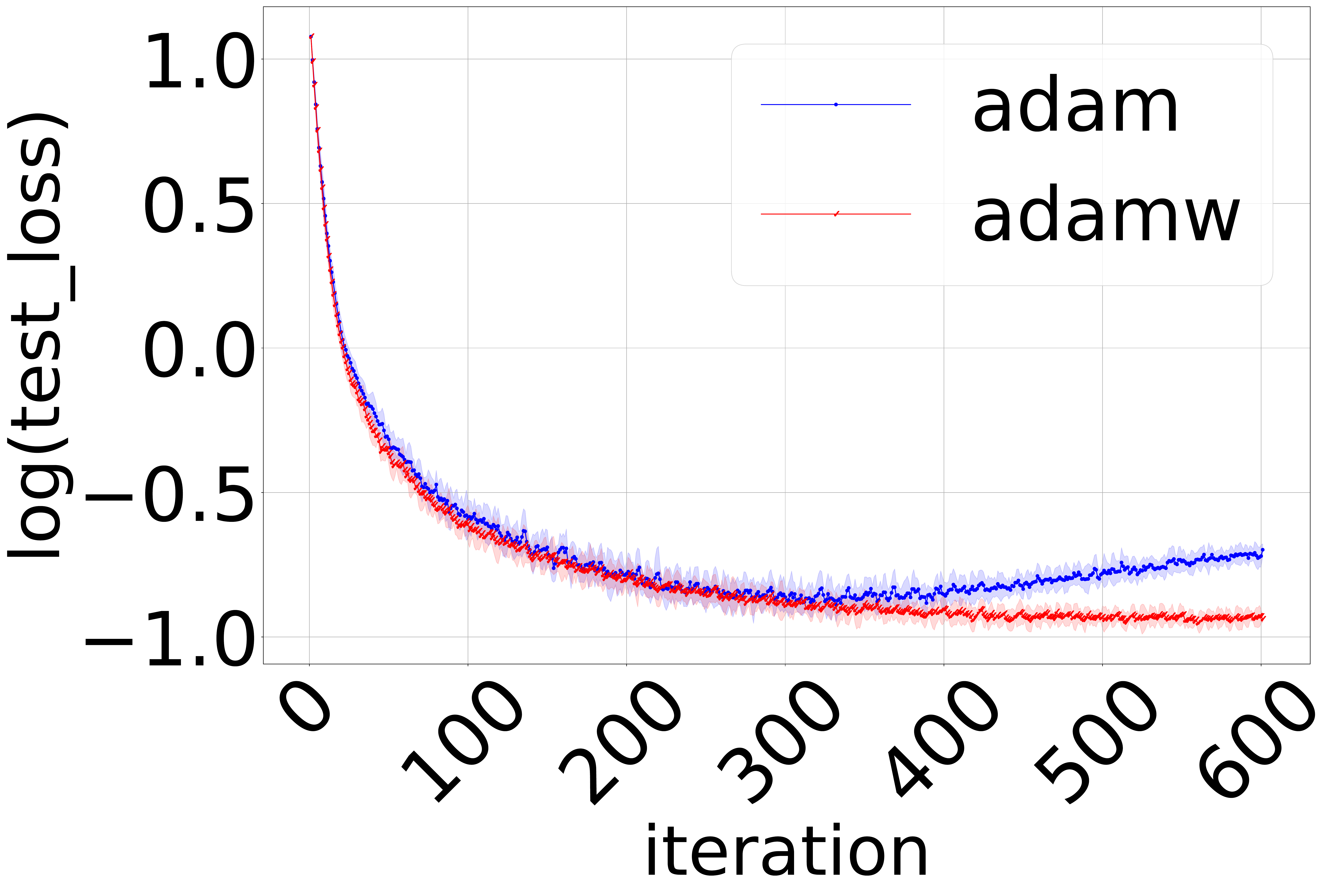}  
  \label{fig:sub-second}
\end{subfigure}
\caption{Performance comparison between Adam and AdamW when solving the CLS (top) and REG (bottom). Less overfitting happens to AdamW than to Adam. 
\textit{Left to right}: parameter distance, generalization error, train loss and test loss. The losses are plotted in log space. } 
\label{fig:adamvsadamw}
\vspace*{-4mm}
\end{figure*}

As we can see from Figure~\ref{fig:adamvsadamw}, AdamW has lower parameter distances and generalization errors comparing to those of Adam in both classification and regression tasks. From the loss patterns, we can also see that the model trained with Adam overfits the train data faster than the model trained with AdamW. These observations show that weight decay helps improve the stability and generalization of Adam, which validates the theory in Section~\ref{sec:stability}.

\subsection{Real data}
We solve the image classification task on Cifar10~\citep{krizhevsky2009learning} dataset that has 50,000 train and 10,000 test color images of the same resolution 3$\times$32$\times$32. In terms of data augmentation, we only apply mean normalization for both train and test data. For model architecture, we use VGG11~\citep{simonyan2014very}, of which the weights are initialized the same for all configurations and later trained with 60 epochs. Due to space limitation, additional information is moved to Appendix~\ref{appx:real}. 

\begin{figure*}[ht]
\vspace*{-2mm}
\begin{subfigure}{.19\textwidth}
  \centering
  \hspace*{-1cm}
  \includegraphics[width=\linewidth]{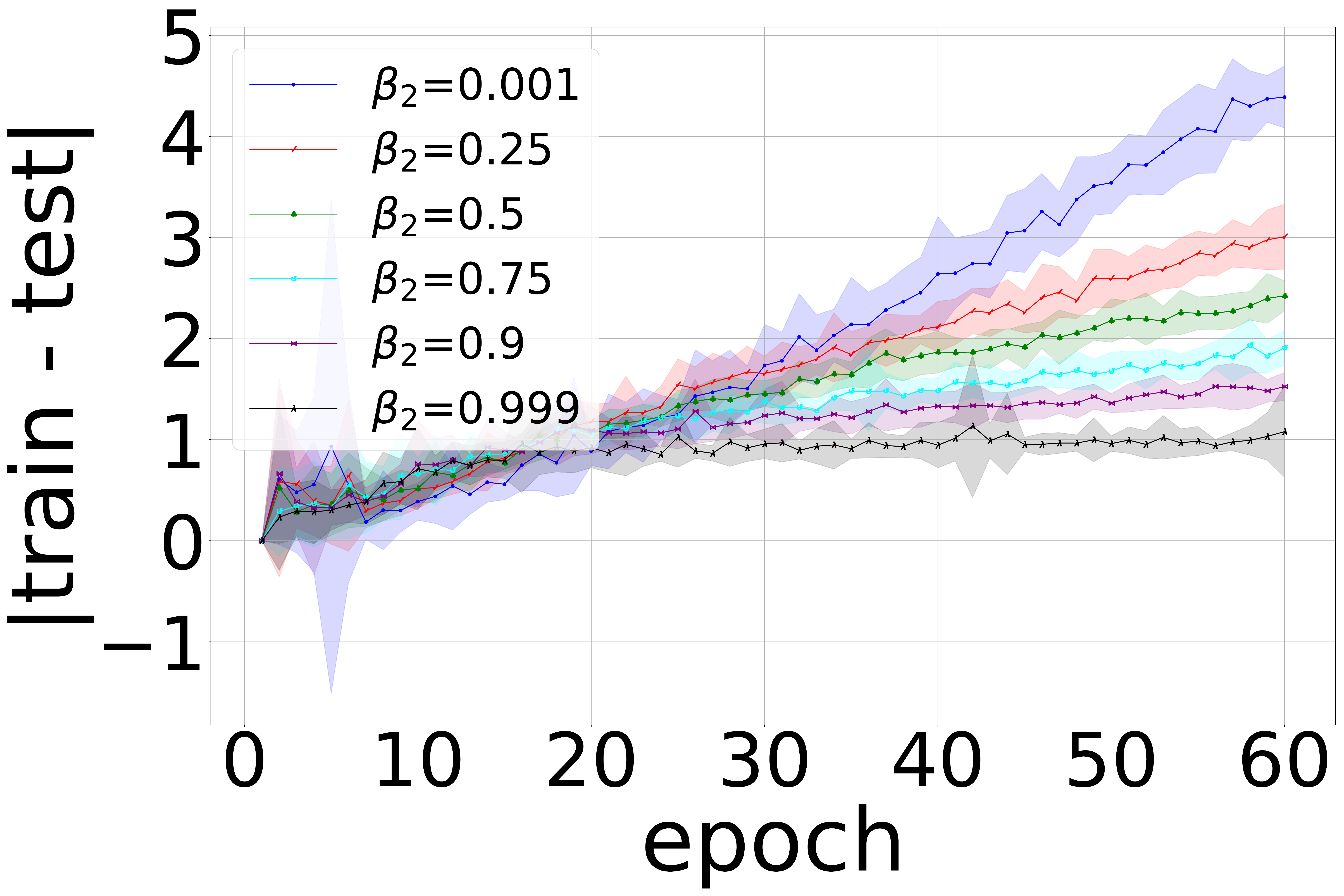}  
  \label{fig:sub-first}
\end{subfigure}
\begin{subfigure}{.2\textwidth}
  \centering
  \hspace*{-0.7cm}
  \includegraphics[width=\linewidth]{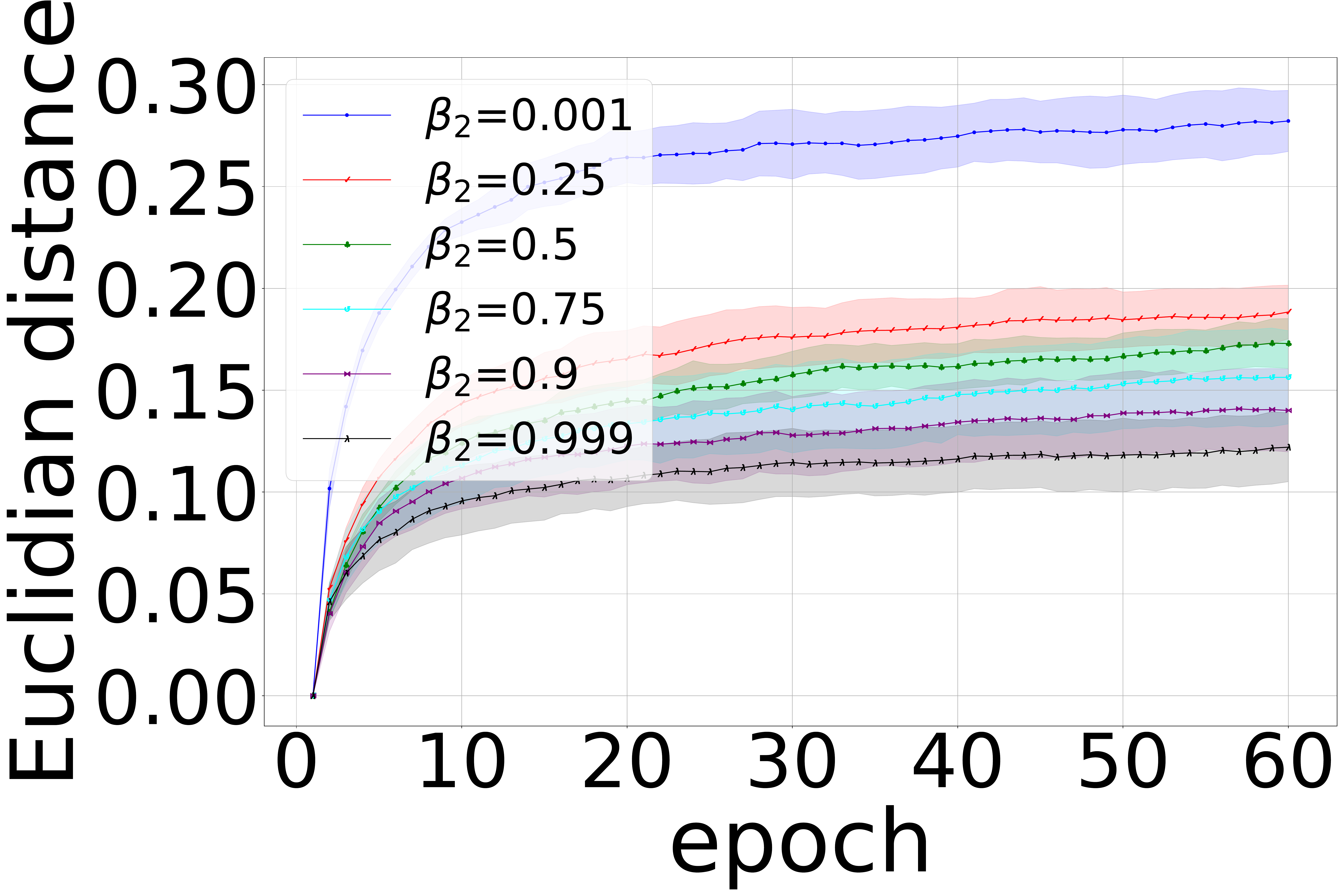}  
  \label{fig:sub-second}
\end{subfigure}
\begin{subfigure}{.2\textwidth}
  \centering
  \hspace*{-0.45cm}
  \includegraphics[width=\linewidth]{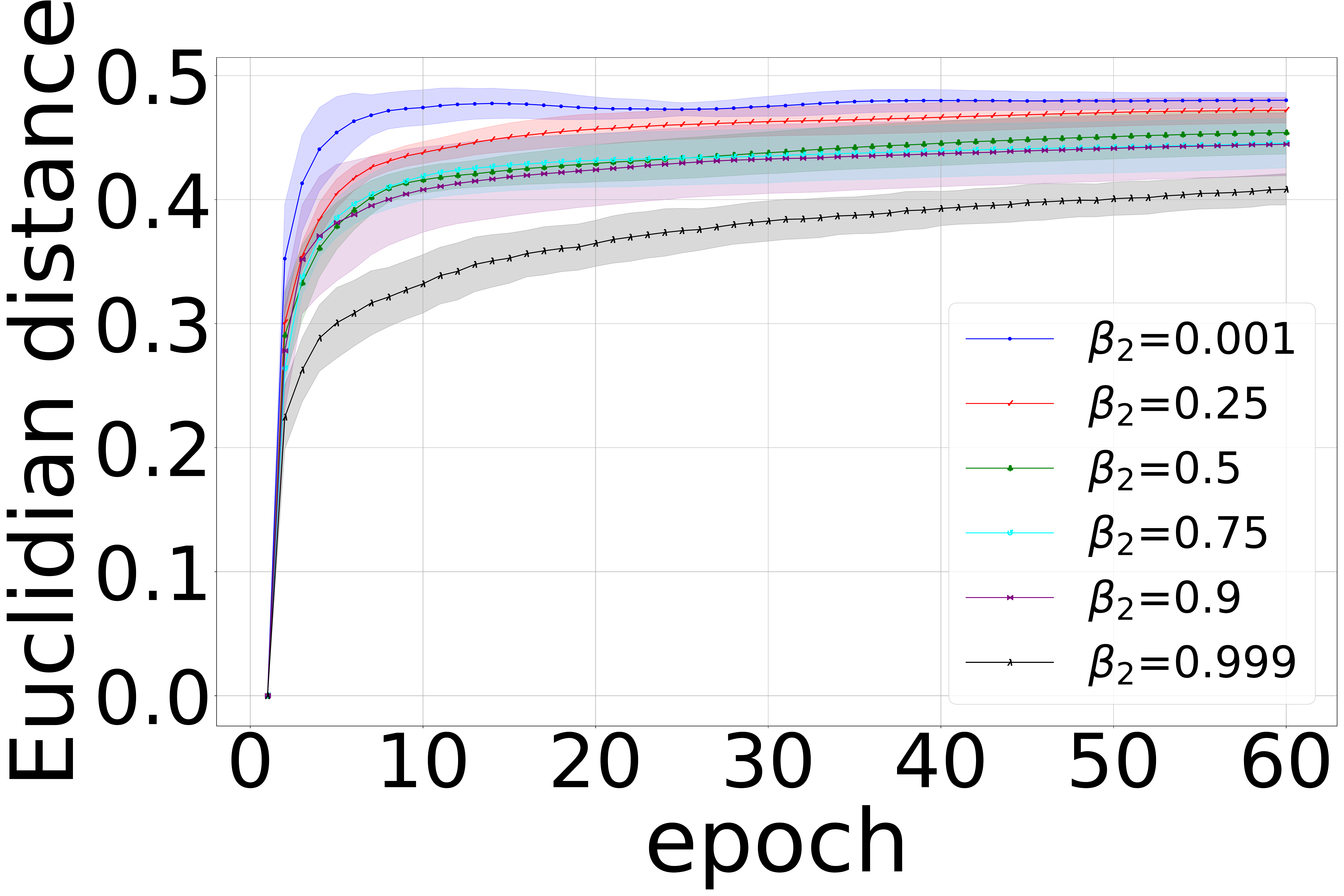}  
  \hspace*{-0.45cm}
  \label{fig:sub-second}
\end{subfigure}
\begin{subfigure}{.19\textwidth}
  \centering
  \includegraphics[width=0.95\linewidth]{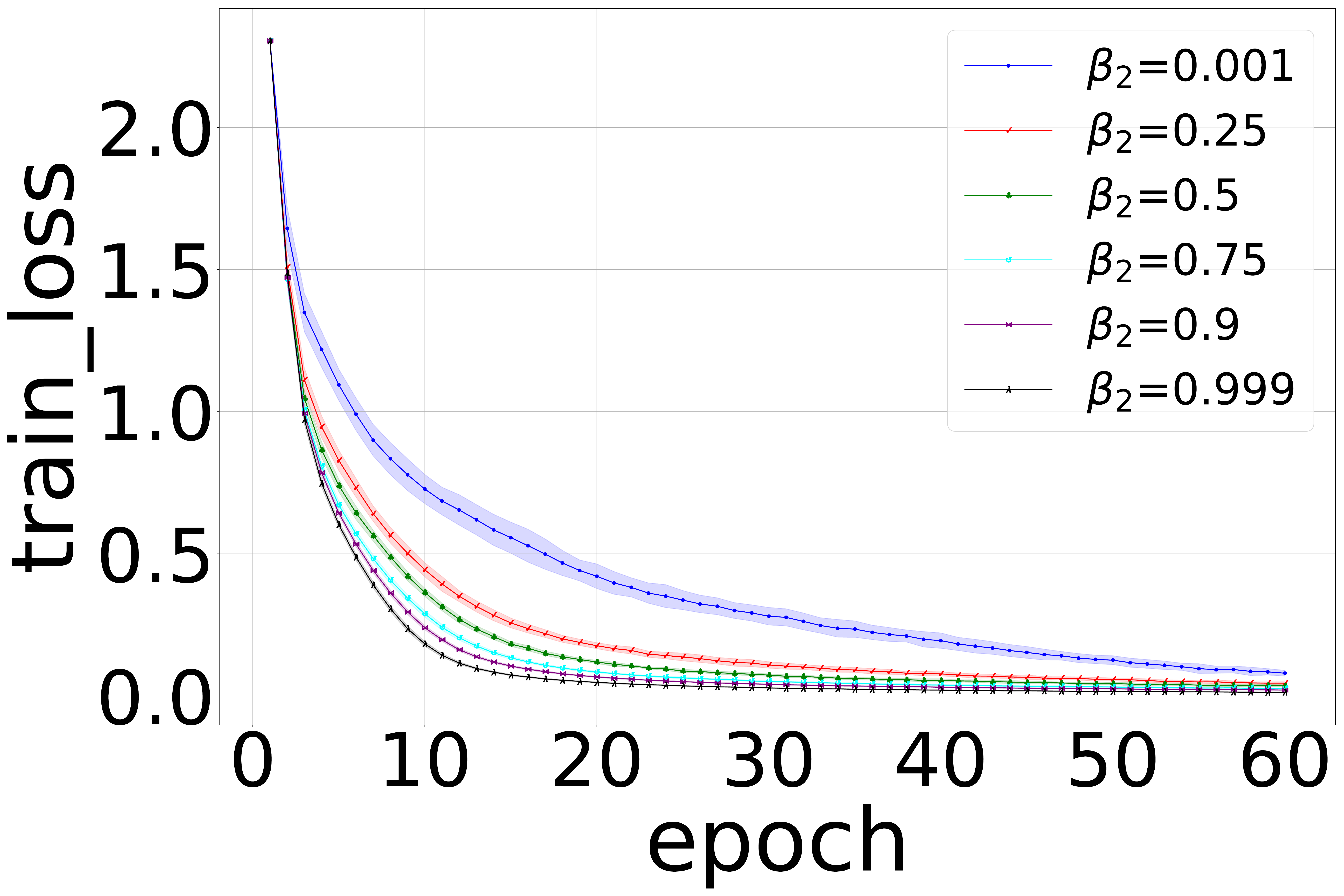}
    \hspace*{-0.7cm}
  \label{fig:sub-first}
\end{subfigure}
\begin{subfigure}{.19\textwidth}
  \centering
  \includegraphics[width=0.95\linewidth]{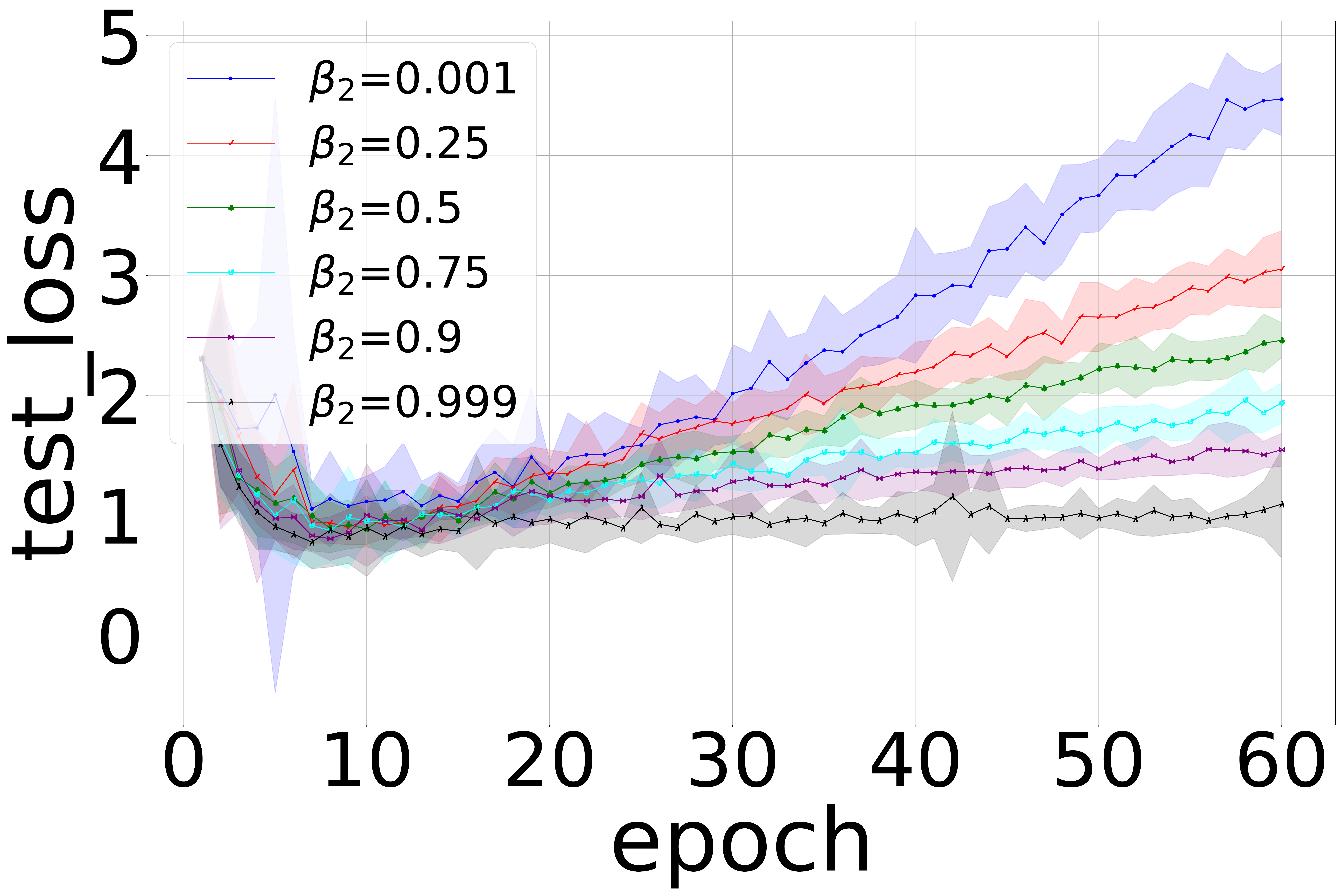}  
  \hspace*{-1cm}
  \label{fig:sub-second}
\end{subfigure}
\caption{Performances of models on Cifar10 dataset using VGG11. The models get better as $\beta_2$ increases from 0 to 1. 
\textit{Left to right}: generalization error (loss), parameter distance for the first convolution layer, the last fully-connected layer, train loss, and test loss.} 
\label{fig:cifar10}
\vspace*{-2mm}
\end{figure*}

As seen in Figure~\ref{fig:cifar10}, the results for Cifar10 agree with that of synthetic data. In detail, the more $\beta_2$ increases, the better the model is in terms of all metrics measured: parameter distance, generalization error, and both losses. In summary, both empirical experiments with real datasets support our theoretical results, in that the bigger $\beta_2$ is, the more stable and generalizable AOMs can get. 

\section{Conclusion}
This work establishes a novel theoretical result for stability and generalization for adaptive optimizers, which have been used intensively for, e.g., training neural networks. We show that AOMs depend heavily on a single $\beta_2$ value, providing helpful hints in tuning the training process, which is usually dependent on many hyperparameters. Our empirical experiments, which consider the applications of classification and regression, and employ synthetic to real-world datasets, reflect the results claimed in our theory. 
By building this theoretical framework with empirical validation, we hope to stimulate more work in this optimization area to help thoroughly answer the important question of choices: given a specific case, which optimizer should we use? 

\bibliography{main}
\bibliographystyle{unsrt}

\clearpage
\appendix

\section{Proof of Lemma~\ref{lem:decomposition}} \label{proof:decomposition}

In our analysis, $x_S$ is the output of an optimization algorithm at a particular time $T$ when the algorithm is used to minimized the empirical risk $F_S(x)$. For convenience, we also denote $x^*_S$ an empirical risk minimizer, i.e. $F_S(x^*_S) = \inf_{x\in\Omega} F_S(x)$. Note that $x_S$ and $x^*_S$ are in general not the same estimator, e.g, in deep learning we can only find local minimums due to non-convexity of the loss landscapes and early stopping is usually employed to terminate optimizers before they reach local minimums.  For simplicity, we assume that there is some $x^*\in\Omega$ such that $F(x^*) = \inf_{x\in\Omega}F(x)$. 

Recall that $\mathcal{R}(x_S) = F(x_S) - \inf_{x\in\Omega}F(x)$.  Given this set up, the excess risk decomposition is usually done in the following manner:
\begin{align}
  \mathcal{R}(x_S) = \underbrace{F(x_S) - F_S(x_S)}_{T_1} + \underbrace{F_S(x_S) - F_S(x^*)}_{T_2} +  
  \underbrace{F_S(x^*) -F(x^*)}_{T_3}.  
  \label{eq:risk}
\end{align}

Term $T_1$ is the generalization of error of the model $x_S$. Term $T_2$ is the empirical risk difference between the model $x_S$ and the population risk minimizer $x^*$. Term $T_3$ is the generalization error of $x^*$. By taking the expectation of (\ref{eq:risk}) with respect to the training set $S$ and notice that $\mathbb{E}[F_S(x^*) - F(x^*)] = 0$, we arrive at 
\begin{align}
 \mathbb{E}_S[\mathcal{R}(x_S)]\nonumber & = \mathbb{E}_S[F(x_S) - F_{S}(x_S)]
+  \mathbb{E}_S[F_S(x_S) - F_S(x^*)] \nonumber\\
& =  \mathbb{E}_S[F(x_S) - F_{S}(x_S)] 
+\mathbb{E}_S[F_S(x_S) - F_S(x_S^*)] 
+ \underbrace{\mathbb{E}_S[F_S(x_S^*) - F_S(x^*) ]}_{\le 0}\nonumber\\
& \le  \underbrace{\mathbb{E}_S[F(x_S) - F_{S}(x_S)]}_{\mathcal{E}_{gen}}
+ \underbrace{\mathbb{E}_S[F_S(x_S) - F_S(x_S^*)]}_{\mathcal{E}_{opt}},
\end{align}

where $\mathcal{E}_{gen}$ and $\mathcal{E}_{opt}$ are the expected generalization error and the expected optimization error respectively.

\section{Proof of Lemma~\ref{lem:stability}} 
\label{proof:stability}
Let $S$ and $S'$
 be two samples of size $n$ differing in only a single example, and let $z \in Z$ be an arbitrary example.  Consider running AOM on sample $S$ and $S'$, respectively. Let $A = \{\delta_{t_0} = 0\}$,  we then have that 
\begin{align*}
    \mathbb{E}|f(x_T;z) - f(x_T';z)|& = \mathbb{P}(A)\mathbb{E}[|f(x_T;z) - f(x_T';z)||A] + \mathbb{P}(A^c)\mathbb{E}[|f(x_T;z) - f(x_T';z)||A^c]\\
    &\le \mu\mathbb{E}[\|x_T - x_T'\||A] + 2\mathbb{P}(A^c)\sup_{x,z}|f(x;z)|\\
    &\le  \mu\mathbb{E}[\|x_T - x_T'\||A] + 2M\mathbb{P}(A^c).
\end{align*}
Let $i^*$ be the position where $S$ and $S'$ are different and denote the first time AOM uses the example $z_{i^*}$ by the random variable $I$. Since $\{I > i_0\}\subset\{\delta_{i_0} = 0\}$, we have that 
$$\mathbb{P}(A^c) = \mathbb{P}(\{\delta_{t_0} \not = 0\})\le \mathbb{P}(\{I \le t_0\})\le \frac{t_0}{n}.$$
\section{Proof of Theorem \ref{theorem2}} \label{appx:proof_theorem2}
\begin{proof}
For any $t > t_0$,  we have that 
\begin{align*}
\delta_t & = \left\|x_t - \eta_t\frac{\nabla f(x_t, z_{i_t})}{\sqrt{v_{t+1}+\epsilon}}- x_t' + \eta_t\frac{\nabla f(x_t', z_{i_t}')}{\sqrt{v'_{t+1}+\epsilon}}\right\|_2\\
& \le \|x_t - x_t'\|_2 + \eta_t\left\|\frac{\nabla f(x_t, z_{i_t})}{\sqrt{v_{t+1}+\epsilon}} - \frac{\nabla f(x_t', z_{i_t}')}{\sqrt{v'_{t+1}+\epsilon}}\right\|_2\\
& = \|x_t - x_t'\|_2 + \eta_t R_t,
\end{align*}
where we define 
$$R_t = \left\|\frac{\nabla f(x_t, z_{i_t})}{\sqrt{v_{t+1}+\epsilon}} - \frac{\nabla f(x_t', z_{i_t}')}{\sqrt{v'_{t+1}+\epsilon}}\right\|_2.$$
We consider two cases which depends on the realization of the selected index $i_t$.

\noindent\textbf{Case 1:} With probability $1-\frac{1}{n}$, we have $z_{i_t} = z'_{i_t}$. Thus, 
\begin{align*}
R_t &:= \left\|\frac{\nabla f(x_t, z_{i_t})}{\sqrt{v_{t+1}+\epsilon}} - \frac{\nabla f(x_t', z_{i_t})}{\sqrt{v'_{t+1}+\epsilon}}\right\|_2\\
& \le \left\|\frac{\nabla f(x_t, z_{i_t})}{\sqrt{v_{t+1}+\epsilon}} - \frac{\nabla f(x_t', z_{i_t})}{\sqrt{v_{t+1}+\epsilon}}\right\|_2
+ \left\|\frac{\nabla f(x_t'; z_{i_t})}{\sqrt{v_{t+1}+\epsilon}} - \frac{\nabla f(x_t'; z_{i_t})}{\sqrt{v'_{t+1}+\epsilon}}\right\|_2\\
& = I_t + J_t,
\end{align*}
where we denote
$$I_t:= \left\|\frac{\nabla f(x_t, z_{i_t})}{\sqrt{v_{t+1}+\epsilon}} - \frac{\nabla f(x'_t, z_{i_t})}{\sqrt{v_{t+1}+\epsilon}}\right\|_2,$$
$$J_t := \left\|\frac{\nabla f(x_t';z_{i_t})}{\sqrt{v_{t+1}+\epsilon}}-\frac{\nabla f(x_t';z_{i_t})}{\sqrt{v'_{t+1}+\epsilon}}\right\|_2.$$
We now derive bounds for both $I_t$ and $J_t$. We have 
\begin{align*}
I_t & = \left[ \sum_{j=1}^d\left(\frac{\nabla_j f(x_t,z_{i_t})}{\sqrt{v_{t+1,j}+\epsilon}} - \frac{\nabla_j f(x_t'; z_{i_t})}{\sqrt{v_{t+1,j}+\epsilon}}\right)^2\right]^{1/2}    \\
& = \left[ \sum_{j=1}^d\left(\frac{1}{v_{t+1,j}+\epsilon}\right)(\nabla_jf(x_t, z_{i_t})- \nabla_j f(x'_t; z_{i_t}))^2\right]^{1/2} \\
& \le  \left[ \sum_{j=1}^d\left(\frac{1}{\lambda_1+\epsilon}\right)(\nabla_jf(x_t, z_{i_t})- \nabla_j f(x'_t; z_{i_t}))^2\right]^{1/2} \\
& = \frac{1}{\sqrt{\lambda_1+\epsilon}}\|\nabla f(x_t, z_{i_t}) - \nabla f(x_t', z_{i_t})\|_2\\
&\le \frac{L}{\sqrt{\lambda_1+\epsilon}}\|x_t - x_t'\|_2,
\end{align*}
where the first and last  inequalities follows from assumptions (3) and (1) respectively.
\begin{align*}
J_t & = \left[\sum_{j=1}^d \left(\frac{\nabla_j f(x_t', z_{i_t})}{\sqrt{v_{t+1,j}+\epsilon}} - \frac{\nabla_j f(x'_t; z_{i_t})}{\sqrt{v'_{t+1,j}+\epsilon}}\right)^2\right]^{1/2} \\
& = \left[\sum_{i=1}^d\nabla_j f(x_t', z_{i_t})^2\left(\frac{1}{\sqrt{v_{t+1,i}+\epsilon}}- \frac{1}{\sqrt{v'_{t+1,j}+\epsilon}}\right)^2\right]^{1/2}\\
& \le  \left[\sum_{i=1}^d\mu^2\left(\frac{1}{\sqrt{v_{t+1,i}+\epsilon}}- \frac{1}{\sqrt{v'_{t+1,j}+\epsilon}}\right)^2\right]^{1/2}\\
& = \mu\left\|\frac{1}{\sqrt{v_{t+1}+\epsilon}}- \frac{1}{\sqrt{v'_{t+1}+\epsilon}}\right\|_2,
\end{align*}
where the  inequality is from assumption (1).

By Lemma~\ref{lemma4}, we deduce that  
\begin{align*}
R_t & \le  \frac{L}{\sqrt{\lambda_1+\epsilon}}\|x_t - x_t'\|_2 +  \mu\left\|\frac{1}{\sqrt{v_{t+1}+\epsilon}}- \frac{1}{\sqrt{v'_{t+1}+\epsilon}}\right\|_2\\
&\le \frac{L}{\sqrt{\lambda_1+\epsilon}}\|x_t - x_t'\|_2 + \frac{\mu}{2\sqrt{\lambda_1+\epsilon}(\lambda_1+\epsilon)}\|v_{t+1} - v'_{t+1}\|_2.
\end{align*}
Since in this case $i_t = i'_t$, we further have that 

\begin{align*}
 &\|v_{t+1} - v'_{t+1}\|_2 \\ & =  \|\beta_t v_t + (1-\beta_t)\nabla f(x_t, z_{i_t})^2  - \beta_t v'_t 
- (1-\beta_t)\nabla f(x'_t, z_{i_t})^2\|_2\\
& \le \beta_t\|v_t - v_t'\|_2 + (1-\beta_t)\|\nabla f(x_t, z_{i_t})^2 - \nabla f(x'_t, z_{i_t})^2\|_2 \\
& = \beta_t\|v_t - v_t'\|_2 + (1- \beta_t)\left[\sum_{j=1}^d\left(\nabla_jf(x_t, z_{i_t})^2 - \nabla_j f(x'_t, z_{i_t})^2\right)^2\right]^{1/2}\\
& = \beta_t\|v_t - v_t'\|_2  + (1-\beta_t)
\left[
    \sum_{j = 1}^d (\nabla_jf(x_t, z_{i_t})+  
    \nabla_j f(x'_t,z_{i_t}))^2(\nabla_jf(x_t, z_{i_t})-\nabla_j f(x'_t,z_{i_t}))^2
\right]^{1/2}
\\
& \le \beta_t\|v_t - v_t'\|_2 +  (1-\beta_t)2\mu\left[\sum_{j = 1}^d(\nabla_jf(x_t, z_{i_t})-\nabla_j f(x'_t,z_{i_t}))^2\right]^{1/2}\\
& = \beta_t\|v_t - v_t'\|_2 + 2\mu(1-\beta_t)\|\nabla f(x_t, z_{i_t}) - \nabla f(x'_t, z_{i_t})\|_2\\
& \le \beta_t\|v_t - v_t'\|_2 + 2\mu L(1-\beta_t)\|x_t - x_t'\|_2,
\end{align*}

where the second inequality follows from the fact that $(a+b)^2\le 2(a^2+b^2)$ and the assumption (1).
Thus, we deduce that 
\begin{align*}
R_t& \le \frac{L}{\sqrt{\lambda_1+\epsilon}}\|x_t - x_t'\|_2 
+\frac{\mu}{2\sqrt{\lambda_1+\epsilon}(\lambda_1+\epsilon)}(\beta_t\|v_t - v_t'\|_2 
+ 2\mu L(1-\beta_t)\|x_t - x_t'\|_2)    \\
& = \frac{L}{\sqrt{\lambda_1+\epsilon}}\|x_t - x_t'\|_2 + \frac{\mu\beta_t}{2\sqrt{\lambda_1+\epsilon}(\lambda_1+\epsilon)}\|v_t - v'_t\|_2 
+ \frac{\mu^2L(1-\beta_t)}{\sqrt{\lambda_1+\epsilon}(\lambda_1+\epsilon)}\|x_t - x_t'\|_2\\
& = \frac{L}{\sqrt{\lambda_1+\epsilon}}\left[1 + \frac{\mu^2 (1-\beta_t)}{\lambda_1+\epsilon}\right]\|x_t - x'_t\|_2  + \frac{\mu\beta_t}{2\sqrt{\lambda_1+\epsilon}(\lambda_1+\epsilon)}\|v_t - v_t'\|_2.
\end{align*}
Thus with probability $1-\frac{1}{n}$, we obtain
\begin{align*}
   {R_t} & {\le \frac{L}{\sqrt{\lambda_1+\epsilon}}\left[1 +  \frac{\mu^2 (1-\beta_t)}{\lambda_1+\epsilon}\right]\|x_t - x'_t\|_2  } \\ & 
   \hspace{2cm}+
    {\frac{\mu\beta_t}{2\sqrt{\lambda_1+\epsilon}(\lambda_1+\epsilon)}\|v_t - v_t'\|_2}.
\end{align*}


\textbf{Case 2:} With probability $\frac{1}{n}$, we have $z_{i_t} \not= z'_{i_t}$. Thus,
\begin{align*}
R_t & \le \left\|\frac{\nabla f(x_t, z_{i_t})}{\sqrt{v_{t+1}+\epsilon}}\right\|_2 + \left\|\frac{\nabla f(x'_t, z'_{i_t})}{\sqrt{v_{t+1}+\epsilon}}\right\|_2 \\
&\le \left[\sum_{j=1}^d\frac{\nabla_j f(x_t, z_{i_t})^2}{v_{t+1,j}+\epsilon}\right]^{1/2} + \left[\sum_{j=1}^d\frac{\nabla_j f(x'_t, z'_{i_t})^2}{v'_{t+1,j}+\epsilon}\right]^{1/2}\\
&\le \frac{[\sum_{j=1}^d\nabla_j f(x_t,z_{i_t})^2]^{1/2}}{\sqrt{\lambda_1 + \epsilon}} +  \frac{[\sum_{j=1}^d\nabla_j f(x'_t,z'_{i_t})^2]^{1/2}}{\sqrt{\lambda_1 + \epsilon}}\\
& = \frac{\|\nabla f(x_t, z_{i_t})\|_2}{\sqrt{\lambda_1 + \epsilon}} + \frac{\|\nabla f(x'_t, z'_{i_t})\|_2}{\sqrt{\lambda_1 + \epsilon}}\\
& \le \frac{2}{\sqrt{\lambda_1+\epsilon}}\mu,
\end{align*}
where the second inequality follows from assumption (1).

Thus with probability $\frac{1}{n}$, we have 
$${R_t \le \frac{2}{\sqrt{\lambda_1+\epsilon}}\mu}.$$
Thus, we obtain
\begin{align*}
    \delta_{t+1} & \le \delta_t  + \eta_t \left(1-\frac{1}{n}\right)\frac{L}{\sqrt{\lambda_1+\epsilon}}\left[1 + \frac{\mu^2 (1-\beta_t)}{\lambda_1+\epsilon}\right]\delta_t\\
    &\hspace{1cm} + \eta_t \left(1-\frac{1}{n}\right)\frac{\mu\beta_t}{2\sqrt{\lambda_1+\epsilon}(\lambda_1+\epsilon)}\sigma_t + \eta_t\frac{1}{n}\frac{2}{\sqrt{\lambda_1+\epsilon}}\mu.
\end{align*}
By taking the expectation conditioned over $\delta_{t_0} = 0$ of the above inequality, we get the conclusion of the theorem.
\end{proof}

\section{Proof of Theorem \ref{theorem3}} \label{appx:proof_theorem3}
\begin{proof}
For any $t > t_0$, we have that 
\begin{align*}
\|v_{t+1} - v'_{t+1}\|_2 &\le \|\beta_t v_t +(1-\beta_t)\nabla f(x_t; z_{i_t})^2  - \beta_t v'_t - (1-\beta_t)\nabla f(x'_t; z'_{i_t})^2\|_2 \\
& \le \beta_t\|v_t - v'_t\|_2 + (1-\beta_t)\|\nabla f(x_t; z_{i_t})^2 - \nabla f(x'_t; z'_{i_t})^2\|_2\\
& =  \beta_t\|v_t - v'_t\|_2 + (1-\beta_t)N_t,
\end{align*}
where we define 
$$N_t = \|\nabla f(x_t; z_{i_t})^2 - \nabla f(x'_t; z'_{i_t})^2\|_2.$$
We again consider two cases which depends on the realization of the selected index $i_t$.\\
\textbf{Case 1:} With probability $1-\frac{1}{n}$, we have  $z_{i_t} = z'_{i_t}$. Thus,
\begin{align*}
N_t & = \|\nabla f(x_t, z_{i_t})^2 - \nabla f(x'_t, z_{i_t})^2\|_2\\
& = \left[\sum_{j=1}^d\left(\nabla_jf(x_t, z_{i_t})^2 - \nabla_j f(x'_t, z_{i_t})^2\right)^2\right]^{1/2}\\
& =\Bigg [\sum_{j = 1}^d (\nabla_jf(x_t, z_{i_t})+ \nabla_j f(x'_t,z_{i_t}))^2(\nabla_jf(x_t, z_{i_t}) - \nabla_j f(x'_t,z_{i_t}))^2\Bigg]^{1/2}\\
& \le 2\mu\left[\sum_{j = 1}^d(\nabla_jf(x_t, z_{i_t})-\nabla_j f(x'_t,z_{i_t}))^2\right]^{1/2}\\
& =  2\mu\|\nabla f(x_t, z_{i_t}) - \nabla f(x'_t, z_{i_t})\|_2\\
& \le 2L\mu\|x_t - x_t'\|_2,
\end{align*}
where the first inequality follows from the fact that $(a+b)^2\le 2(a^2+b^2)$ and the assumption (1).

Thus with probability $1-\frac{1}{n}$, we have that  
$$N_t \le 2L\mu\delta_t.$$

\textbf{Case 2:} With probability $\frac{1}{n}$, we have $z_{i_t}\not= z'_{i_t}$. Thus
\begin{align*}
N_t & \le \|\nabla f(x_t, z_{i_t})^2\|_2 + \|\nabla f(x'_t, z'_{i_t})^2\|_2\\
& = \left[\sum_{j = 1}^d\nabla_j f(x_t, z_{i_t})^4\right]^{1/2} +  \left[\sum_{j = 1}^d\nabla_j f(x'_t, z'_{i_t})^4\right]^{1/2}\\
& \le \sum_{j =1}^d\nabla_jf(x_t, z_{i_t})^2 +  \sum_{j =1}^d\nabla_jf(x'_t, z'_{i_t})^2\\
& = \|\nabla f(x_t,z_{i_t})\|_2^2 +  \|\nabla f(x'_t,z'_{i_t})\|_2^2\\
& \le 2\mu^2,
\end{align*}
where the first inequality follows from the fact that for $a_1,\ldots,a_d \ge 0, (a_1+\ldots + a_d)^2 \ge a_1^2+\ldots+a_d^2$.

Thus with probability $\frac{1}{n}$, we have that 
$$N_t \le 2\mu^2.$$
By combining the above two cases, we obtain
\begin{align*}
\sigma_{t+1} & \le \beta_t \sigma_t +  (1- \beta_t)\left(1-\frac{1}{n}\right)2L\mu\delta_t + \frac{1}{n} (1-\beta_t)2\mu^2.   
\end{align*}
Taking the expectation conditioned over $\delta_{t_0} = 0$ of the above inequality, we get the conclusion of the theorem.
\end{proof}

\section{Proof of Lemma~\ref{normest}} \label{appx:proof_lemma3}

\begin{proof}
By substituting $\eta_t = \frac{c}{t}$ and $\beta_t = 1-\alpha_t$ into the matrix $A_t$, we have that 
\vspace{-2mm}
$$A_t = \begin{bmatrix}
1+ \frac{c}{t}U_t & \frac{c}{t} V_t \\
\alpha_t W & 1 - \alpha_t
\end{bmatrix}$$
\vspace{-2mm}

For any $v=(v_1, v_2)\in \mathbb{R}^2$ such that $v_1^2 + v_2^2 = 1$, we have that 
\begin{align*}
\|A_tv\|_2^2 
& = \begin{Vmatrix} v_1 + \frac{c}{t}U_t v_1 + \frac{c}{t} V_t v_2 \\ \alpha_tWv_1 + v_2 - \alpha_t v_2  \end{Vmatrix}^2_2   \\
& = \left(v_1 + \frac{c}{t}U_t v_1 + \frac{c}{t} V_t v_2\right)^2 + \left(\alpha_tWv_1 + v_2 - \alpha_t v_2\right)^2 \\
& = v_1^2 + 2 \frac{cv_1}{t}(U_tv_1 + V_tv_2) + \frac{c^2(U_tv_1 + V_tv_2)^2}{t^2}\\
&\hspace{2cm} + v_2^2 + 2v_2\alpha_t(Wv_1 - v_2) +\alpha_t^2(Wv_1 - v_2)^2\\
& \le 1 + \frac{2c}{t}\sqrt{U_t^2+V_t^2} + c^2\frac{U_t^2+V_t^2}{t^2} + 2\alpha_t\sqrt{W_t^2+1} + \alpha_t^2(W_t^2 + 1)\\
& \le 1 + \frac{2c}{t}\sqrt{U^2+V^2} + c^2\frac{U^2+V^2}{t^2} + 2\alpha_t\sqrt{W^2+1} + \alpha_t^2(W^2 + 1)\\
& \le 1 + \frac{2c}{t}\sqrt{U^2+V^2} + c^2\frac{U^2+V^2}{t^2} + \alpha_t\left(2\sqrt{W^2+1} + (W^2 + 1)\right)\\
& \le exp\Bigg[\frac{2c}{t}\sqrt{U^2+V^2} + c^2\frac{U^2+V^2}{t^2} +  \alpha_t\left(2\sqrt{W^2+1} + (W^2 + 1)\right)\Bigg].\\
\end{align*}

\vspace{-5mm}
Where we have used the fact that $|v_1|, |v_2| \le 1$, the Cauchy–Schwarz inequality and $1+x \le e^x$ for all $x>0$.

Thus we deduce that
\vspace{-1mm}
\begin{align*}
\|A_t\|_2 &\le 
\biggl\{
exp\Big(\frac{2c}{t}\sqrt{U^2+V^2} + c^2\frac{U^2+V^2}{t^2} + \alpha_t\left(2\sqrt{W^2+1} + (W^2 + 1)\right)\Big)
\biggl\}^{\!1/2}\\
& = exp\Bigg(\frac{c}{t}\sqrt{U^2+V^2} + c^2\frac{U^2+V^2}{2t^2}+ \alpha_t\left(\sqrt{W^2+1} + \frac{1}{2}(W^2 + 1)\right)\Bigg).
\end{align*}

Denote 
$D_1 = {U^2 + V^2}, D_2 = \sqrt{W^2+1} + \frac{1}{2}(W^2 + 1)$, 
we can then rewrite 
$$\|A_t\|_2 \le exp\left(\frac{c}{t}\sqrt{D_1} + \frac{c^2}{2t^2}D_1 + \alpha_t D_2 \right).$$
Thus, we obtain the desired conclusion of the lemma.
\end{proof}

\section{Proof of Theorem~\ref{theorem4}} \label{appx:proof_theorem4}

\begin{proof}
Denote 
$$H_t: = \left\|\begin{bmatrix} \Delta_t \\ \Sigma_t\end{bmatrix}\right\|_2$$
By taking the norm on both sides of the relation (\ref{maineq}), we obtain
\begin{align*}
H_{t+1} & \le \|A_t\|H_t + \frac{1}{n}\sqrt{\frac{c^2Y^2}{t^2} + \alpha_t^2 Z^2} \\
& \le  exp\left(\frac{c}{t}\sqrt{D_1} + \frac{c^2}{t^2}\frac{D_1}{2} + \alpha_t D_2 \right) H_t + \frac{1}{n} \left(\frac{c}{t}Y + \alpha_tZ\right).
\end{align*}
By expanding the above inequality, for any $T > t_0$ we have 
\begin{align*}
H_T & \le \sum_{t= t_0 +1}^T \left[\prod_{k = t+1}^{T}exp\left(\frac{c}{k}\sqrt{D_1} + \frac{c^2}{k^2}\frac{D_1}{2} + \alpha_k D_2 \right)\right]  \times \frac{1}{n}\left(\frac{c}{t}Y + \alpha_tZ\right) \\
& = \frac{c}{n}Y\sum_{t= t_0 +1}^T \left[\prod_{k = t+1}^{T}exp\left(\frac{c}{k}\sqrt{D_1} + \frac{c^2}{k^2}\frac{D_1}{2} + \alpha_k D_2 \right)\right]\frac{1}{t} \\ 
&\hspace{2cm} + \frac{1}{n}Z\sum_{t= t_0 +1}^T \left[\prod_{k = t+1}^{T}exp\left(\frac{c}{k}\sqrt{D_1} + \frac{c^2}{k^2}\frac{D_1}{2} + \alpha_k D_2 \right)\right]\alpha_t\\ 
& \le \frac{c}{n}Y\sum_{t=t_0 +1}^{T}exp\Bigg(c\sqrt{D_1}\sum_{k=t+1}^T\frac{1}{k}  + \frac{c^2D_1}{2}\sum_{k=t+1}^T\frac{1}{k^2} + D_2\sum_{k= t+1}^T\alpha_t\Bigg)\frac{1}{t}\\
&\hspace{2cm} + \frac{1}{n}Z\sum_{t = t_0 +1}^Texp\Bigg(c\sqrt{D_1}\sum_{k=t+1}^T\frac{1}{k} + \frac{c^2D_1}{2}\sum_{k=t+1}^T\frac{1}{k^2} + D_2\sum_{k= t+1}^T\alpha_t\Bigg)\alpha_t\\
& = \frac{c}{n}Y\sum_{t = t_0+1}^T exp\left(c\sqrt{D_1}\log\Bigg(\frac{T}{t}\right)+ \gamma\frac{c^2D_1}{2} +  D_2\sum_{k= t+1}^T\alpha_t\Bigg)\frac{1}{t}  \\
&\hspace{2cm} + \frac{1}{n}Z\sum_{t = t_0 +1}^Texp\left(c\sqrt{D_1}\log\Bigg(\frac{T}{t}\right)  + \gamma\frac{c^2D_1}{2} +  D_2\sum_{k= t+1}^T\alpha_t\Bigg)\alpha_t\\
&  = \frac{c}{n}Y exp\left(\gamma\frac{c^2D_1}{2}\right) \times \sum_{t= t_0+1}^Texp\left( D_2\sum_{k= t+1}^T\alpha_t\right) \left(\frac{T}{t}\right)^{c\sqrt{D_1}}\frac{1}{t}\\&\hspace{2cm} + \frac{1}{n}Z exp\left(\gamma\frac{c^2D_1}{2}\right) \times \sum_{t = t_0+1}^Texp\left( D_2\sum_{k= t+1}^T\alpha_t\right)\left(\frac{T}{t}\right)^{c\sqrt{D_1}}\alpha_t
\end{align*}
where we define 
$$\gamma = \sum_{k=1}^{\infty}\frac{1}{k^2}.$$
Thus, we obtain
\begin{align*}
H_t \le \frac{1}{n} &exp\left(\gamma\frac{c^2D_1}{2}\right)\times 
\sum_{t=t_0+1}^Texp\left( D_2\sum_{k= t+1}^T\alpha_t\right)\left(\frac{T}{t}\right)^{c\sqrt{D_1}} \times 
\left(\frac{1}{t}cY+ \alpha_t Z\right)
\end{align*}
which the conclusion of the theorem.
\end{proof}

\section{Proof of Corollary~\ref{corollary1}} \label{appx:proof_corollary1}
\begin{proof}
By the assumption, we have 
\begin{align*}
H_t & \le \frac{1}{n} exp\left(\gamma\frac{c^2D_1}{2}\right) 
\times \sum_{t=t_0+1}^Texp\left( D_2\alpha\sum_{k= t+1}^T\frac{1}{k}\right)\left(\frac{T}{t}\right)^{c\sqrt{D_1}}\left(\frac{1}{t}cY+ \frac{1}{t} \alpha Z\right)  \\
& = \frac{1}{n} exp\left(\gamma\frac{c^2D_1}{2}\right)\sum_{t=t_0+1}^T\left(\frac{T}{t}\right)^{\alpha D_2}\left(\frac{T}{t}\right)^{c\sqrt{D_1}}\left(\frac{1}{t}cY+ \frac{1}{t} \alpha Z\right) \\
& =  \frac{1}{n}(cY + \alpha Z) exp\left(\gamma\frac{c^2D_1}{2}\right)\sum_{t=t_0+1}^T\left(\frac{T}{t}\right)^{c\sqrt{D_1}+\alpha D_2}\frac{1}{t}\\
& =  \frac{1}{n}(cY + \alpha Z) exp\left(\gamma\frac{c^2D_1}{2}\right) T^{c\sqrt{D_1}+\alpha D_2}\sum_{t=t_0+1}^T t^{-c\sqrt{D_1}-\alpha D_2 - 1}\\
& \le \frac{1}{n}(cY + \alpha Z) exp\left(\gamma\frac{c^2D_1}{2}\right) T^{c\sqrt{D_1}+\alpha D_2} \int_{t_0}^{T}x^{-c\sqrt{D_1}-\alpha D_2 - 1}dx\\
& \le \frac{1}{n}(cY + \alpha Z) exp\left(\gamma\frac{c^2D_1}{2}\right)\frac{T^{c\sqrt{D_1}+\alpha D_2} }{c\sqrt{D_1}+\alpha D_2}\frac{1}{t_0^{c\sqrt{D_1}+\alpha D_2}}.
\end{align*}
\end{proof}

\section{Proof of Corollary~\ref{corollary2}} \label{appx:proof_corollary2}
\begin{proof}
By Lemma ~\ref{lem:stability}, we have that 
\begin{align*}
R(f(.,z))& := \mathbb{E}|f(x_T;z) - f(x'_T;z)| \\
& \le  2M\frac{t_0}{n} + \mu\Delta_T \\
& \le   2M\frac{t_0}{n} + \frac{1}{n}(cY + \alpha Z) 
\times
exp\left(\gamma\frac{c^2D_1}{2}\right)\frac{T^{c\sqrt{D_1}+\alpha D_2} }{c\sqrt{D_1}+\alpha D_2}\frac{\mu}{t_0^{c\sqrt{D_1}+\alpha D_2}} \\
\end{align*}
Denote 
\begin{align*}
A & := c\sqrt{D_1}+\alpha D_2\\
B&:= (cY + \alpha Z) \times exp\left(\gamma\frac{c^2D_1}{2}\right)\frac{\mu}{c\sqrt{D_1}+\alpha D_2}
\end{align*}
We then can rewrite
$$R(f(.,z)) \le \frac{1}{n}\left[2Mt_0 + B\left(\frac{T}{t_0}\right)^A\right]$$
We want to choose $t_0$ to minimize the right hand-side. We consider the function
$$g: x \mapsto 2Mx + B\left(\frac{T}{x}\right)^A$$
We have 
$$g'(x) = 2M - AB \frac{T^A}{x^{A+1}}$$
Thus the function is approximately minimize when 
$$t_0 = \left(\frac{AB}{2M}\right)^{\frac{1}{A+1}}T^{\frac{A}{A+1}}$$
Substitute this $t_0$ into the stability bound, we obtain
$$R(f(.,z)) \le \frac{1}{n}\left[2M\left(\frac{AB}{2M}\right)^{\frac{1}{A+1}} + \frac{B}{\left(\frac{AB}{2M}\right)^{\frac{A^2}{A+1}}}\right]T^{\frac{A}{A+1}}.$$
\end{proof}

\section{Supportive lemmas} 
\begin{lemma} \label{lemma4}
Let $f(.,z)$ be $\mu$-Lipschitz and $L$-smooth for all $z$. Assume that the assumption (3) is satisfied. For any $t\in \mathbb{N}$, we then have 
\begin{align*}
    K &:= \left\|\frac{1}{\sqrt{v_{t+1}+\epsilon}} - \frac{1}{\sqrt{v'_{t+1}+\epsilon}}\right\|_2  \le \frac{1}{2\sqrt{\lambda_1+\epsilon}(\lambda_1+\epsilon)^2}\|v_{t+1} - v'_{t+1}\|_2
\end{align*}
\end{lemma}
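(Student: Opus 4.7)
\textbf{Proof plan for Lemma~\ref{lemma4}.} The plan is to reduce the $\ell_2$ bound to a coordinatewise inequality, since the $j$-th coordinate of the vector inside $K$ depends only on $v_{t+1,j}$ and $v'_{t+1,j}$. So I would first write
\[
K^2 \;=\; \sum_{j=1}^{d}\left(\frac{1}{\sqrt{v_{t+1,j}+\epsilon}}-\frac{1}{\sqrt{v'_{t+1,j}+\epsilon}}\right)^{\!2},
\]
and it suffices to prove, for each coordinate $j$, a bound of the form
\[
\left|\frac{1}{\sqrt{v_{t+1,j}+\epsilon}}-\frac{1}{\sqrt{v'_{t+1,j}+\epsilon}}\right| \;\le\; C \cdot \bigl|v_{t+1,j}-v'_{t+1,j}\bigr|
\]
with $C$ a constant independent of $j$; squaring, summing, and taking a square root then yields the desired inequality on $K$.

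The coordinatewise bound follows from a standard rationalization identity: for any $a,b>0$,
\[
\frac{1}{\sqrt{a}}-\frac{1}{\sqrt{b}} \;=\; \frac{\sqrt{b}-\sqrt{a}}{\sqrt{ab}} \;=\; \frac{b-a}{\sqrt{ab}\,(\sqrt{a}+\sqrt{b})}.
\]
Applying this with $a=v_{t+1,j}+\epsilon$ and $b=v'_{t+1,j}+\epsilon$, and invoking assumption (3) to conclude $a,b\ge \lambda_1+\epsilon$, I would lower bound the denominator via $\sqrt{ab}\ge \lambda_1+\epsilon$ and $\sqrt{a}+\sqrt{b}\ge 2\sqrt{\lambda_1+\epsilon}$, yielding
\[
\left|\frac{1}{\sqrt{a}}-\frac{1}{\sqrt{b}}\right| \;\le\; \frac{|a-b|}{2\sqrt{\lambda_1+\epsilon}\,(\lambda_1+\epsilon)}.
\]
(Equivalently, one can invoke the mean value theorem on $g(x)=1/\sqrt{x+\epsilon}$, whose derivative is bounded by $\tfrac{1}{2(\lambda_1+\epsilon)^{3/2}}$ on $[\lambda_1,\infty)$; both routes give the same constant.)

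Substituting back and using $\sum_j(v_{t+1,j}-v'_{t+1,j})^2 = \|v_{t+1}-v'_{t+1}\|_2^2$ completes the argument. There is no real obstacle here; the proof is a one-line algebraic identity combined with the uniform lower bound from assumption (3). The only subtlety worth flagging is that the constant one actually obtains this way is $\tfrac{1}{2\sqrt{\lambda_1+\epsilon}(\lambda_1+\epsilon)}$, which matches the constant used in the proof of Theorem~\ref{theorem2}; the extra power of $(\lambda_1+\epsilon)$ appearing in the statement above therefore looks like a typo, and the natural bound delivered by this proof is exactly the one needed downstream.
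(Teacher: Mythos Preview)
Your proof is correct and follows essentially the same coordinatewise rationalization argument as the paper: write $K$ as a sum over coordinates, apply the identity $\tfrac{1}{\sqrt{a}}-\tfrac{1}{\sqrt{b}}=\tfrac{b-a}{\sqrt{ab}(\sqrt{a}+\sqrt{b})}$, and lower-bound the denominator via assumption~(3). You are also right that the constant actually obtained---and the one the paper itself derives and uses in Theorem~\ref{theorem2}---is $\tfrac{1}{2\sqrt{\lambda_1+\epsilon}(\lambda_1+\epsilon)}$, so the extra factor of $(\lambda_1+\epsilon)$ in the lemma statement is a typo.
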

\begin{proof}
We have 
\begin{align*}
K & = \left[\sum_{j = 1}^d\left(\frac{1}{\sqrt{v_{t+1,j}+\epsilon}} - \frac{1}{\sqrt{v'_{t+1,j}+\epsilon}}\right)^2\right]^{1/2}\\
& = \left[\sum_{j = 1}^d\left(\frac{\sqrt{v_{t+1,j}+\epsilon} - \sqrt{v'_{t+1,j}+\epsilon}}{\sqrt{v_{t+1,j}+\epsilon}\sqrt{v'_{t+1,j}+\epsilon}}\right)^2\right]^{1/2}\\
& \le \left[\sum_{j=1}^d\frac{\left(\sqrt{v_{t+1,j}+\epsilon} - \sqrt{v'_{t+1,j}+\epsilon}\right)^2}{(\lambda_1+\epsilon)(\lambda_1+\epsilon)}\right]^{1/2} \\
& = \frac{1}{\lambda_1+\epsilon}\left[\sum_{i=1}^d\left(\sqrt{v_{t+1,j}+\epsilon} - \sqrt{v'_{t+1,j}+\epsilon}\right)^2\right]\\
& =  \frac{1}{\lambda_1+\epsilon} \left[\sum_{j = 1}^d\frac{(v_{t+1,j} - v'_{t+1, j})^2}{\left(\sqrt{v_{t+1,j}+\epsilon} + \sqrt{v'_{t+1,j}+\epsilon}\right)^2}\right]^{1/2}\\
& \le \frac{1}{2\sqrt{\lambda_1+\epsilon}(\lambda_1+\epsilon)}\|v_{t+1} - v'_{t+1}\|_2.
\end{align*}
\end{proof}

\section{Additional Training Parameters for Synthetic Data} \label{appx:synthetic}

\begin{figure*}[ht!]
\begin{subfigure}{.24\textwidth}
  \centering
  \includegraphics[width=0.89\linewidth]{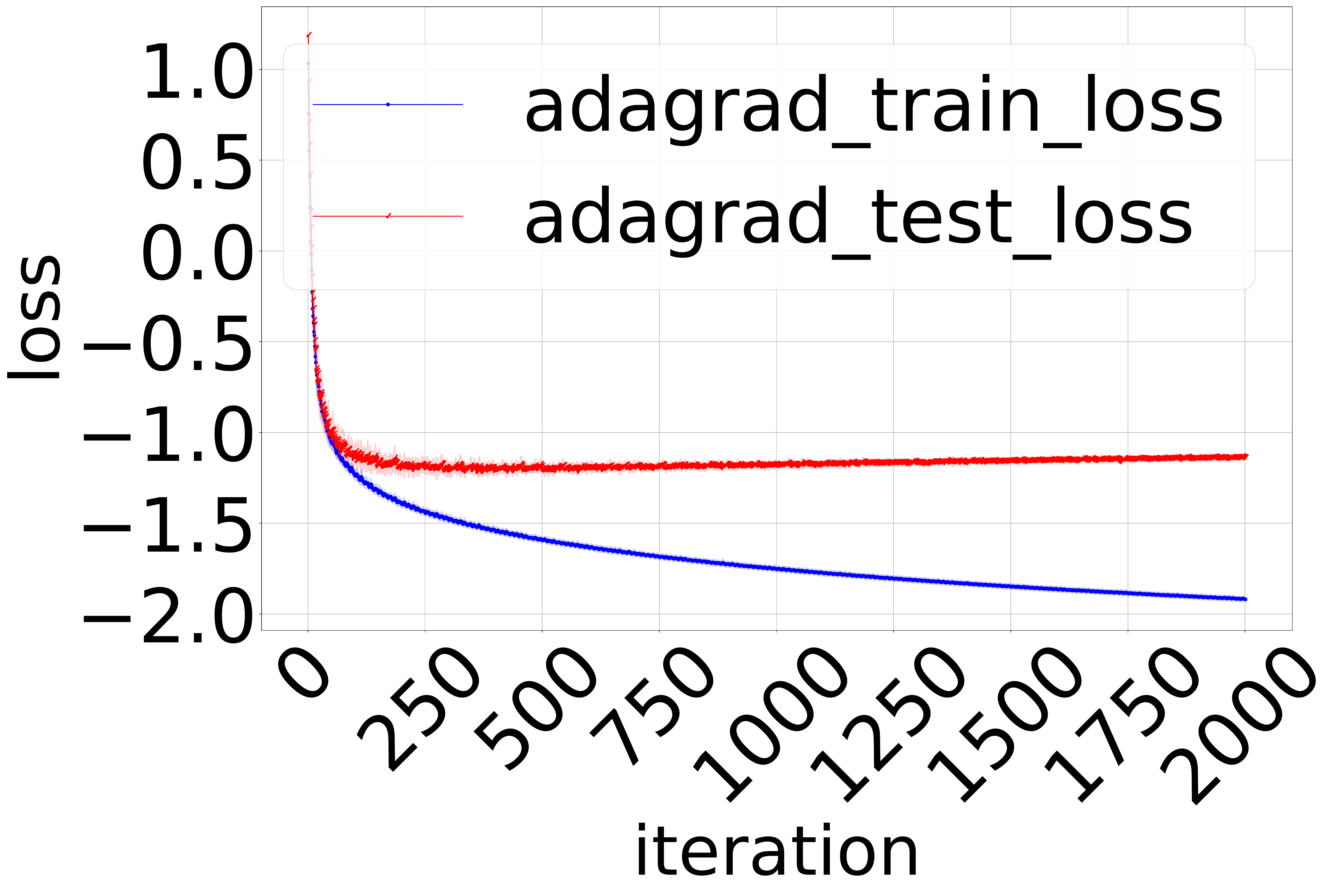}  
  \label{fig:sub-first}
\end{subfigure}
\begin{subfigure}{.24\textwidth}
  \centering
  \includegraphics[width=0.89\linewidth]{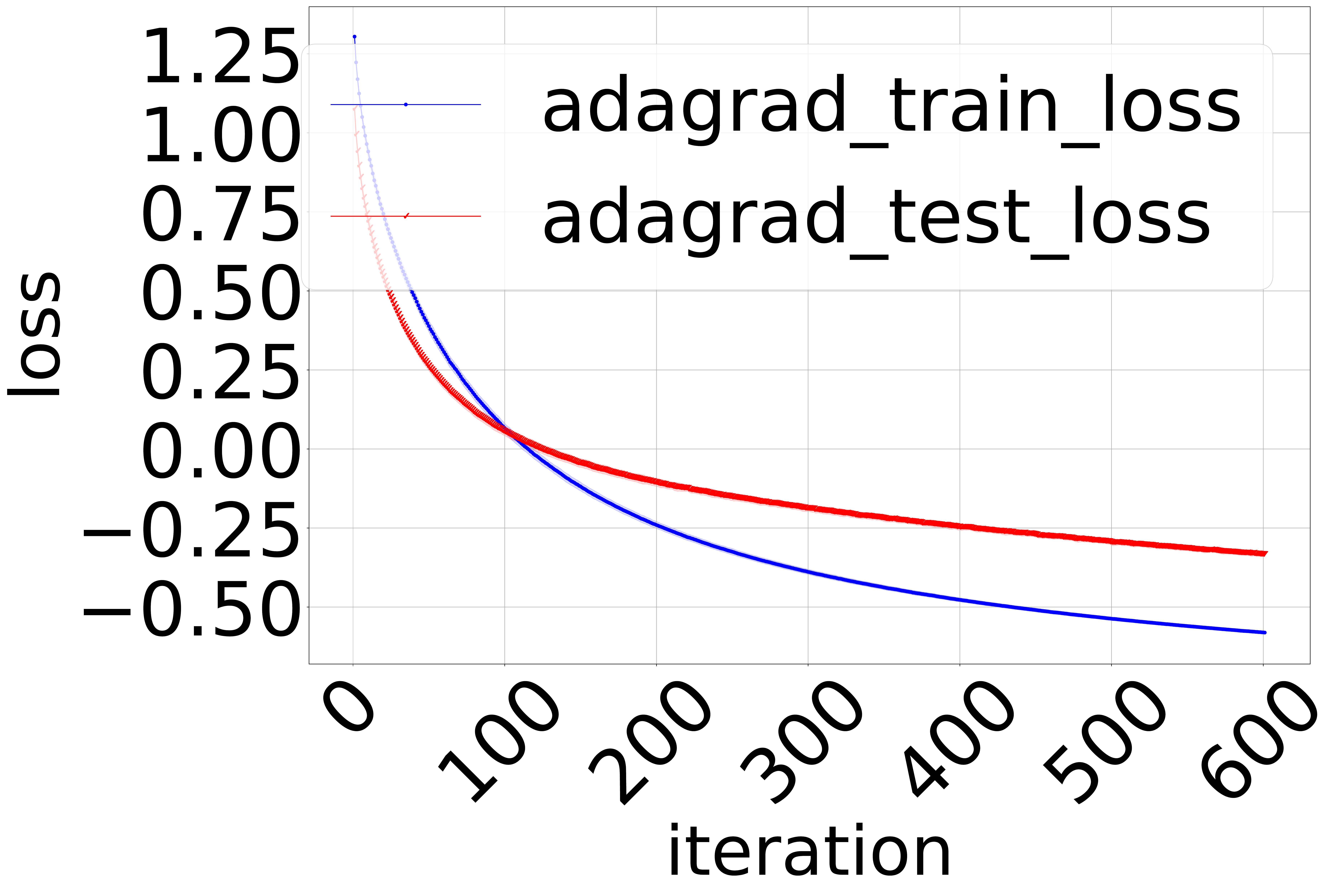}  
  \label{fig:sub-second}
\end{subfigure}
\begin{subfigure}{.24\textwidth}
  \centering
  \includegraphics[width=0.89\linewidth]{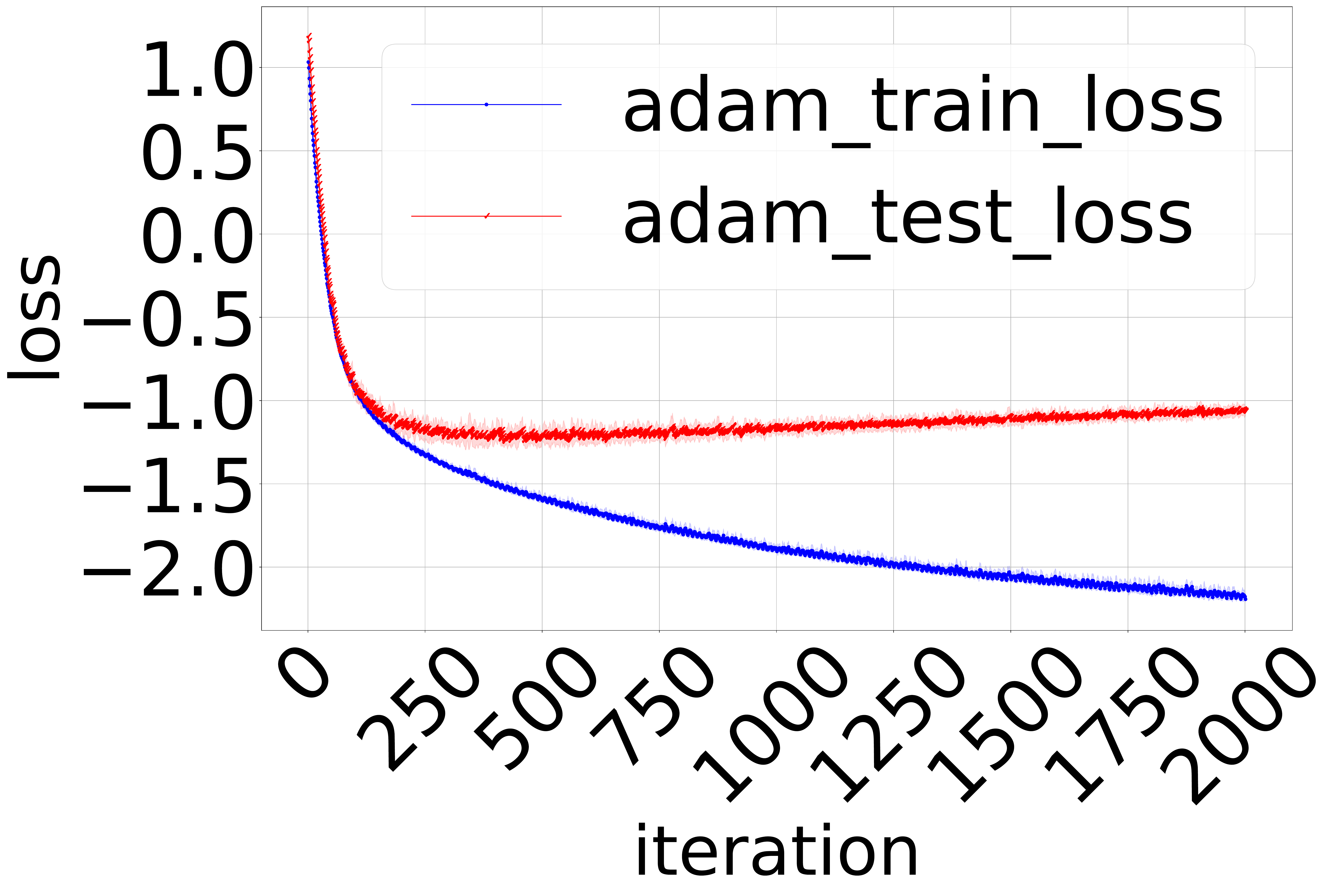}  
  \label{fig:sub-first}
\end{subfigure}
\begin{subfigure}{.24\textwidth}
  \centering
  \includegraphics[width=0.89\linewidth]{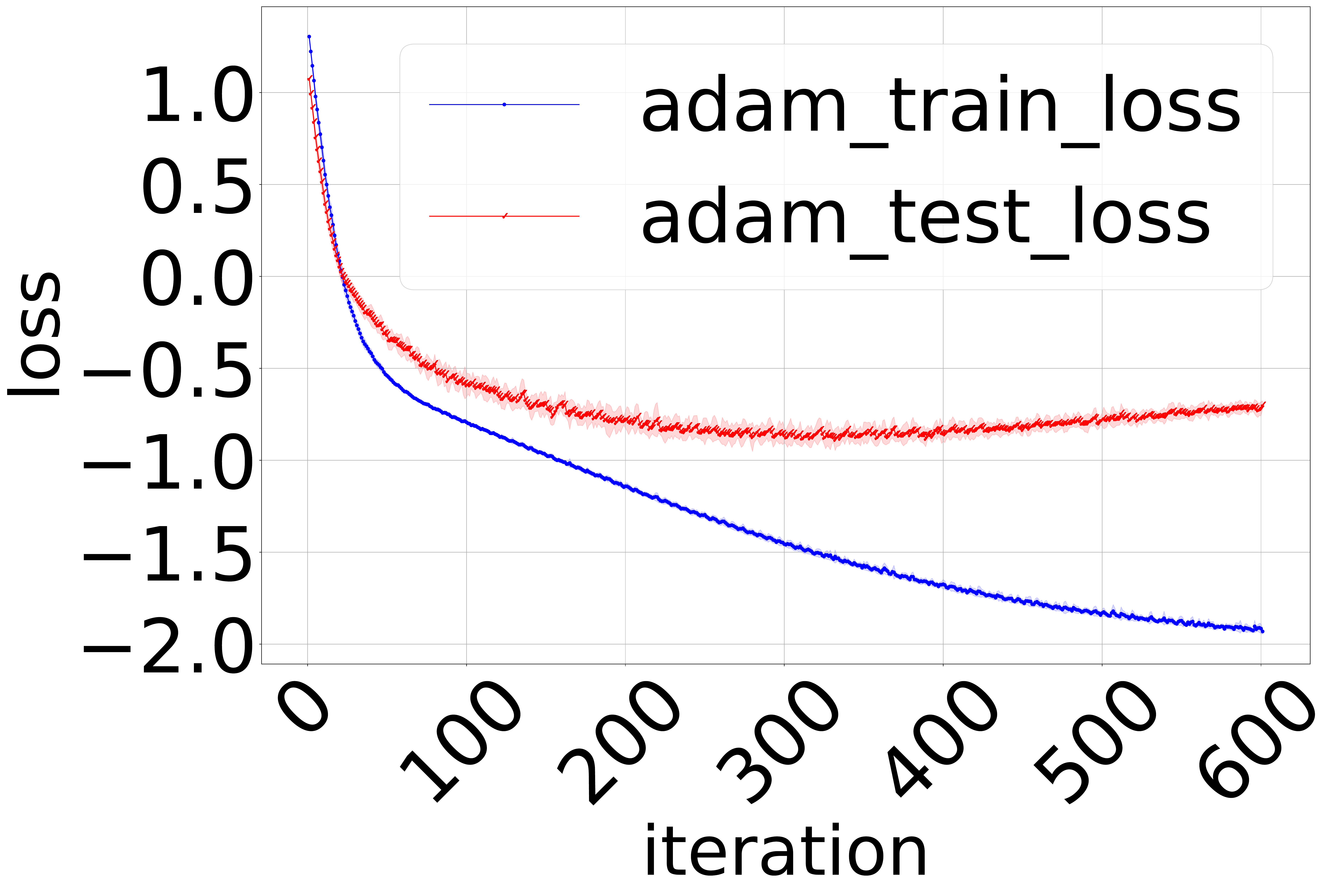}  
  \label{fig:sub-second}
\end{subfigure}
\caption{Train loss and test loss of CLS and REG tasks for Adagrad (left) and Adam (right).} 
\label{fig:adagrad_adam_loss}
\end{figure*}

For CLS training, we use a shallow neural network with one hidden fully-connected layer, which has 1024 neurons and an output of 3 classes. For the last layer, we use cross-entropy loss. 
For optimizser, we use Adagrad with learning rate  0.001. For comparison, we also run Adam with learning rate $0.0001$ and $\beta_1 = 0, \beta_2 = 0.999$. A batch size of 3 is used in this task.

Compared to CLS task, we only use 128 neurons for the hidden layer and MSE loss and batch size of 5 in REG task. In this case, we use Adam with learning rate 0.001 and $\beta_1 = 0, \beta_2 = 0.999$.

Finally, for each experiment, we run 20 trials and then calculate the mean and standard deviation before plotting the results.

\section{Additional Training Parameters for Real Data} \label{appx:real}

For the Cifar10 classification task, we train the model using 1 GPU with batch size 32. Similar to the synthetic setting, For each setting, we run 20 trials and for each trial, we run 60 epochs to calculate means and standard deviations. In terms of weight initialization for convolution and fully-connected layers of VGG11, we use He initialization method (``fan-out'' mode for convolution). For more details, please see the code attached to this supplemental material.

\end{document}